\documentclass{article}

\usepackage{ifthen}
\usepackage{fancyvrb}
\usepackage[dvipsnames]{xcolor}
\usepackage[most]{tcolorbox}
\usepackage{microtype}
\usepackage{graphicx}
\usepackage{subcaption}
\usepackage{booktabs}
\usepackage{longtable}
\usepackage{hyperref}
\usepackage{enumitem}
\usepackage{minted}
\usepackage{comment}

\definecolor{sectioncolor}{HTML}{19519a} 
\definecolor{lightgreen}{HTML}{90ee90}
\definecolor{lightorange}{HTML}{ffd580}
\definecolor{lightblue}{HTML}{ADD8E6}
\definecolor{darkcyan}{rgb}{0.0, 0.55, 0.55}
\definecolor{theoremcolor}{HTML}{dae2e6}
\definecolor{defcolor}{HTML}{D5EDF5}
\definecolor{deftitlecolor}{HTML}{912e32}
\definecolor{darkblue}{HTML}{00007F}
\definecolor{linkcolor}{HTML}{6B767D}
\hypersetup{colorlinks=true,linkcolor=linkcolor,urlcolor=linkcolor,citecolor=linkcolor}

\definecolor{light-gray}{HTML}{F5F5F5}
\setminted{breaklines,bgcolor=light-gray,fontsize=\small}
\setmintedinline{bgcolor={},fontsize=\normalsize}

\newtcolorbox{roundnote}[1][]{
  enhanced,
  breakable,
  colback=sectioncolor!5,
  colframe=sectioncolor,
  boxrule=0.5pt,
  arc=3mm,
  left=6pt,right=6pt,top=6pt,bottom=6pt,
  before upper=\setlength{\parskip}{\medskipamount},
  #1
}

\newboolean{isInternal}
\setboolean{isInternal}{false}

\newboolean{isOnBorisLaptop}
\IfFileExists{~/boris_machine_marker}{
  \setboolean{isOnBorisLaptop}{true}
}{%
  \setboolean{isOnBorisLaptop}{false}
}

\ifthenelse{\boolean{isInternal}}{

  \newcommand{\internalComment}[1]{\textbf{\color{red}[}{\footnotesize#1}\textbf{\color{red}]}}
}{
  \newcommand{\internalComment}[1]{}
}

\usepackage{mathstyle}

\usepackage[numbers]{natbib}
\DeclareMathOperator{\trace}{tr}

\ExplSyntaxOn
\DeclareExpandableDocumentCommand{\IfNotEmptyT}{mm}%
{%
  \tl_if_empty:nTF{#1}{}{#2}%
}
\ExplSyntaxOff

\definecolor{mplblue}{HTML}{1f77b4}
\definecolor{mplorange}{HTML}{ff7f0e}
\definecolor{mplgreen}{HTML}{2ca02c}
\definecolor{mplred}{HTML}{d62728}
\definecolor{mplpurple}{HTML}{9467bd}
\definecolor{mplbrown}{HTML}{8c564b}
\definecolor{mplpink}{HTML}{e377c2}
\definecolor{mplgray}{HTML}{7f7f7f}
\definecolor{mplolive}{HTML}{bcbd22}
\definecolor{mplcyan}{HTML}{17becf}

\newcommand{\vg}{\boldsymbol{g}}

\newcommand{\vF}{\boldsymbol{F}}

\newcommand{\btheta}{\boldsymbol{\theta}}

\newcommand{\vtheta}{\boldsymbol{\theta}}

\newcommand{\mSigma}{\boldsymbol{\Sigma}}

\newcommand\MTkillspecial[1]{
  \bgroup
  \catcode`\&=9
  \let\\\relax%
  \scantokens{#1}%
  \egroup
}
\newcommand{\DeclareCustomDelim}[3]{
  \DeclarePairedDelimiter{#1}{#2}{#3}
  \reDeclarePairedDelimiterInnerWrapper{#1}{star}{
    \mathopen{##1\vphantom{\MTkillspecial{##2}}\kern-\nulldelimiterspace\right.}
  ##2
  \mathclose{\left.\kern-\nulldelimiterspace\vphantom{\MTkillspecial{##2}}##3}
  }
}

\DeclareCustomDelim{\prn}{\lparen}{\rparen}
\DeclareCustomDelim{\crl}{\{}{\}}
\DeclareCustomDelim{\brk}{[}{]}
\DeclareCustomDelim{\norm}{\|}{\|}
\DeclareCustomDelim{\abs}{|}{|}

\NewDocumentCommand{\comp}{o m m}{{\color{blue} #1[} #2 {\color{blue}#1]_{#3}}}

\DeclarePairedDelimiterXPP\Prob[1]{\Problet}\{\}{}{
  
  #1}
\DeclarePairedDelimiterXPP\E[1]{\Elet}[]{}{
  
  #1}
\DeclarePairedDelimiterXPP\Epi[1]{\Elet_{\pi}}[]{}{
  
  #1}

\newcounter{notationcnt}


\newcommand{\protectedLink}[2]{
  {
    \hypersetup{hidelinks}
    \protect\hyperref[#1]{#2}
  }
}

\newcommand*\makeAlph[1]{\symbol{\numexpr96+#1}}
\NewDocumentCommand{\labrel}{m o }{%
  \IfNoValueTF{#2}{(\makeAlph{#1})}{\stackrel{\mathrm{(\makeAlph{#1})}}{#2}}%
}

\newcommand{\subarrsymbol}{:}
\newcommand{\subarr}{\nolinebreak\mathinner{\subarrsymbol}\nolinebreak}
\NewDocumentCommand{\range}{omm}{\brk[#1]{#2 \subarr #3}}

\newcommand{\Problet}{\ensuremath\mathbb{P}}
\newcommand{\Elet}{\ensuremath\mathbb{E}}

\newcommand{\Eletpi}{\ensuremath\Elet_{\pi}}

\DeclareMathOperator{\sign}{sign}

\newcommand{\trans}{^{\mkern-1.5mu\mathsf{T}}}  

\NewDocumentCommand{\gbar}{m m}{\bar{\boldsymbol{g}}_{#1}\IfNotEmptyT{#2}{^{(#2)}}}%

\NewDocumentCommand{\hess}{m m}{\partial_{#1} g_{#2}}
\NewDocumentCommand{\noise}{o m}{\partial_{#1}d_{#2}}
\NewDocumentCommand{\fb}{m}{\text{FB}_{#1}}
\NewDocumentCommand{\fbneps}{m}{\text{FB}^{(n,\epsilon)}_{#1}}
\NewDocumentCommand{\fbinfeps}{m}{\text{FB}^{(\infty,\epsilon)}_{#1}}
\NewDocumentCommand{\mbnone}{m}{\text{MBN}_{1, #1}}
\NewDocumentCommand{\mbnoneneps}{m}{\text{MBN}^{(n,\epsilon)}_{1,#1}}
\NewDocumentCommand{\mbnoneinfeps}{m}{\text{MBN}^{(\infty,\epsilon)}_{1,#1}}
\NewDocumentCommand{\mbntwo}{m}{\text{MBN}_{2, #1}}
\NewDocumentCommand{\mbntwoneps}{m}{\text{MBN}^{(n,\epsilon)}_{2,#1}}
\NewDocumentCommand{\mbntwoinfeps}{m}{\text{MBN}^{(\infty,\epsilon)}_{2,#1}}
\NewDocumentCommand{\mbnthree}{m}{\text{MBN}_{3, #1}}
\NewDocumentCommand{\mbnthreeneps}{m}{\text{MBN}^{(n,\epsilon)}_{3,#1}}
\NewDocumentCommand{\mbnthreeinfeps}{m}{\text{MBN}^{(\infty,\epsilon)}_{3,#1}}
\NewDocumentCommand{\mbnfour}{m}{\text{MBN}_{4, #1}}
\NewDocumentCommand{\mbnfourneps}{m}{\text{MBN}^{(n,\epsilon)}_{4,#1}}
\NewDocumentCommand{\mbnfourinfeps}{m}{\text{MBN}^{(\infty,\epsilon)}_{4,#1}}
\NewDocumentCommand{\mbnfive}{m}{\text{MBN}_{5, #1}}
\NewDocumentCommand{\mbnfiveneps}{m}{\text{MBN}^{(n,\epsilon)}_{5,#1}}
\NewDocumentCommand{\mbnfiveinfeps}{m}{\text{MBN}^{(\infty,\epsilon)}_{5,#1}}
\NewDocumentCommand{\mbnr}{m}{\text{MBN}_{r, #1}}
\NewDocumentCommand{\mbnrneps}{m}{\text{MBN}^{(n,\epsilon)}_{r,#1}}
\NewDocumentCommand{\mbnrinfeps}{m}{\text{MBN}^{(\infty,\epsilon)}_{r,#1}}

\newcommand\numberthis{\addtocounter{equation}{1}\tag{\theequation}}

\newcounter{ccnt}
\NewDocumentCommand{\nc}{m}{
  \refstepcounter{ccnt}\ensuremath{c_{\theccnt}}\IfValueT{#1}{\label{#1}}%
}

\newcounter{bigccnt}
\NewDocumentCommand{\nC}{m}{%
  \refstepcounter{bigccnt}\ensuremath{C_{\thebigccnt}}\IfValueT{#1}{\label{#1}}%
}

\NewDocumentCommand{\corr}{m}{\text{\textbf{Corr}}_{#1}}
\NewDocumentCommand{\corrsc}{m m}{\text{Corr}_{#2, #1}}

\NewDocumentCommand{\main}{m}{\text{\textbf{Main}}_{#1}}
\NewDocumentCommand{\mainsc}{m m}{\text{Main}_{#2, #1}}

\newlist{experiments}{enumerate}{1}
\setlist[experiments,1]{label=\arabic*., ref=\arabic*}

\newlist{conditions}{enumerate}{1}
\setlist[conditions]{label=(\roman*),nosep}
\crefname{conditionsi}{Condition}{Conditions}

\newlist{assertions}{enumerate}{1}
\setlist[assertions]{label=(\alph*),nosep}
\crefname{assertionsi}{Assertion}{Assertions}

\crefname{experimentsi}{exp}{exp}

\graphicspath{{pics/}}

\newcommand{\papertitle}{The Effect of Mini-Batch Noise on the Implicit Bias of Adam\internalComment{INTERNAL}}

\title{\papertitle}

\author{
  Matias D. Cattaneo\thanks{Authors are listed alphabetically by last name.} \\
  Princeton University\\
  \texttt{cattaneo@princeton.edu} \\
  \and
  Boris Shigida\footnotemark[1] \\
  Princeton University\\
  \texttt{bs1624@princeton.edu} \\
}

\allowdisplaybreaks[4]

\NewDocumentCommand{\py}{+v}{}

\newsavebox{\pycodebox}      

\NewDocumentEnvironment{pycode}{}{%
  \VerbatimEnvironment%
  \begin{lrbox}{\pycodebox}%
  \begin{minipage}{\linewidth}%
  \begin{Verbatim}%
}{
  \end{Verbatim}%
  \end{minipage}%
  \end{lrbox}%
}

\NewDocumentEnvironment{pycontext}{} {
  \VerbatimEnvironment
  \begin{lrbox}{\pycodebox}%
  \begin{minipage}{\linewidth}%
  \begin{Verbatim}
}{
  \end{Verbatim}
  \end{minipage}
  \end{lrbox}
}

\NewDocumentEnvironment{nothing}{}{}{}

\newcommand{\emphindef}[1]{\textit{#1}}

\begin{document}

\begin{pycontext}
import os
import sys
os.chdir("../adam_hyper")
sys.path.insert(0, os.getcwd())
from report import *
\end{pycontext}

\maketitle

\begin{abstract}
Adam and AdamW are standard optimizers in deep learning, but the choice of their momentum
hyperparameters $(\beta_1,\beta_2)$ is often not principled.
We study how the interaction of these hyperparameters with batch size implicitly affects loss sharpness, relevant both to overfitting concerns in multi-epoch training and to post-training performance concerns after single-epoch pretraining.
Our analysis predicts a batch-size-dependent reversal in the preferred choice of
Adam's betas. In the low-noise, large-batch regime, increasing $\beta_2$
strengthens the implicit anti-regularization induced by memory, suggesting that
choosing $\beta_1$ close to $\beta_2$ should improve generalization and model sensitivity.
In contrast,
in the high-noise, small-batch regime, mini-batch noise reverses this
monotonicity: larger $\beta_2$ can compensate for the anti-regularizing memory
term, making the default choice $\beta_1 \ll \beta_2$ theoretically well
motivated. The predicted transition occurs at a batch-size scale controlled by
the simple noise scale, closely related to the critical batch size. Experiments
both in an overfitting regime and with single-epoch pretraining support these
qualitative predictions.
\end{abstract}

\section{Introduction}

Adam \citep{kingma2014adam} and its variants, including AdamW
\citep{loshchilov2019decoupledweightdecayregularization} and AdaFactor
\citep{shazeer2018adafactor}, are standard optimizers for modern deep learning
tasks such as language-model training
\citep{brown2020fewshot,anil2023palm,touvron2023llama,dubey2024llama}.
Beyond the learning rate, training with these methods involves two consequential
choices: the batch size $b$ and Adam's momentum hyperparameters
$(\beta_1,\beta_2)$.
These quantities jointly determine the optimizer's memory and the amount of
mini-batch noise seen during training, yet their values are still largely set by
convention.

The often appearing default traces back to \citet{kingma2014adam}, who recommend
$(\beta_1,\beta_2)=(0.9,0.999)$ based on a grid search. These values remain the
defaults in widely used libraries such as
\href{https://pytorch.org/docs/stable/generated/torch.optim.Adam.html}{PyTorch}
and
\href{https://optax.readthedocs.io/en/latest/api/optimizers.html#optax.adam}{Optax},
and it is conventional wisdom that adaptive gradient methods work well with
their default hyperparameters \citep{sivaprasad2020optimizer}. At the same
time, recent large-model training practice often lowers $\beta_2$, for example
to $\beta_2=0.95$, when using AdamW
\citep{brown2020fewshot,zhang2022opt,zeng2022glm,pmlr-v202-biderman23a,touvron2023llama,dubey2024llama},
partly because this can improve training stability. These observations raise a
basic question: should the preferred relationship between $\beta_1$, $\beta_2$,
and batch size change across noise regimes?

The answer depends on the performance criterion. Large pretraining runs have
often used only one pass, or less, over available data
\citep{NEURIPS2020_1457c0d6}, where stability, optimization speed, and compute
efficiency are primary concerns. In contrast, limited high-quality data can make
multi-epoch training increasingly useful
\citep{villalobos2024position,kim2025pretraininginfinitecompute}, and parts of
the post-training pipeline are explicitly multi-epoch and prone to overfitting
\citep{xiao2025rethinkingconventionalwisdommachine}. In such regimes, including
pretraining under data constraints \citep{kim2025pretraininginfinitecompute},
generalization quantities such as the train-validation gap are of interest.

Loss sharpness can be considered a mechanistic proxy for this type of generalization:
there is a large body of work connecting flatter regions of the loss landscape
to better generalization and showing the benefits of explicit or implicit sharpness
penalization
\citep{Jiang2020Fantastic,foret2021sharpnessaware,pmlr-v139-kwon21b,zheng2021regularizing,pmlr-v162-kim22f,NEURIPS2022_948b1c9d,liu2022towardsefficient,li2023enhancing,xie2024sampa,tahmasebi2024a,li2024friendlysharpness,barrett2021implicit,smith2021on,ghosh2023implicit,rosca2023oncontin,pmlr-v235-cattaneo24a}. (We provide some discussion of the evidence in \cref{sec:related-work}.)
Importantly, recent findings also suggest that sharpness during pretraining, even
single-epoch, is closely related to downstream factors such as
quantization degradation and catastrophic forgetting
\citep{watts2026sharpnessaware,springer2025overtrained}.
This motivates asking
how Adam's memory and mini-batch noise implicitly bias training toward or away
from sharp regions of the loss landscape, irrespective of whether classical overfitting is a concern.

We theoretically investigate how $(\beta_1,\beta_2)$ and batch size $b$ affect
this implicit sharpness bias. Our analysis extends the framework of the
implicit bias of memory \citep{cattaneo2025howmemoryoptimization} to isolate
mini-batch noise effects in Adam. To our knowledge, this is the first theory
explaining the batch-size-dependent reversal in Adam beta preferences through an
implicit sharpness-bias mechanism.

Our main contributions are as follows.
\begin{enumerate}
\item In \cref{sec:overview}, we present a framework for finding and
interpreting implicit bias terms during mini-batch training with an optimizer
that has memory, meaning that the next iterate depends on the history of
previous gradients. The exposition uses SGD with momentum as an illustrative
example.

\item In \cref{sec:theory-on-adam}, we apply this approach to mini-batch Adam.
After removing memory and averaging over without-replacement mini-batch
sampling, we derive an implicit-bias correction whose dominant terms can be
interpreted through a sharpness proxy.

\item We show that mini-batch noise changes the monotonicity of the sharpness
bias as a function of $\beta_2$ when $\beta_1$ is fixed. In the high-noise
small-batch regime, larger $\beta_2$ can compensate for the anti-regularizing
memory term, making the default ordering $\beta_1 \ll \beta_2$ suitable when
overfitting or model sensitivity is a concern. In the low-noise large-batch regime, larger $\beta_2$
instead strengthens anti-regularization, suggesting that $\beta_1$ and
$\beta_2$ should be chosen close to each other. As reviewed in
\cref{sec:related-work}, this prescription is consistent with a substantial
body of empirical work.

\item We show that an analogous reversal appears when $\beta_1$ is swept with
$\beta_2$ fixed at a common value such as $0.999$. For large batches and
full-batch training, larger $\beta_1$ is predicted to be better, consistent with
\citet{pmlr-v235-cattaneo24a}; as mini-batch noise increases, this monotonicity
reverses and again suggests $\beta_1 \ll \beta_2$ in small-batch regimes.

\item The transition scale in our simplified theory is controlled by the simple
noise scale $\mathcal{B}_{\text{simple}}$, closely related to the critical batch
size. In \cref{sec:experiments}, small-scale language-model experiments in an
overfitting regime along with a larger online pretraining run
support these qualitative predictions.
\end{enumerate}

\subsection{Related Work}\label{sec:related-work}
\paragraph{Tuning hyperparameters of Adam}
\Citet{ma2022qualitativestudydynamic} investigate
theoretically and empirically
the qualitative features of full-batch Adam depending on $(\beta_1, \beta_2)$,
dividing possible training into three regimes (oscillations, spikes and divergence) and advocating for $\beta_1 = \beta_2$ for faster and smoother training. The latter prescription is
consistent with our theory although we focus on different metrics (loss landscape sharpness / flatness), and we argue that increasing mini-batch noise changes the conclusions and recommendations.
Relatedly, \citet{zhao2025deconstructing} include Adam's $(\beta_1, \beta_2)$ sweeps
and find that if $\beta_1 = \beta_2$, Adam behaves similarly to signed momentum (Signum), and the recently common setting for language models $(\beta_1, \beta_2) = (0.9, 0.95)$ is close to this.
For small batches, however,
it is empirically observed to be beneficial to increase
$\beta_2$ relative to $\beta_1$.
In particular,
\citet{zhang2025how}
recommend (based on empirical sweeps)
taking smaller $\beta_2$ relative to $\beta_1$
if batch sizes are large and higher when batch sizes are small,
exactly matching our theory-based prescription.
There are other prior works advocating for that, and they
use different principles \citep{porian2024resolving,marek2025small}.
To the best of our knowledge, we provide the first theoretical
argument based on generalization.
Other works that have a substantial focus on $(\beta_1, \beta_2)$ sweeps in Adam include \citet{schmidt2021descendingcrowdedvalley,orvieto2025in,wen2025fantasticpretrainingoptimizers,pagliardini2025the}.

\paragraph{SDE approximations}
In the context of mini-batch noise,
theoretical analysis of many optimizers often employs approximations by stochastic differential equations (SDEs).
Relatedly, \citet{jules2023charting} use controlled thermal-like Langevin noise
as a diagnostic probe of the low-loss landscape geometry of neural networks.
In contrast, our work studies the implicit bias induced by the endogenous
mini-batch noise and memory terms of Adam during training, and connects this
effect to batch-size-dependent choices of $(\beta_1,\beta_2)$.
In particular, \citet{NEURIPS2020_f3f27a32,xie2022adaptiveinertiadisentangling}
approximate Adam and SGD with (different types of)
SDEs, and use the escaping time from local minima to predict
better generalization of SGD compared to Adam.
The works \citet{malladi2022sdesscalingrules,compagnoni2025adaptive} also approximate Adam with SDEs under different assumptions and propose scaling rules for hyperparameters.
\Citet{zhou2024towardsunderstanding} focus on the advantages of decoupled weight decay for generalization. These works differ substantially from the present article in purpose, methods and assumptions; in particular, typically $\beta_1$ and $\beta_2$ are assumed to converge to $1$ at certain rates as step size goes to zero, whereas we consider them fixed. In addition, we do not assume that mini-batch noise in the gradients forms an i.\,i.\,d. random sequence
(since we consider sampling without replacement), we are agnostic to its distribution,
and we do not use distributional asymptotics.

\paragraph{Sharpness and generalization}
There has been a long history of relating
flatter minima or loss regions to better generalization
\citep{NIPS1994_01882513,keskar2017on,Jiang2020Fantastic}.
There has also been some criticism based on the sensitivity of standard sharpness measures
to rescaling the network's parameters even if it does not change the network's outputs
\citep{pmlr-v70-dinh17b}, which \citet{pmlr-v139-kwon21b} call the \textit{scale dependency problem}.
In response, different scale-invariant sharpness metrics have been introduced
\citep{yi2019positivelyscaleinvariant,pmlr-v119-tsuzuku20a,rangamani2021ascaleinvariant,pmlr-v139-kwon21b}; however, empirical evidence is still mixed \citep{pmlr-v202-andriushchenko23a}.
Numerous works have explored explicit sharpness penalization to improve generalization,
of which we can only name a few
\citep{foret2021sharpnessaware,pmlr-v139-kwon21b,zheng2021regularizing,pmlr-v162-kim22f,NEURIPS2022_948b1c9d,liu2022towardsefficient,li2023enhancing,xie2024sampa,tahmasebi2024a,li2024friendlysharpness}. We study implicit, rather than explicit, penalization but otherwise our theory-based perspective is consistent with this literature.
Although the non-adaptive sharpness metrics we find implicitly \mbox{(anti-)penalized} do have the scale dependency problem (along with most metrics in the related literature including Hessian-based ones), this does not invalidate any conclusions since penalizing or otherwise decreasing non-adaptive sharpness still often leads to better generalization.

\paragraph{Implicit bias}
A large strand of literature describes implicit biases of optimization algorithms
by proving convergence to a max-margin solution
\citep{soudry2018implicit,nacson2019convergence,nacson2019stochastic,qian2019implicit,wang2022does,gunasekar2018characterizing,ji2018risk,ji2019implicit,ji2018gradient,gunasekar2018implicit,ji2020directional,nacson2019lexicographic,lyu2019gradient,pmlr-v139-wang21q}.
The implicit bias of weight decay in AdamW is tackled in
\citet{zhang2018three,zhuang2022understanding,andriushchenko2024why,pmlr-v235-xie24e,kobayashi2024weight} and others.
Implicit regularization
by biasing towards flatter minima at convergence is studied
in \citet{NEURIPS2021_e6af401c,pmlr-v162-arora22a} besides works already listed.
Most relatedly to our work,
a large body of literature demonstrates
implicit penalization of a gradient norm,
using modified equations
for SGD with or without momentum
\citep{barrett2021implicit,miyagawa2022toward,smith2021on,farazmand2020multiscale,kovachki2021continuous,ghosh2023implicit,rosca2023oncontin},
for full-batch Adam \citep{pmlr-v235-cattaneo24a},
or using correction terms after removing memory
\citep{cattaneo2025howmemoryoptimization}. \Citet{beneventano2023trajectories} studies the difference between SGD with or without replacement with similar methods. We build on and extend this literature.


\section{Framework}\label{sec:overview}

This section introduces the framework used in the rest of the paper.
We first specify the without-replacement mini-batch sampling model and the
corresponding gradient-noise covariance. We then recall the memory-removal
principle, which replaces an optimizer with memory by a memoryless iteration
with an explicit correction term. Finally, we illustrate the method on SGD with
momentum before applying it to Adam in \cref{sec:theory-on-adam}.

\paragraph{Losses and gradients}
We assume there are $n + 1$ batches in an epoch, each consisting of $b$
samples, so that $N := (n + 1) b$ samples are used in total. The $k$th
\emphindef{mini-batch loss} is defined by
\begin{equation*}
\mathcal{L}_k(\btheta) = \frac{1}{b} \sum_{r = k b + 1}^{k b + b} \ell_{\pi(r)}(\btheta),\quad k \in \range{0}{n},
\end{equation*}
where $\crl{\ell_s}_{s = 1}^N$ are \emphindef{per-sample losses} and
$\pi\colon \range{1}{N} \to \range{1}{N}$ is a uniformly random permutation of
the samples. The vector $\vtheta \in \Theta$ collects all model parameters, and
$\Theta \subset \mathbb{R}^{\dim \vtheta}$ is the parameter domain of interest.
The \emphindef{full-batch loss} is the average of mini-batch losses:
\begin{equation*}
\mathcal{L}(\btheta) = \frac{1}{n + 1} \sum_{k = 0}^n \mathcal{L}_k(\btheta) = \frac{1}{N} \sum_{r = 1}^N \ell_r(\btheta).
\end{equation*}
We write the \emphindef{loss gradient} as
\begin{equation*}
\mathbb{R}^{\dim \vtheta} \ni \vg = (g_1, \ldots, g_{\dim \vtheta})\trans := \nabla \mathcal{L}(\vtheta), \quad g_i := \partial_i \mathcal{L}(\btheta).
\end{equation*}
As usual, we omit the dependence on $\btheta$ when the point is fixed and clear
from context.

\paragraph{Mini-batch noise, empirical covariance matrix}
We will denote for $k \in \range{0}{n}$
\begin{gather*}
  d_k := (\mathcal{L}_k - \mathcal{L})(\btheta)
\end{gather*}
the $k$th \emphindef{mini-batch noise}. Since $d_k$ is a function of
$\btheta$, its derivatives $\partial_i d_k$ and $\partial_{ij} d_k$ are the
corresponding gradient and Hessian noise. Further, we define the
\emphindef{empirical covariance matrix} $\mSigma$ of per-sample gradients:
\begin{equation*}
\Sigma_{i j} := \frac{1}{(n + 1) b} \sum_{p = 1}^{(n + 1) b} \partial_i (\ell_p - \mathcal{L})(\btheta) \partial_j (\ell_p - \mathcal{L}) (\btheta), \quad \mSigma = (\Sigma_{i j})_{i, j = 1}^{\dim \vtheta} \in \mathbb{R}^{\dim \vtheta \times \dim \vtheta}.
\end{equation*}

\paragraph{Memory Removal}
An optimization algorithm has \emphindef{memory} if its update depends on the
history of previous iterates, not only on the current iterate. Consider a
general iteration of this form:
\begin{equation}\label{eq:quzxvB}
\vtheta_{t + 1} = \underbracket{\vtheta_t - \eta \vF_t(\vtheta_t, \ldots, \vtheta_0).}_{\text{depends on the whole history $\vtheta_t, \ldots, \vtheta_0$}}
\end{equation}
The memory-removal result of \citet{cattaneo2025howmemoryoptimization}
converts it into a memoryless iteration
\begin{equation}\label{eq:HsXmgC}
  \tilde{\vtheta}_{t + 1} = \underbracket{\tilde{\vtheta}_t
    - \eta \, \main{t}(\tilde{\vtheta}_t) - \eta^2 \, \corr{t}(\tilde{\vtheta}_t)}_{\text{only depends on $\tilde{\vtheta}_t$ (no memory)}},
\end{equation}
where $\main{t}$ is the update obtained by freezing the whole history at one
point and $\corr{t}$ is the correction that compensates for this replacement.
The original and memoryless trajectories stay globally $O(\eta^2)$-close for
$O(\eta^{-1})$ iterations: for any ``physical time'' horizon $T > 0$, there is
a constant $C$ such that
\begin{equation}\label{eq:yqEMvw}
\max_{t \in \range{0}{\lfloor T / \eta \rfloor}} \norm[\big]{\vtheta_t - \tilde{\vtheta}_t}_{\infty} \leq C \eta^2,
\end{equation}
provided that the \emphindef{main term} $ \main{t}(\vtheta) \in \mathbb{R}^{\dim \vtheta}$ and the \emphindef{correction term} $\corr{t}(\vtheta) \in \mathbb{R}^{\dim \vtheta}$ are chosen as
\begin{equation}\label{eq:tetoJR}
    \main{t}(\vtheta) := \vF_t(\vtheta, \ldots, \vtheta), \quad \corrsc{r}{t}(\vtheta) := \sum_{k = 1}^t \frac{\partial F_{t, r}}{\partial \vtheta_{t - k}}(\vtheta)\trans \sum_{s = t - k}^{t - 1} \vF_s(\vtheta).
\end{equation}

Removing memory is the first step in our analysis. We then interpret the
correction terms in the resulting memoryless iteration, asking whether they
penalize or anti-penalize sharpness. We now illustrate the procedure on a simple
algorithm with memory.

\subsection{Warm-up: SGD with Momentum}
Consider mini-batch SGD with momentum, written in the form \cref{eq:quzxvB} with
$F_t(\vtheta_t, \ldots, \vtheta_0) := \sum_{k = 0}^t \beta^{t - k} \nabla
\mathcal{L}_k(\vtheta_k)$. This example introduces the main ideas behind the
framework; see \citet{cattaneo2025modified} for a fine-grained analysis of this
specific algorithm.

The memory removal technique just described (formally \Cref{th:UCVELF}) gives an approximation \eqref{eq:HsXmgC} with\internalComment{check}
\begin{equation}\label{eq:YeepIA}
  \begin{aligned}
    \main{t}(\vtheta) &= \sum_{k = 0}^t \beta^{t - k} \nabla \mathcal{L}_k(\vtheta),\\
    \corr{t}(\vtheta) &= \beta \sum_{q = 0}^{t - 1} \beta^q \sum_{l = 1}^{q + 1} \sum_{q_1 = 0}^{t - l} \beta^{q_1} \nabla^2 \mathcal{L}_{t - 1 - q}(\vtheta) \nabla \mathcal{L}_{t - l - q_1}(\vtheta).
  \end{aligned}
\end{equation}
The approximating algorithm does not have memory, so $\main{t}$ and $\corr{t}$ only depend on one point, which is already a significant simplification.
However, in this form these expressions are still very complex and their analysis appears impossible.
The next step (due to \citet{smith2021on}),
is to put $t = n$ and take the average $\Eletpi$ over all permutations of
samples. This gives the average one-epoch correction at the current point
$\btheta$, removing the accidental dependence on a particular batch order.
The average of the main term is easy to compute:
\begin{equation*}
  \Eletpi \main{n}(\vtheta)
  = \sum_{k = 0}^n \beta^{n - k} \Eletpi \nabla \mathcal{L}_k(\vtheta)
  = \sum_{k = 0}^n \beta^{n - k} \vg
  = \frac{1 - \beta^{n + 1}}{1 - \beta} \vg
  = \frac{1 + o_n(1)}{1 - \beta} \vg,
\end{equation*}
where $o_n(1)$ denotes terms that decay to zero as $n \to \infty$ (exponentially fast).
After some similar algebra, we also find the average correction:
\begin{equation*}
\Eletpi \corr{n}(\vtheta) = \frac{\beta + o_n(1)}{2 (1 - \beta)^3} \nabla \norm{\vg}^2
  + \frac{\beta + o_n(1)}{2 (1 - \beta)^2 (1 + \beta)} \nabla \prn*{\frac{\trace \mSigma}{b}}.
\end{equation*}
In other words,
\begin{multline*}
  \Eletpi \main{n}(\vtheta) + \eta \, \Eletpi \corr{n}(\vtheta)
  \\
  = \frac{1}{1 - \beta} \nabla \prn[\bigg]{(1 + o_n(1)) \mathcal{L} + \eta \frac{\beta + o_n(1)}{2 (1 - \beta)^2} \norm{\vg}^2
  + \eta \frac{\beta + o_n(1)}{2 (1 - \beta) (1 + \beta)} \frac{\trace \mSigma}{b}}.
\end{multline*}
This expression is non-random and is much easier to analyze. In the right-hand side, we see a modified loss with two correction terms:
\begin{itemize}
\item implicit regularization by memory $\eta \frac{\beta + o_n(1)}{2 (1 - \beta)^2} \norm{\vg}^2$ (present already in the full-batch case), and
\item implicit regularization by stochasticity $\eta \frac{\beta + o_n(1)}{2 (1 - \beta) (1 + \beta)} \frac{\trace \mSigma}{b}$ (appearing as a result of mini-batch noise).
\end{itemize}

The first term implicitly penalizes the squared norm of the gradient, which is a first-order approximation of (non-adaptive) $\ell_2$ sharpness \citep{foret2021sharpnessaware}: for small $\rho$,
\begin{equation*}
  \max_{\norm{\boldsymbol{\epsilon}} \leq \rho} \mathcal{L}(\vtheta + \boldsymbol{\epsilon}) - \mathcal{L}(\vtheta)
  \approx \max_{\norm{\boldsymbol{\epsilon}} \leq \rho} \vg\trans \boldsymbol{\epsilon} = \rho \norm{\vg}.
\end{equation*}

The second term implicitly penalizes gradient noise variance
\begin{equation*}
\trace \mSigma(\vtheta) = \frac{1}{(n + 1) b} \sum_{p = 1}^{(n + 1) b} \norm[\big]{\nabla (\ell_p - \mathcal{L})(\vtheta)}^2 = \frac{n b + b - 1}{n} \sum_i \Eletpi (\partial_i d_0)^2,
\end{equation*}
also related to flatness of the loss landscape and observed to be predictive of generalization \citep{Jiang2020Fantastic}.

Therefore, penalizing both terms is predictive of moving toward flatter regions
of the loss landscape and often better generalization. Following tradition, we classify
them as ``implicit regularization''.

\subsection{Summary}

Our approach interprets implicit biases of mini-batch versions of optimization
algorithms with memory using the following three steps.
\begin{enumerate}
\item \textbf{Removing memory:} use the memory-removal technique to approximate the algorithm with memory by a memoryless iteration.
\item \textbf{Calculating the average correction terms:} take expectation $\Eletpi \corr{n}(\vtheta)$ to remove dependence on a particular mini-batch order, and potentially make other simplifications without qualitatively changing the situation.
\item \textbf{Interpretation:} interpret the terms in the resulting expression, especially connecting to known sharpness/flatness or generalization measures.
\end{enumerate}

This framework gives a way to study how mini-batch noise influences, on
average, the implicit bias of memory in complex optimization algorithms used for
deep learning.

\section{Mini-Batch Noise in Adam}\label{sec:theory-on-adam}

We now apply the framework from \cref{sec:overview} to Adam. The section has
three steps. First, we remove memory and obtain a memoryless approximation with
an explicit correction term. Second, we average this correction over
without-replacement mini-batch order and decompose it into a full-batch term and
five mini-batch-noise terms. Third, we interpret the dominant terms through a
sharpness proxy and derive directional predictions for how the preferred betas
change with batch size.

We use Adam in its equivalent one-variable form, obtained by eliminating the
first- and second-moment variables.

\begin{definition}[Adam \citep{kingma2014adam}]\label{def:adam}
For numerical hyperparameters $\epsilon > 0$ and $\beta_1,\beta_2 \in (0,1)$, Adam can be written
for each coordinate $j \in \range{1}{\dim \btheta}$ as
\begin{gather*}
  \theta_{t + 1, j} = \theta_{t, j} - \eta \frac{\sum_{k = 0}^t \mu_{t, k} \partial_j \mathcal{L}_k(\btheta_k)}{\sqrt{\sum_{k = 0}^t \nu_{t, k} \abs{\partial_j \mathcal{L}_k(\btheta_k)}^2 + \epsilon}},\\
  \text{where} \quad \mu_{t, k} := \frac{\beta_1^{t - k} (1 - \beta_1)}{1 - \beta_1^{t + 1}},\quad \nu_{t, k} := \frac{\beta_2^{t - k} (1 - \beta_2)}{1 - \beta_2^{t + 1}}, \quad k \in \range{0}{t}, t \in \mathbb{Z}_{\geq 0},
\end{gather*}
with arbitrary initial parameter $\btheta_0 \in \mathbb{R}^{\dim \btheta}$.
\end{definition}



\subsection{Step 1: Removing Memory}\label{sec:removing-memory}

An application of the memory removal technique to the case of mini-batch Adam provides the following result whose full version is \cref{thm:applying-memory-removal}.

\begin{theorem}[Memory removal, simplified version]
The iteration $\crl[\big]{\vtheta_t}_{t = 0}^{\infty}$ given by Adam (\cref{def:adam}) is
$O(\eta^2)$-close for $O(\eta^{-1})$ steps in the sense of bound \eqref{eq:yqEMvw} to the iteration $\crl[\big]{\tilde{\vtheta}_t}_{t = 0}^{\infty}$ given by
\begin{equation*}
  \tilde{\vtheta}_{t + 1} = \tilde{\vtheta}_t
    - \eta \, \main{t}(\tilde{\vtheta}_t) - \eta^2 \, \corr{t}(\tilde{\vtheta}_t), \quad \tilde{\vtheta}_0 = \vtheta_0,
  \end{equation*}
  where
\begin{equation*}
\mainsc{j}{t}(\btheta) := \frac{\sum_{k = 0}^t \mu_{t, k} \partial_j \mathcal{L}_k(\btheta)}{\sqrt{\sum_{k = 0}^t \nu_{t, k} \abs{\partial_j \mathcal{L}_k(\btheta)}^2 + \epsilon}},
\end{equation*}
and the full expression for $\corr{t}(\btheta)$ is deferred to \eqref{eq:corr-term-adam-def} in \cref{sec:general-theorems} due to its length.
\end{theorem}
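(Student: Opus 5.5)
The plan is to apply the general memory-removal theorem (\Cref{th:UCVELF}, summarized in \eqref{eq:quzxvB}--\eqref{eq:tetoJR}) with the Adam update written in the form \eqref{eq:quzxvB}. For each coordinate $r$ we set
\begin{equation*}
F_{t,r}(\vtheta_t,\ldots,\vtheta_0) := \frac{\sum_{k=0}^t \mu_{t,k}\,\partial_r \mathcal{L}_k(\vtheta_k)}{\sqrt{\sum_{k=0}^t \nu_{t,k}\,\abs{\partial_r \mathcal{L}_k(\vtheta_k)}^2+\epsilon}}.
\end{equation*}
Freezing all arguments at a single point $\vtheta$ gives exactly the stated $\mainsc{r}{t}$. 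The correction term is then defined by formula \eqref{eq:tetoJR}, with the partial derivative $\partial F_{t,r}/\partial\vtheta_{t-k}$ computed by the quotient rule; this yields the explicit expression \eqref{eq:corr-term-adam-def}. Consequently, the whole proof consists of verifying that the hypotheses of \Cref{th:UCVELF} hold for this choice of $\vF_t$.

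\textbf{Regularity of each $\vF_t$.} Since $\epsilon>0$, the map $x\mapsto (x+\epsilon)^{-1/2}$ is smooth on $[0,\infty)$, with all derivatives bounded by constants depending only on $\epsilon$. We assume the per-sample losses $\ell_s$ are (say) $C^3$ with derivatives bounded on a neighborhood of the region $\Theta$ that the trajectories visit. Under this assumption, every $F_{t,r}$ is $C^2$ in the history variables, with derivatives bounded by polynomials in $\epsilon^{-1/2}$ and in the sup-norms of $\nabla\ell_s$, $\nabla^2\ell_s$, $\nabla^3\ell_s$. The numerator is a convex combination of the mini-batch gradients, since $\sum_k \mu_{t,k}=1$, and the denominator is at least $\sqrt{\epsilon}$. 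Together these give a bound $\norm{\vF_t}_\infty\le C$ that is uniform in $t$ and in the batch order.

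\textbf{Memory decay.} The general theorem requires that the dependence on the distant past be summable: $\abs{\partial F_{t,r}/\partial\theta_{t-k,j}}\le \gamma_k$ with $\sum_k k^2\gamma_k<\infty$ or a similar condition, and the analogous condition for second derivatives $\partial^2 F_{t,r}/\partial\vtheta_{t-k}\partial\vtheta_{t-l}$. Differentiating with respect to $\vtheta_{t-k}$ produces either a factor $\mu_{t,t-k}$ or a factor $\nu_{t,t-k}$, multiplied by bounded quantities. Because $\mu_{t,t-k}\le \beta_1^k(1-\beta_1)/(1-\beta_1)$ and $\nu_{t,t-k}\le \beta_2^k$ for all $t\ge k$, we obtain $\gamma_k\le C\max(\beta_1,\beta_2)^k$, uniformly in $t$. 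Second derivatives decay like $\max(\beta_1,\beta_2)^{k+l}$. Geometric decay makes every required series converge.

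\textbf{Main obstacle.} The expected difficulty is making the boundedness assumptions uniform along both trajectories for $t\le \lfloor T/\eta\rfloor$. The iterates must stay in a region where the derivative bounds on the $\ell_s$ hold. The standard remedy is to assume global boundedness of these derivatives on $\Theta$, and to note that steps have size $O(\eta)$, so over $O(\eta^{-1})$ steps both $\vtheta_t$ and $\tilde{\vtheta}_t$ stay within a fixed compact neighborhood of $\vtheta_0$. The bias-correction normalizations $1-\beta_i^{t+1}$ are bounded below by $1-\beta_i$, so they introduce no singularity at small $t$. Once these uniform bounds are in place, \Cref{th:UCVELF} delivers the $C\eta^2$ bound \eqref{eq:yqEMvw}, with $C$ depending on $T,\epsilon,\beta_1,\beta_2$ and the loss-derivative bounds.
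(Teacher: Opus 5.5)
Your proposal is correct and follows the paper's proof essentially verbatim: apply \Cref{th:UCVELF} with the Adam $F_{t,j}$, bound $\abs{F_{t,j}}$ by $G/\sqrt{\epsilon}$, and bound the first and second history-derivatives using $\mu_{t,t-q}\le\beta_1^q$, $\nu_{t,t-q}\le\beta_2^q$. One small slip: when both derivatives hit the same past iterate ($k=l$), the second derivative contains a term proportional to $\mu_{t,t-k}\nabla^3\mathcal{L}_{t-k}$ (and similarly with $\nu_{t,t-k}$), which decays only like $\beta^{k}$ rather than $\beta^{2k}$; the paper records this through the $\mathbf{1}_{\{q_1=q_2\}}(\beta_1^{q_1}+\beta_2^{q_1})$ term in $\gamma_{q_1,q_2}$, and summability still holds because $\sum_k k^2\beta^k<\infty$.
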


The proof follows from the general result in \citet{cattaneo2025howmemoryoptimization}, and is given in \cref{sec:proof-of-thm-applying-memory-removal}. The terms $\main{t}(\btheta)$ and $\corr{t}(\btheta)$ are complex and difficult to interpret. Thus, we proceed to the next step to simplify the analysis.

\subsection{Step 2: Calculating the Average Correction Terms}

In this step, we put $t = n$, expand $\corrsc{j}{n}(\btheta)$ up to degree-2
monomials in noise derivatives, and calculate the average of the result with
respect to permutations of samples. The expansion is local and second order in
mini-batch noise. It is intended to capture directional effects in regimes where
these terms dominate the omitted higher-order corrections, rather than to give
a uniformly accurate quantitative approximation for every extremely small batch
size.
Recall
that $d_k$, $\partial_i d_k$, $\partial_{i j} d_k$ denote the mini-batch noise and its partial derivatives. Accordingly, we will use the notation $O(d^p)$ to mean ``terms of order at least $p$ in (derivatives of) noise''. For example, all terms of the form $(\partial_{i j} d_k) (\partial_i d_k)$ are $O(d^2)$ and all terms of the form $(\partial_{i j l} d_k) (\partial_{i j} d_k) (\partial_l d_k)$ are $O(d^3)$.

The following is a simplified combination of
\cref{thm:finite-n-nonzero-eps,thm:limits}, proven in
\cref{sec:proof-of-prop-expect,sec:proof-of-limits-thm}. The appendix keeps the
nonzero-$\epsilon$ expressions; the main text presents the cleaner
small-$\epsilon$ form.

\begin{theorem}[Mini-batch noise expansion of the memoryless dynamics, simplified version]
\label{thm:expect}
The expectation $\Eletpi$ of the correction term with respect to the uniform law on all permutations $\range{1}{(n + 1) b} \to \range{1}{(n + 1) b}$ satisfies, up to small $\epsilon$ corrections and finite-epoch terms of order $o_n(b^{-1})$,
  \begin{equation}\label{eq:expect}
    \begin{aligned}
      \abs{g_j} \Eletpi \corrsc{j}{n}
        ={} &\fb{j}(\beta_1,\beta_2) + \mbnone{j}(\beta_1, \beta_2) + \mbntwo{j}(\beta_1, \beta_2)\\
      &+ \mbnthree{j}(\beta_1,\beta_2) + \mbnfour{j}(\beta_1,\beta_2) + \mbnfive{j}(\beta_1,\beta_2) + O(d^3),
    \end{aligned}
  \end{equation}
  where the full-batch correction is given by
\begin{equation}\label{eq:fb-simple}
\fb{j}(\beta_1,\beta_2) := \prn[\bigg]{\frac{\beta_1}{1 - \beta_1} - \frac{\beta_2}{1 - \beta_2}} \partial_j \norm{\vg}_1,
\end{equation}
and five mini-batch noise corrections are given by
\begin{equation}\label{eq:mbn-simple}
\begin{aligned}
  \mbnone{j}(\beta_1, \beta_2) &:= \frac{1}{b} C_1(\beta_1, \beta_2)
    \partial_j \norm{\vg}_1 \frac{\Sigma_{j j}}{g_j^2},\\
  \mbntwo{j}(\beta_1, \beta_2) &:= \frac{1}{b} C_2(\beta_1, \beta_2)
    \sum_i \partial_j \abs{g_i} \frac{\Sigma_{i i}}{g_i^2},\\
  \mbnthree{j}(\beta_1,\beta_2) &:= \frac{1}{b} C_3(\beta_1, \beta_2)
      \frac{\sign g_j}{\abs{g_j}} \sum_i \sign g_i \, \partial_i \Sigma_{j j},\\
  \mbnfour{j}(\beta_1,\beta_2) &:= \frac{1}{b} C_4(\beta_1, \beta_2)
    \sum_i \frac{1}{\abs{g_i}}
    \partial_j \Sigma_{i i},\\
  \mbnfive{j}(\beta_1,\beta_2) &:= \frac{1}{b} C_5(\beta_1, \beta_2)
    \frac{\sign g_j}{\abs{g_j}} \sum_i \frac{\partial_i g_j}{\abs{g_i}} \Sigma_{i j},
\end{aligned}
\end{equation}
with $\Sigma_{i j}$ denoting the $(i, j)$th component of the empirical per-sample gradient covariance matrix $\mSigma$, and the values of $\crl{C_k(\beta_1, \beta_2)}_{k = 1}^5$ deferred to \cref{eq:constants} in \cref{sec:general-theorems} due to their length.
\end{theorem}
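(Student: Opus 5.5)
The approach is to start from the explicit correction term $\corrsc{j}{n}$ furnished by the memory-removal theorem, i.e.\ formula \eqref{eq:tetoJR} applied to the Adam map
\[
F_{t,j}(\btheta_t,\ldots,\btheta_0)=\frac{\sum_k \mu_{t,k}\partial_j\mathcal{L}_k(\btheta_k)}{\sqrt{\sum_k \nu_{t,k}\abs{\partial_j \mathcal{L}_k(\btheta_k)}^2+\epsilon}} .
\]
Differentiating $F_{n,j}$ with respect to $\btheta_{n-k}$ gives two contributions: a numerator term $\mu_{n,n-k}\,\partial_{ij}\mathcal{L}_{n-k}/\sqrt{\cdot}$, and a denominator term $-\mu\text{-sum}\cdot\nu_{n,n-k}\,\partial_j\mathcal{L}_{n-k}\,\partial_{ij}\mathcal{L}_{n-k}/(\cdot)^{3/2}$. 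These are contracted against $\sum_{s=n-k}^{n-1}F_{s,i}(\btheta)$. I then substitute $\mathcal{L}_k=\mathcal{L}+d_k$ everywhere, so that $\partial_j\mathcal{L}_k=g_j+\partial_j d_k$ and $\partial_{ij}\mathcal{L}_k=\partial_i g_j+\partial_{ij}d_k$.

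Next, every quotient is Taylor-expanded in the noise around the noiseless values. With $\epsilon\to0$, the denominator $\sqrt{\sum_k\nu_{t,k}(g_j+\partial_jd_k)^2}$ expands as $\abs{g_j}\bigl(1+\sum_k\nu_{t,k}\partial_jd_k/g_j+\tfrac12\sum_k\nu_{t,k}(\partial_jd_k)^2/g_j^2-\tfrac12(\sum_k\nu_{t,k}\partial_jd_k/g_j)^2+O(d^3)\bigr)$. The ratio $F_{s,i}$ is expanded similarly, and all monomials up to degree two in $\{\partial_i d_k,\partial_{ij}d_k\}$ are kept. At zeroth order the numerator and denominator derivatives produce, after summing the geometric weights and letting $n\to\infty$, the coefficients $\beta_1/(1-\beta_1)$ and $\beta_2/(1-\beta_2)$ multiplying $\sum_i \partial_i g_j\,\sign g_i=\partial_j\norm{\vg}_1$. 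This gives $\fb{j}$ after multiplying by $\abs{g_j}$. First-order terms vanish in expectation because $\Eletpi \partial d_k=0$.

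For the second-order terms I need the permutation moments. Without-replacement sampling gives
\[
\Eletpi[\partial_i d_k\,\partial_j d_l]=\frac{\Sigma_{ij}}{b}\Bigl(\mathbb 1_{k=l}-\frac{\mathbb 1_{k\ne l}}{n}\Bigr)\cdot\frac{N}{N-1}\text{-type factors},
\]
and analogously $\Eletpi[\partial_{il}d_k\,\partial_j d_l]$ equals $\partial_l\Sigma_{ij}$-type quantities. These follow from the fact that $\frac12\partial_l\Sigma_{ij}$ is assembled from per-sample Hessian--gradient products; for example, $\Eletpi[\partial_{li}d_k\,\partial_i d_k]=\frac{1}{2b}\partial_l\Sigma_{ii}$ up to $1+O(1/n)$. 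The off-diagonal ($k\ne l$) covariances are $O(1/(nb))$, but they appear in double sums over $O(n)$ index pairs with geometric weights. Since the weights are summable, their total contribution is $o_n(b^{-1})$ and is discarded. The diagonal contributions then reduce to triple geometric series in $\beta_1,\beta_2$, which evaluate to rational functions; these are the constants $C_1,\ldots,C_5$.

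The final step is to group the surviving monomials by tensor structure. Terms with $(\partial_jd)^2$ coming from the $j$-th denominator yield $\Sigma_{jj}/g_j^2$ times $\partial_j\norm{\vg}_1$ (this is $\mbnone{j}$). The analogous terms from the $i$-th denominator of $F_{s,i}$ yield $\mbntwo{j}$. The Hessian-noise-times-gradient-noise products with the derivative landing on the $j$-th versus the $i$-th coordinate yield $\mbnthree{j}$ and $\mbnfour{j}$. The cross term $\partial_i g_j\,\partial_i d\,\partial_j d$ yields $\mbnfive{j}$.

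I expect the main obstacle to be this bookkeeping: tracking which time indices coincide across the nested sums $\sum_k\sum_{s}\sum_{k'}$ so that each diagonal coincidence is assigned the correct geometric weight, and verifying that all non-coincident pairings really sum to $o_n(b^{-1})$. I would organize it by writing every degree-two monomial as $w(k,k')\,X_kY_{k'}$, computing $\sum_k w(k,k)$ in closed form, and bounding $\sum_{k\ne k'}\abs{w}/n$ separately.
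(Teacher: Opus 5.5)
Your plan is the paper's proof. Expand $L_{n,j}/R_{n,j} - M_{n,j}P_{n,j}/R_{n,j}^3$ to second order in the noise derivatives, drop the mean-zero linear terms and the cross-batch products ($O(1/(nb))$ each against summable geometric weights, hence $o_n(b^{-1})$), evaluate same-batch products via $\Eletpi \partial_i d_k\,\partial_j d_k = \frac{n}{(n+1)b-1}\Sigma_{ij}$ and $\Eletpi \partial_{ij} d_k\,\partial_j d_k = \frac{n}{2((n+1)b-1)}\partial_i\Sigma_{jj}$, then sum the geometric series; the only differences are cosmetic (the paper keeps $\epsilon>0$ and lets $n\to\infty$ before $\epsilon\to 0$, whereas setting $\epsilon=0$ up front is harmless for $g_j\neq 0$), and since only Hessian--gradient products whose gradient index matches a Hessian index ever occur, your loose ``$\partial_l\Sigma_{ij}$-type'' claim is only needed in the exact form $\frac{1}{2}\partial_l\Sigma_{ii}$.
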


\begin{table}[t]
\caption{Interpretation of the terms in the mini-batch-noise expansion.}
\label{tab:correction-term-roles}
\vskip 0.05in
\begin{center}
\small
\begin{tabular}{p{0.14\linewidth}p{0.34\linewidth}p{0.42\linewidth}}
\toprule
Term & Role & Treatment in the main interpretation \\
\midrule
$\fb{j}$ & Full-batch memory correction; for $\beta_1 < \beta_2$, it
anti-penalizes the non-adaptive $\ell_\infty$ sharpness proxy. & Kept; this is
                                                         the baseline anti-regularizing term that mini-batch noise competes with. \\
\addlinespace
$\mbnone{j}$ & Diagonal noise-to-signal correction proportional to
$\Sigma_{jj}/g_j^2$. & Kept; it contributes to the sharpness-bias coefficient
                  after the simple-noise-scale approximation. \\
\addlinespace
$\mbntwo{j}$ & Cross-coordinate correction involving
$\sum_i \partial_j |g_i|\,\Sigma_{ii}/g_i^2$. & Kept; after replacing
per-coordinate noise-to-signal ratios by their global average, it has the same
                                 sharpness-proxy structure as $\mbnone{j}$. \\
\addlinespace
$\mbnthree{j}$ & Covariance-derivative term involving
$\partial_i \Sigma_{jj}$. & Treated as sign-neutral for the sharpness direction;
                \cref{sec:term-c_3-neutral-generalization} gives the heuristic argument. \\
  \addlinespace

$\mbnfour{j}$, $\mbnfive{j}$ & Remaining covariance-derivative and off-diagonal
covariance terms. & Neglected in the main interpretation because their
coefficients are small in the beta ranges studied; see
\cref{lem:c4-c5-are-small}. \\
\bottomrule
\end{tabular}
\end{center}
\end{table}

For very small batches, these degree-2 monomials may not be enough for an
accurate quantitative approximation. Our predictions below are therefore
directional: they describe the sign and monotonicity of the sharpness-bias
coefficient, not the exact optimal performance
boundaries.

\subsection{Step 3: Interpretation}

We need to analyze each term in the right-hand side of \cref{eq:expect}. Since
the theorem is written after multiplying the correction by $\abs{g_j}$,
all parts of the correction term have the same preconditioning $\abs{g_j}^{-1}$ as the full-batch Adam does.
Because the sign of the effect depends on the region of hyperparameter space,
we focus on two one-dimensional sweeps. In each case, one beta is fixed at a
common value and we ask which value of the other beta is preferred under the
sharpness proxy discussed in the introduction. Specifically, we study how to set
$\beta_2$ when $\beta_1$ is fixed at $0.9$, and how to set $\beta_1$ when
$\beta_2$ is fixed at the default value $0.999$.

\paragraph{How to set $\beta_2$ if $\beta_1$ is fixed}
We start with the setting where $\beta_1$ is fixed at its default value $0.9$
and $\beta_2$ varies in the interval $[0.9, 1)$:
\begin{equation*}
\beta_1 = 0.9, \quad \text{seeking preferred }\beta_2 \in [0.9, 1).
\end{equation*}

The terms $\mbnfour{j}(\beta_1, \beta_2)$ and
$\mbnfive{j}(\beta_1, \beta_2)$ are easiest to handle:
\cref{lem:c4-c5-are-small} shows that their coefficients are small compared to
the dominant terms in this beta range. In addition, we argue in
\cref{sec:term-c_3-neutral-generalization} that
$\mbnthree{j}(\beta_1, \beta_2)$ is sign-neutral for our sharpness-direction
interpretation.

We are left with the sum of three terms:
$\fb{j}(\beta_1,\beta_2)$, $\mbnone{j}(\beta_1, \beta_2)$ and $\mbntwo{j}(\beta_1, \beta_2)$.
The full-batch term $\fb{j}(\beta_1,\beta_2)$ anti-penalizes, when
$\beta_1 < \beta_2$, the 1-norm of the gradient. This is a first-order
approximation of non-adaptive $\ell_{\infty}$-sharpness: for small $\rho$,
\begin{equation*}
  \max_{\norm{\boldsymbol{\epsilon}}_{\infty} \leq \rho} \mathcal{L}(\vtheta + \boldsymbol{\epsilon}) - \mathcal{L}(\vtheta)
  \approx \max_{\norm{\boldsymbol{\epsilon}}_{\infty} \leq \rho} \vg\trans \boldsymbol{\epsilon} = \rho \norm{\vg}_1.
\end{equation*}
Thus, we can refer to this term as anti-regularization,
same as in the setting with zero noise \citep{pmlr-v235-cattaneo24a}. The term containing $C_1(\beta_1, \beta_2)$ (it can be checked that it is positive in our setting) provides regularization:
it also penalizes the $\ell_1$ gradient norm although the magnitude of this penalization in each component $j$ depends on the per-component
noise-to-signal ratio $\Sigma_{j j} / g_j^2$.

The term $\mbntwo{j}(\beta_1, \beta_2)$ is more complicated, so we make an
explicit simple-noise-scale approximation. At the current point $\btheta$, we
replace the per-coordinate noise-to-signal ratios $\Sigma_{i i}/g_i^2$ by a
single global average, the ``simple noise scale''
$\mathcal{B}_{\text{simple}}$ from
\cite{mccandlish2018empiricalmodellarge}:
\begin{equation*}
\mathcal{B}_{\text{simple}} := \frac{\trace \mSigma}{\sum_j g_j^2} = \frac{\trace \mSigma}{\norm{\vg}^2}.
\end{equation*}
This approximation discards per-coordinate variation in $\Sigma_{i i}/g_i^2$,
but it preserves the global scale of mini-batch noise relative to the full-batch
gradient. After this replacement, the terms
$\mbnone{j}(\beta_1, \beta_2)$
and
$\mbntwo{j}(\beta_1, \beta_2)$
have the same sharpness-proxy structure up to their coefficients:
$C_1(\beta_1, \beta_2) \partial_j \norm{\vg}_1 b^{-1} \mathcal{B}_{\text{simple}}$ and $C_2(\beta_1, \beta_2) \partial_j \norm{\vg}_1 b^{-1} \mathcal{B}_{\text{simple}}$ respectively.

Under this approximation, the dominant terms in \eqref{eq:expect} provide
implicit \mbox{(anti-)penalization} of an approximate non-adaptive sharpness
measure with coefficient
$C_{\text{total}} \prn[\big]{\beta_1, \beta_2, b^{-1} \mathcal{B}_{\text{simple}}}$,
where the function $(0, + \infty) \ni \lambda \mapsto C_{\text{total}} \prn[\big]{\beta_1, \beta_2, \lambda}$ is defined by
\begin{equation}\label{eq:ctotal}
    C_{\text{total}}(\beta_1, \beta_2, \lambda) := \frac{\beta_1}{1 - \beta_1} - \frac{\beta_2}{1 - \beta_2} + \crl{C_1(\beta_1, \beta_2) + C_2(\beta_1, \beta_2)} \lambda.
\end{equation}
If $C_{\text{total}}(\beta_1, \beta_2, b^{-1} \mathcal{B}_{\text{simple}}) > 0$, this can be interpreted as regularization, otherwise as anti-regularization.

It remains to analyze how this coefficient depends on $\lambda > 0$. We use the following fact whose full version is \cref{prop:ctotal-monotonicity-b2-full}.

\begin{proposition}[Monotonicity of \(C_{\mathrm{total}}(0.9,\beta_2,\lambda)\), simplified version]
\label{lem:ctotal-monotonicity-b2-simplified.}
  If $\lambda \geq 0.5082$, the function $C_{\text{total}}(0.9, \beta_2, \lambda)$ is increasing in $\beta_2 \in [0.9, 1)$. If $0 < \lambda < 0.494$, it is decreasing in $\beta_2 \in [0.9, 1)$.
\end{proposition}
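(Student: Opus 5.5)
The plan is to reduce the claim to a one-variable sign question. By \eqref{eq:ctotal}, $C_{\text{total}}(0.9,\beta_2,\lambda)$ is affine in $\lambda$:
\begin{equation*}
\partial_{\beta_2} C_{\text{total}}(0.9,\beta_2,\lambda) = -\frac{1}{(1-\beta_2)^2} + \lambda\, S'(\beta_2),
\end{equation*}
where $S(\beta_2) := C_1(0.9,\beta_2) + C_2(0.9,\beta_2)$ is taken from the explicit formulas \cref{eq:constants}.

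The first step is to compute $S'(\beta_2)$ from these formulas. They should be rational functions of $\beta_1,\beta_2$, with denominators built from factors such as $1-\beta_1$, $1-\beta_2$, $1-\beta_1\beta_2$, $1-\beta_2^2$ and $1-\beta_1^2\beta_2$, arising from geometric sums over the permutation average. I then form the ratio
\begin{equation*}
R(\beta_2) := \frac{1}{(1-\beta_2)^2\, S'(\beta_2)}.
\end{equation*}
Provided $S'>0$ on $[0.9,1)$, the derivative is positive exactly when $\lambda > R(\beta_2)$ and negative exactly when $\lambda < R(\beta_2)$. The statement therefore follows from the two bounds
\begin{equation*}
\sup_{[0.9,1)} R \le 0.5082, \qquad \inf_{[0.9,1)} R \ge 0.494 .
\end{equation*}
The narrow window between these constants suggests $R$ is nearly constant. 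This is plausible if $S$ has a pole $c/(1-\beta_2)$ with $c\approx 2$, whose derivative cancels the $(1-\beta_2)^{-2}$ growth.

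So the key steps, in order, are:
\begin{itemize}
\item substitute $\beta_1=0.9$ and simplify $C_1+C_2$ to a single rational function in $x=\beta_2$;
\item differentiate and verify $S'>0$ on $[0.9,1)$;
\item bound $R$ on the interval, including the limit $R(1^-)$ via its Laurent expansion at $x=1$, and the value $R(0.9)$.
\end{itemize}
The extreme values of $R$ are presumably attained at the endpoints $x=0.9$ and $x\to1$, which would give the constants $0.494$ and $0.5082$, or at an interior critical point.

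To make the bounds rigorous, I would clear denominators. Each inequality, for example $(1-x)^2S'(x)\cdot 0.5082 \ge 1$, becomes a polynomial inequality $P(x)\ge 0$ on $[0.9,1]$ with rational coefficients. I would prove it by substituting $x = 0.9 + 0.1u$ with $u\in[0,1]$ and applying a Bernstein-basis expansion (all coefficients positive), or a Sturm sequence or interval arithmetic certificate.

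The main obstacle is this last certification step. The polynomials may have high degree and the margins are tight (about $1\%$), so crude bounds will fail. I would first locate critical points of $R$ numerically to learn where the extrema occur. Then I would split $[0.9,1]$ into a few subintervals and certify each with Bernstein coefficients, handling the neighbourhood of $x=1$ separately via the Taylor expansion of the cleared-denominator polynomial about $x=1$.
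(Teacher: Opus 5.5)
Your proposal is correct and is essentially the paper's proof: the paper also sets $S=C_1(0.9,\cdot)+C_2(0.9,\cdot)$, proves $S'>0$ on $[0.9,1)$ by Sturm's theorem, and reduces the sign of $\partial_{\beta_2}C_{\text{total}}$ to the sign of $\lambda-f(\beta_2)$ with $f=1/\bigl((1-\beta_2)^2S'\bigr)$, which is your $R$. The only difference is the final step, where the paper studies the sign of $f'$ (a quintic with one root $\rho\approx 0.9507$ in the interval) and finds $\max f=f(0.9)=6859/13497\approx 0.50819$, $\min f=f(\rho)\approx 0.49454$ and $f(1^-)=1/2$, which has three consequences for your plan: your lower constant comes from an interior minimum rather than an endpoint; the upper margin is only about $1.3\times 10^{-5}$ rather than $1\%$, so your certificates must use exact rational arithmetic; and no separate treatment near $\beta_2=1$ is needed, because $(1-\beta_2)^2S'(\beta_2)$ is a rational function with no singularity there.
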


We obtain the following prediction. For fixed $\beta_1 = 0.9$ and
$\beta_2 \in [0.9, 1)$, the quantity
$b^{-1}\mathcal{B}_{\text{simple}}$ controls the monotonicity of the approximate
sharpness-bias coefficient. If this quantity is significantly below $0.5$,
equivalently if $b \gg 2\mathcal{B}_{\text{simple}}$, the coefficient decreases
with $\beta_2$: higher $\beta_2$ means weaker sharpness penalization, often
predicting worse generalization. If the quantity is significantly above $0.5$,
equivalently if $b \ll 2\mathcal{B}_{\text{simple}}$, the coefficient increases
with $\beta_2$: higher $\beta_2$ means stronger sharpness penalization, often
predicting better generalization:
\begin{align*}
  b \gg 2 \mathcal{B}_{\text{simple}} \quad &\Rightarrow \quad \text{$C_{\text{total}} \prn[\big]{0.9, \beta_2, b^{-1} \mathcal{B}_{\text{simple}}}$ $\searrow$ in $\beta_2$},\\
  b \ll 2 \mathcal{B}_{\text{simple}} \quad &\Rightarrow \quad \text{$C_{\text{total}} \prn[\big]{0.9, \beta_2, b^{-1} \mathcal{B}_{\text{simple}}}$ $\nearrow$ in $\beta_2$}.
\end{align*}
Theoretically, the transition happens very quickly around the point where the batch size is $2 \mathcal{B}_{\text{simple}}$, although simplifications that we used make the theoretical transition quicker than it is in practice.

\paragraph{How to set $\beta_1$ if $\beta_2$ is fixed}
Next, we consider the complementary one-dimensional sweep, where $\beta_2$ is
fixed at a default value and $\beta_1$ varies. This sweep is less common in
practice, but it is still useful for understanding the full picture:
\begin{equation*}
\beta_2 = 0.999, \quad \text{seeking preferred }\beta_1 \in [0.9, 1).
\end{equation*}
The following simplified variant of \cref{prop:ctotal-monotonicity-b1-full} describes this situation.

\begin{proposition}[Monotonicity of \(C_{\mathrm{total}}(\beta_1,0.999,\lambda)\), simplified version]\label{prop:ctotal-at-b2-0999}
If $\lambda \geq 1.002$, the function $C_{\text{total}}(\beta_1, 0.999, \lambda)$ is strictly decreasing in $\beta_1 \in [0.9, 1)$. If $0 < \lambda < 0.995$, it is strictly increasing in $\beta_1 \in [0.9, 1)$.
\end{proposition}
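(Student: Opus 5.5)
We do not have the explicit constants $C_1,C_2$ (they are deferred to \cref{eq:constants}), so the plan is structural. Since $C_{\text{total}}(\beta_1,0.999,\lambda)$ is affine in $\lambda$, we write
\begin{equation*}
\partial_{\beta_1} C_{\text{total}}(\beta_1, 0.999, \lambda) = \frac{1}{(1-\beta_1)^2} + \lambda\, D(\beta_1), \qquad D(\beta_1) := \partial_{\beta_1}\crl{C_1 + C_2}(\beta_1, 0.999).
\end{equation*}
For each fixed $\beta_1$ the sign of this derivative changes at most once in $\lambda$. It changes exactly when $D(\beta_1)<0$, and the crossing point is $\lambda^*(\beta_1) := (1-\beta_1)^{-2}/\abs{D(\beta_1)}$. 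The statement then reduces to two claims. First, $D<0$ on $[0.9,1)$. Second, there are bounds
\begin{equation*}
0.995 \le \inf_{\beta_1\in[0.9,1)} \lambda^*(\beta_1), \qquad \sup_{\beta_1\in[0.9,1)} \lambda^*(\beta_1) \le 1.002 .
\end{equation*}
Given these, $\lambda \ge 1.002$ makes the derivative negative everywhere, except possibly at isolated points where $\lambda^*=1.002$, which still gives strict monotonicity. Likewise, $\lambda<0.995$ makes the derivative positive everywhere.

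\emph{Step 1.} I would substitute $\beta_2=0.999$ into the rational expressions for $C_1,C_2$ from \cref{eq:constants}. After this substitution $C_1+C_2$ is a rational function of the single variable $\beta_1$. Its denominators are products of factors such as $(1-\beta_1)^k$, $(1-\beta_1\beta_2)^k$ and $(1+\beta_1)$.

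\emph{Step 2.} I would clear denominators. This turns $\lambda^*(\beta_1)-0.995 \ge 0$ and $1.002-\lambda^*(\beta_1)\ge 0$ on $[0.9,1)$ into positivity statements for explicit polynomials $P_\pm(\beta_1)$. I would then check the behaviour near $\beta_1\to1$. There the leading singular parts should balance, because $\beta_2$ is fixed and $(1-\beta_1\beta_2)$ stays bounded away from $0$. So $\lambda^*$ has a finite limit, which I expect to be near $1$; this is why the window $[0.995,1.002]$ is so narrow. The limit must be computed exactly to confirm that it lies inside the window.

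\emph{Step 3.} I would certify positivity of $P_\pm$ on $[0.9,1]$. To do this, substitute $\beta_1 = 0.9+0.1s$ with $s\in[0,1]$, and either show that all coefficients in the Bernstein basis are positive or use Sturm sequences to show there are no roots together with a positive value at one point. Exact rational arithmetic for $0.999=999/1000$ makes this rigorous.

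The main obstacle is the narrowness of the window. Because $\lambda^*$ is nearly constant, close to $1$, across the whole interval, crude bounds will fail. In particular, the endpoint $\beta_1\to1$ requires carefully cancelling the $(1-\beta_1)^{-2}$ singularities before certifying positivity. I would first remove the common factor of $(1-\beta_1)$ powers symbolically, and then apply the Bernstein or Sturm certification to the reduced polynomials. If one interval is too coarse, I would subdivide $[0.9,1]$.
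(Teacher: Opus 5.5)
Your proposal is correct and is essentially the paper's proof. The paper also sets $S = C_1+C_2$ at $\beta_2 = 0.999$, shows $S'<0$ on $[0.9,1)$ from the sign of a cubic numerator, and writes $\partial_{\beta_1}C_{\text{total}} = S'(\beta_1)\bigl(\lambda - f(\beta_1)\bigr)$ with $f = -1/((1-\beta_1)^2 S')$, which is exactly your $\lambda^*$. The only difference is how $f$ is bounded: the paper proves $f$ is strictly increasing and reads off the sharp range from $f(0.9)\approx 0.99565$ to $\lim_{\beta_1\to 1^-} f = 1999/1996 \approx 1.0015$, which confirms your predicted finite limit near $1$. You instead certify $0.995 \le \lambda^* \le 1.002$ directly by polynomial positivity; this also works for the stated result but does not yield the intermediate-regime classification of the full proposition.
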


In this case, if $b^{-1}\mathcal{B}_{\text{simple}}$ is much larger than one
($b \ll \mathcal{B}_{\text{simple}}$), the approximate sharpness-bias
coefficient decreases with $\beta_1$. Thus, larger $\beta_1$ weakens the bias
toward flatter regions, often predicting worse generalization, and the
sharpness proxy favors taking $\beta_1$ as low as stability allows (for example
$0.9$). If $b^{-1}\mathcal{B}_{\text{simple}}$ is much smaller than one
($b \gg \mathcal{B}_{\text{simple}}$), the coefficient increases with
$\beta_1$. In that low-noise regime, the sharpness proxy favors moving
$\beta_1$ upward, with $\beta_1=\beta_2$ as a natural first choice when training
remains stable.

\py{fig_transformer_xl_wikitext2_adam_beta_2_sweeps_val_ppl()}
\begin{figure*}[htb!]
\centering
\begin{subfigure}[t]{0.320\textwidth}
\includegraphics[width=\linewidth]{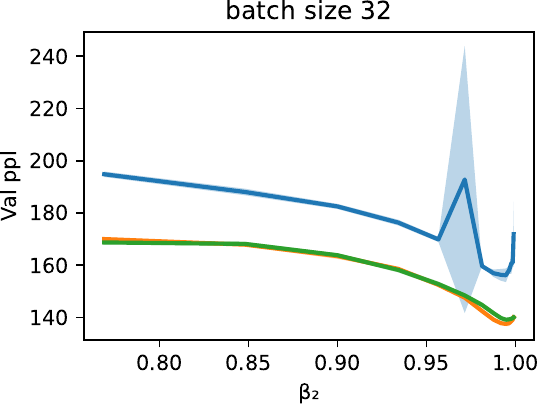}
\end{subfigure}
\hfill
\begin{subfigure}[t]{0.320\textwidth}
\includegraphics[width=\linewidth]{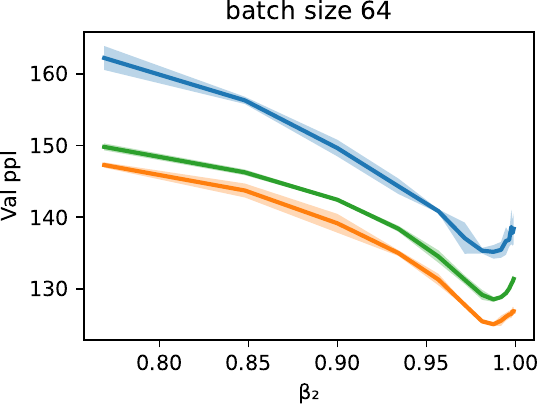}
\end{subfigure}
\hfill
\begin{subfigure}[t]{0.320\textwidth}
\includegraphics[width=\linewidth]{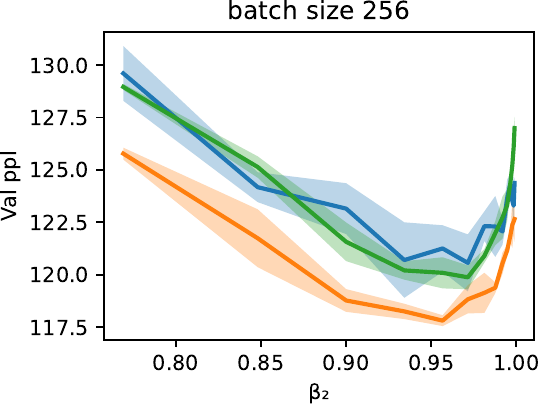}
\end{subfigure}
\bigskip
\centering
\begin{subfigure}[t]{0.320\textwidth}
\includegraphics[width=\linewidth]{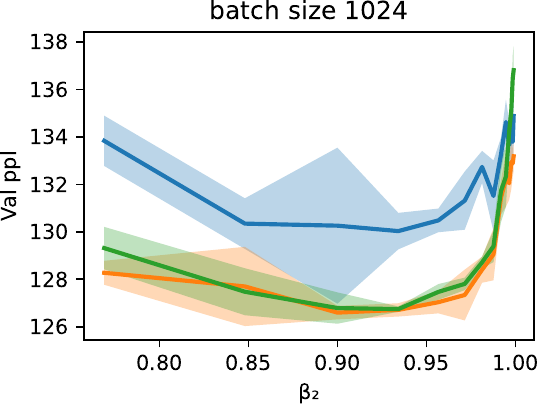}
\end{subfigure}
\hfill
\begin{subfigure}[t]{0.320\textwidth}
\includegraphics[width=\linewidth]{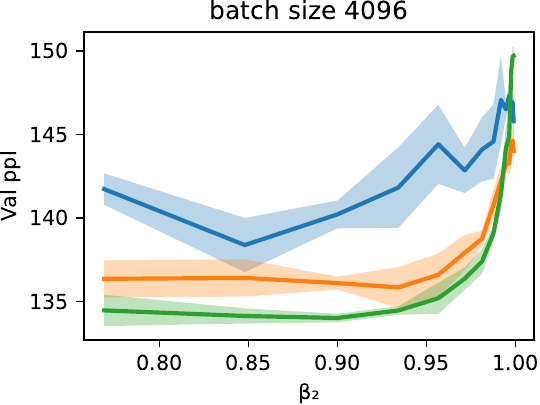}
\end{subfigure}
\hfill
\begin{subfigure}[t]{0.320\textwidth}
\includegraphics[width=\linewidth]{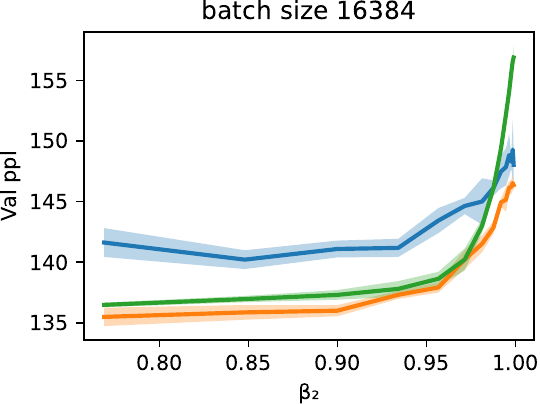}
\end{subfigure}
\caption{Minimal validation perplexity (before overfitting) of a small Transformer trained with Adam on WikiText-2 with different batch sizes, learning rates $\crl{{\color{mplblue} 10^{-3}}, {\color{mplorange} 10^{-3.5}}, {\color{mplgreen} 10^{-4}}}$, $\beta_1 = 0.9$ (averaged over three iterations).}
\label{fig:transformer_xl_wikitext2_adam_beta_2_sweeps_val_ppl}
\end{figure*}

\begin{figure*}[htb!]
\py{fig_transformer_xl_wikitext2_sharpness_snapshots()}
\centering
\begin{subfigure}[t]{0.320\textwidth}
\includegraphics[width=\linewidth]{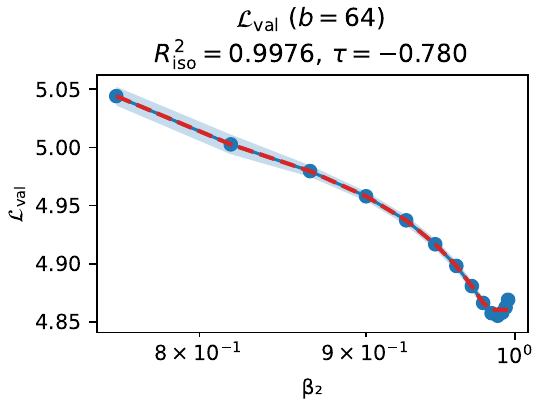}
\end{subfigure}
\hfill
\begin{subfigure}[t]{0.320\textwidth}
\includegraphics[width=\linewidth]{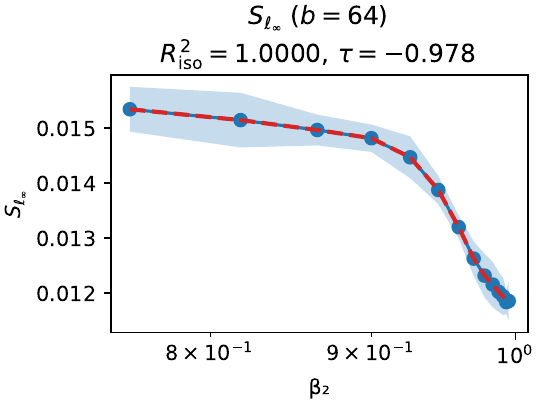}
\end{subfigure}
\hfill
\begin{subfigure}[t]{0.320\textwidth}
\includegraphics[width=\linewidth]{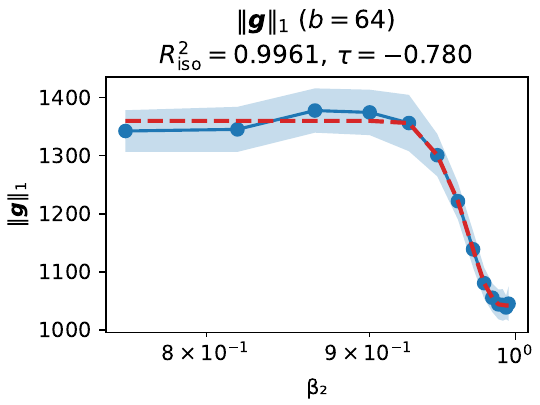}
\end{subfigure}
\bigskip
\centering
\begin{subfigure}[t]{0.320\textwidth}
\includegraphics[width=\linewidth]{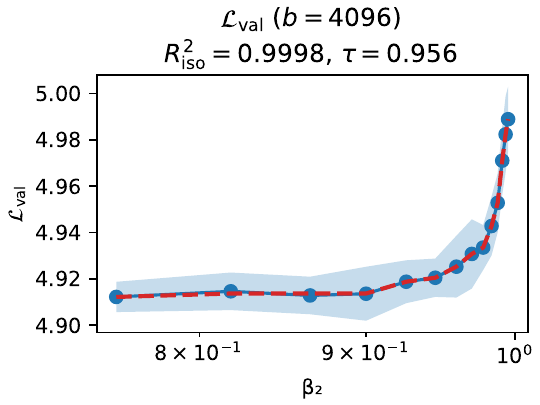}
\end{subfigure}
\hfill
\begin{subfigure}[t]{0.320\textwidth}
\includegraphics[width=\linewidth]{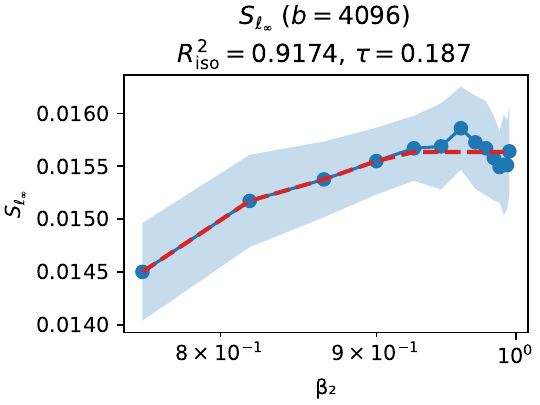}
\end{subfigure}
\hfill
\begin{subfigure}[t]{0.320\textwidth}
\includegraphics[width=\linewidth]{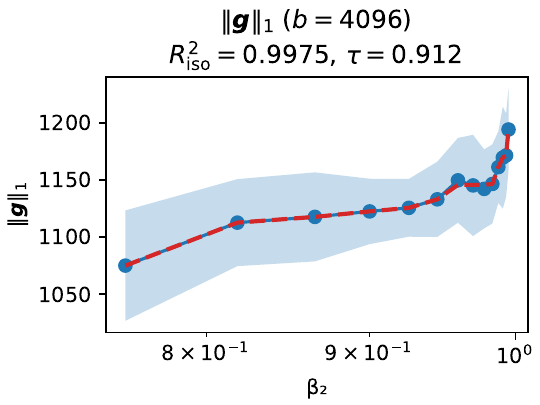}
\end{subfigure}
\caption{Validation loss, $\ell_\infty$-sharpness, $\ell_1$-norm of the gradient as a function of $\beta_2$ at the median epoch of overfitting, for a small Transformer trained with Adam on WikiText-2 at a small (top) and large (bottom) batch size (averaged across at least 16 iterations). Red dashed line denotes an isotonic regression fit; $\tau$ denotes Kendall's tau.}
\label{fig:transformer_xl_wikitext2_sharpness_snapshots}
\end{figure*}

\begin{figure*}[htb!]
\py{fig_llama3_dclm_sharpness_snapshots()}
\centering
\begin{subfigure}[t]{0.320\textwidth}
\includegraphics[width=\linewidth]{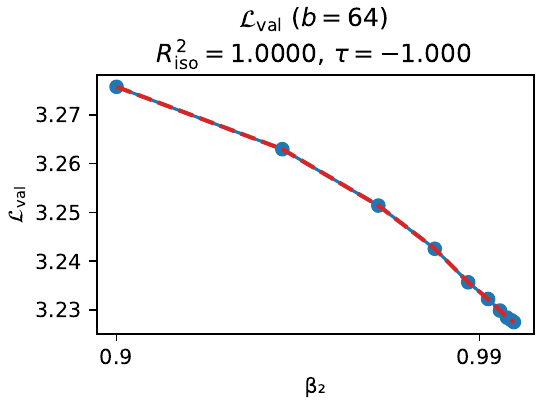}
\end{subfigure}
\hfill
\begin{subfigure}[t]{0.320\textwidth}
\includegraphics[width=\linewidth]{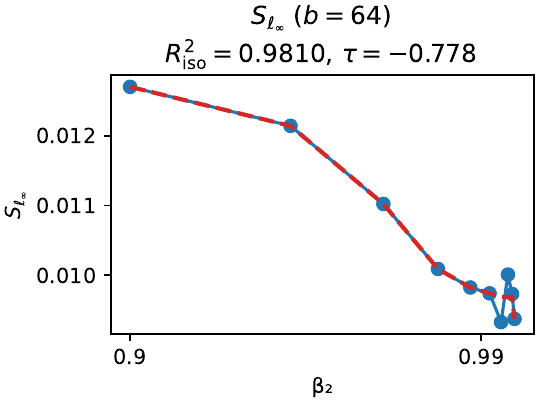}
\end{subfigure}
\hfill
\begin{subfigure}[t]{0.320\textwidth}
\includegraphics[width=\linewidth]{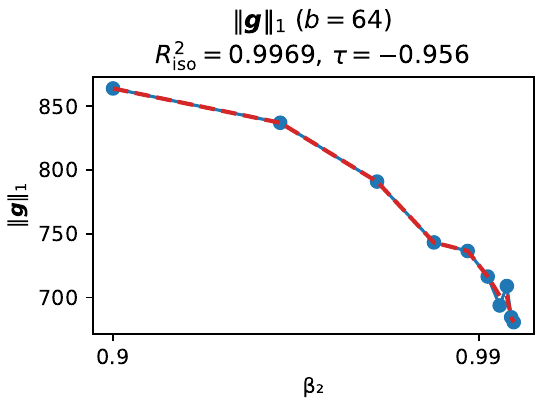}
\end{subfigure}
\bigskip
\centering
\begin{subfigure}[t]{0.320\textwidth}
\includegraphics[width=\linewidth]{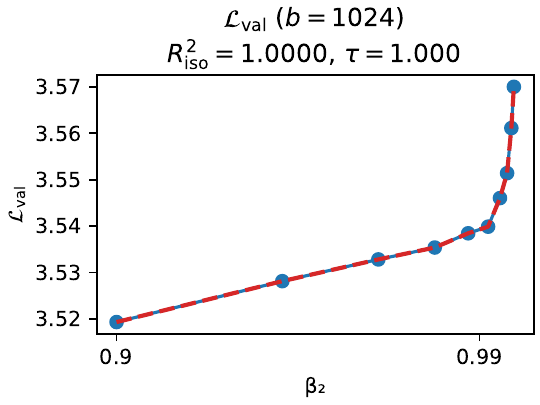}
\end{subfigure}
\hfill
\begin{subfigure}[t]{0.320\textwidth}
\includegraphics[width=\linewidth]{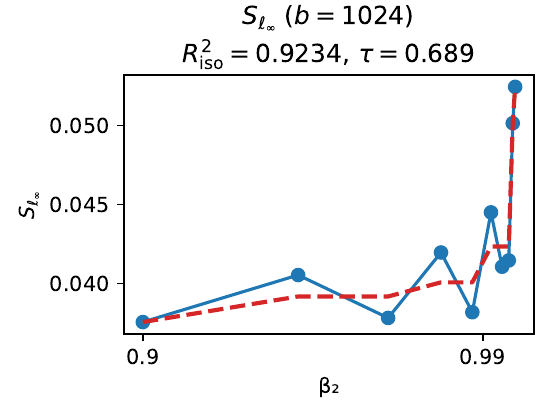}
\end{subfigure}
\hfill
\begin{subfigure}[t]{0.320\textwidth}
\includegraphics[width=\linewidth]{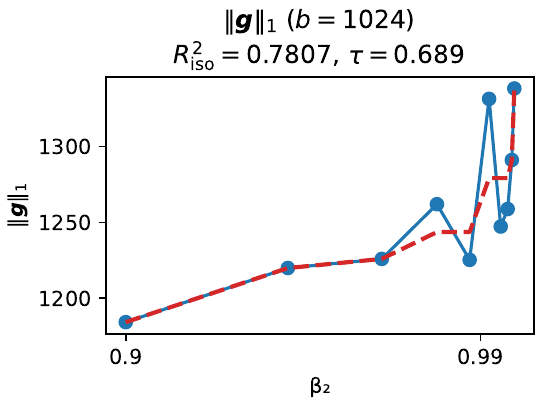}
\end{subfigure}
\caption{Validation loss, $\ell_\infty$-sharpness, $\ell_1$-norm of the gradient as a function of $\beta_2$ at the end of training, for a 280\,M-parameter Llama3 trained with AdamW on DCLM at a small (top) and large (bottom) batch size matched on total token budget. Red dashed line denotes an isotonic regression fit; $\tau$ denotes Kendall's tau.}
\label{fig:llama3_dclm_sharpness_snapshots}
\end{figure*}


\paragraph{Takeaway} If the batch size is much smaller than $\mathcal{B}_{\text{simple}}$, take $\beta_1$ much smaller than $\beta_2$  (e.\,g., the default values $\beta_1 = 0.9$, $\beta_2 = 0.999$ are a reasonable first choice). If the batch size is much larger than $2 \mathcal{B}_{\text{simple}}$, take $\beta_1 = \beta_2$ (e.\,g., $\beta_1 = \beta_2 = 0.9$ is a natural first choice).

Because the derivation
uses simplifications, the predictions are directional rather than precisely
quantitative. In particular, the main conclusion is about the scale of the
transition, controlled by $\mathcal{B}_{\text{simple}}$ (or
$2\mathcal{B}_{\text{simple}}$), which is not difficult to estimate
\citep{mccandlish2018empiricalmodellarge}. Thus, the rule of thumb can guide
practical choices of $\beta_1$ and $\beta_2$ without requiring very large grids.

\section{Experiments}\label{sec:experiments}

\paragraph{Small Transformer overfitting on a small dataset}
We train a small Transformer from \citep{Dai2019TransformerXLAL} on WikiText-2 \citep{merity2017pointer} following \citet{kunstner2023noise}.
We fix the default value $\beta_1 = 0.9$, and sweep $\beta_2$ for different batch sizes and learning rates. Running sufficiently many epochs to let the model overfit, we plot the minimal validation perplexity achieved depending on $\beta_2$.
The results in \cref{fig:transformer_xl_wikitext2_adam_beta_2_sweeps_val_ppl} show that in small-batch Adam, larger $\beta_2$ mostly helps the model generalize better (decreases minimal validation perplexity), and this behavior smoothly transitions into the opposite as the batch size increases.
To track the mechanism, we also plot in \cref{fig:transformer_xl_wikitext2_sharpness_snapshots} non-adaptive $\ell_{\infty}$ sharpness evaluated exactly, along with its approximation $\norm{\vg}_1$ for a selected learning rate and two (small and large) batch sizes. Additional figures, including a sweep of $\beta_1$ at a fixed $\beta_2 = 0.999$, are provided in \cref{sec:further-evidence-and-experiment-details}.


\paragraph{Llama-3 on DCLM}
We train a modern Llama3 \citep{dubey2024llama} model with 280\,M parameters (no weight tying) on DCLM \citep{li2024datacomplm} with sequence length 4096 for about 25\,B tokens (90 tokens per parameter).
The learning rate is warmed up
linearly for the first $10\%$ of the training run and then decayed at a cosine schedule to $10\%$ of the peak value.
The optimizer is AdamW \citep{loshchilov2019decoupledweightdecayregularization} with decoupled weight decay $\lambda = 0.1$ applied to all but the normalization parameters.
No batches of training data are repeated, and the number of iterations is chosen to match the token count.
In \cref{fig:llama3_dclm_sharpness_snapshots}, we plot the loss, $\ell_{\infty}$ sharpness and 1-norm of the full-batch gradient on a fixed set of 1024 validation sequences (full-batch calculation on the training data is infeasible), for a fixed peak learning rate and two (small and large) pretraining batch sizes, sweeping $\beta_2$.
We observe the familar monotonicity reversion.
Although train/validation performance gap is not applicable, even in single-epoch training the trends may be important for post-training or post-quantization performance \citep{watts2026sharpnessaware,springer2025overtrained}.

\section*{Acknowledgments}

Cattaneo gratefully acknowledges financial support from the National Science Foundation through DMS-2210561 and SES-2241575. We acknowledge the Princeton Research Computing resources, coordinated by the Princeton Institute for Computational Science and Engineering (PICSciE) and the Office of Information Technology's Research Computing.

\bibliography{beta}

\newpage
\appendix

\section{Further Evidence and Experiment Details}\label{sec:further-evidence-and-experiment-details}


\subsection{Transformer-XL on WikiText-2: $\beta_2$ Sweep with $\beta_1 = 0.9$ Fixed}

Transformer-XL \citep{Dai2019TransformerXLAL} with about 55\,M parameters is trained on WikiText-2 \citep{merity2017pointer} following \citet{kunstner2023noise} until after overfitting on the training set (when the validation loss starts rising).
Adam is used without weight decay, $\epsilon = 10^{-6}$.
The learning rate is constant $10^{-4}$.
For $b = 64$, 16 iterations completed and the sharpness metrics are plotted at epoch 9 (median overfitting epoch 9, mean 8.5); for $b = 4096$, 32 iterations completed and the sharpness metrics are plotted at epoch 100 (median overfitting epoch 103, mean 99.2).
We plot the validation loss, $\ell_{\infty}$ sharpness calculated by projected gradient ascent and noise metrics at the last step, along with scatterplots showing correlations; $\rho$ in the titles denotes the Pearson correlation coefficient and $\tau$ denotes Kendall's tau.
Validation loss does not use EMA but other metrics use EMA with parameter $\beta = 0.99$.

\paragraph{Sharpness Metrics at Batch Size 64}
{\setlength{\tabcolsep}{0pt}%
\begin{longtable}{ccc}
  \includegraphics[width=0.320\textwidth]{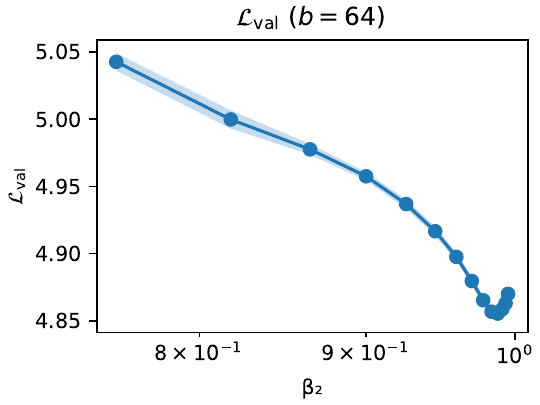}
  &
  \includegraphics[width=0.320\textwidth]{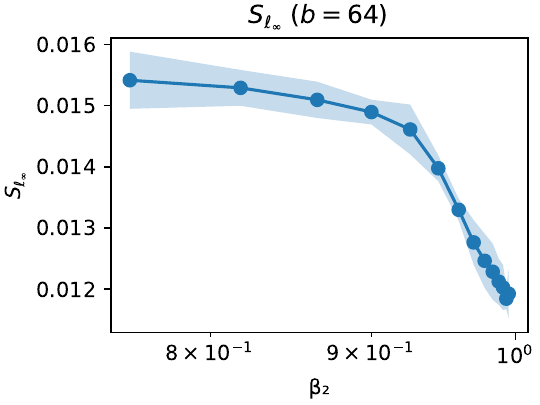}
  &
  \includegraphics[width=0.320\textwidth]{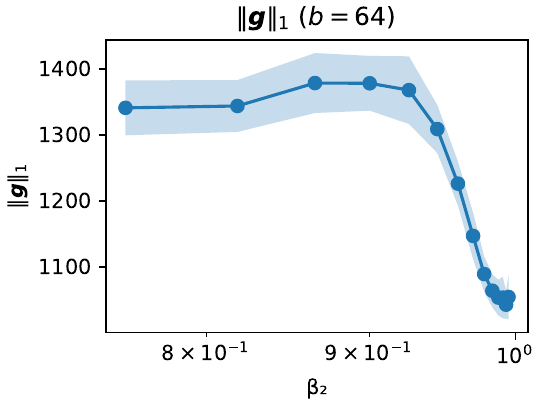}
  \\
  \includegraphics[width=0.320\textwidth]{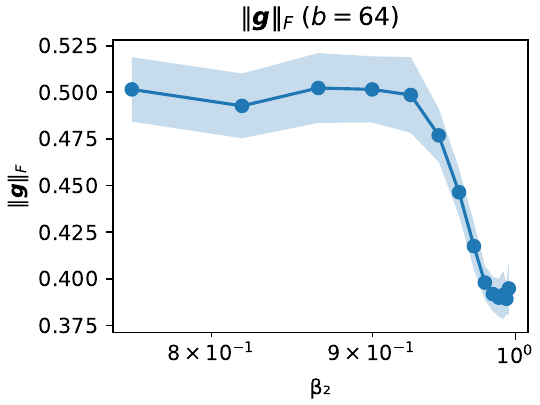}
  &
\includegraphics[width=0.320\textwidth]{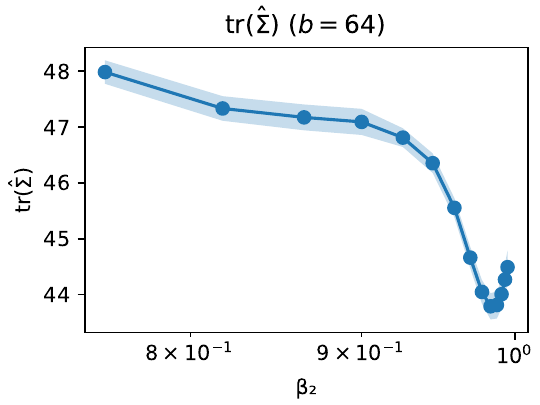}
  & \includegraphics[width=0.320\textwidth]{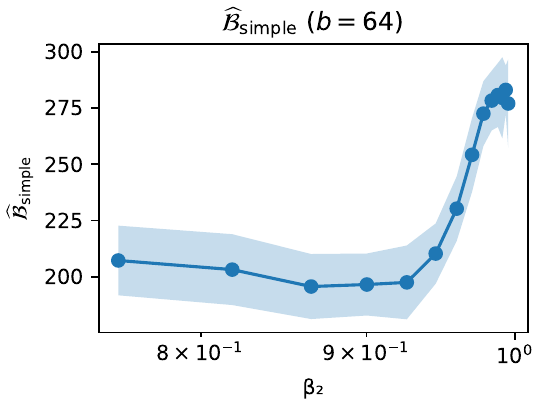}
  \\
  \includegraphics[width=0.320\textwidth]{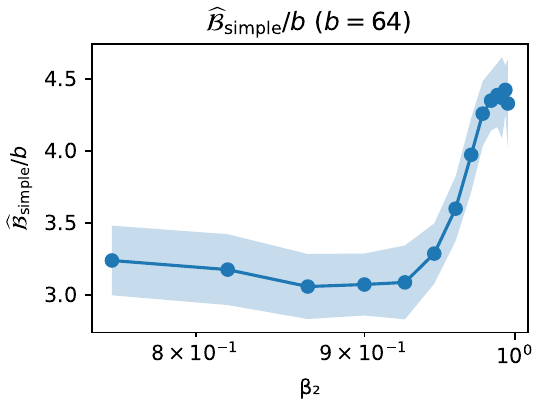}
  & \includegraphics[width=0.320\textwidth]{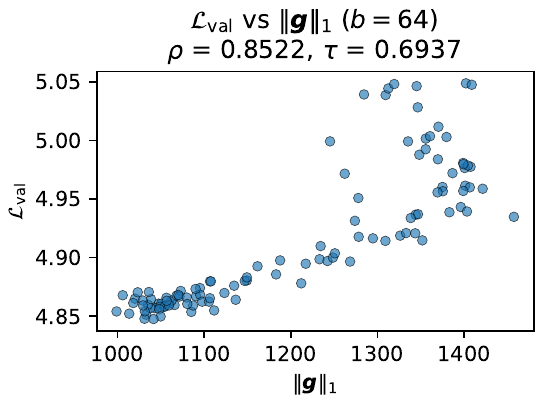}
  & \includegraphics[width=0.320\textwidth]{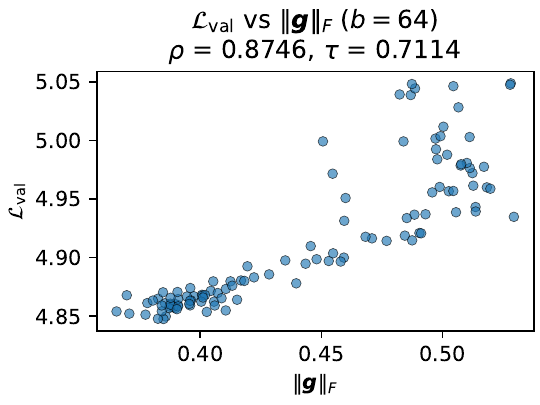}
  \\ \includegraphics[width=0.320\textwidth]{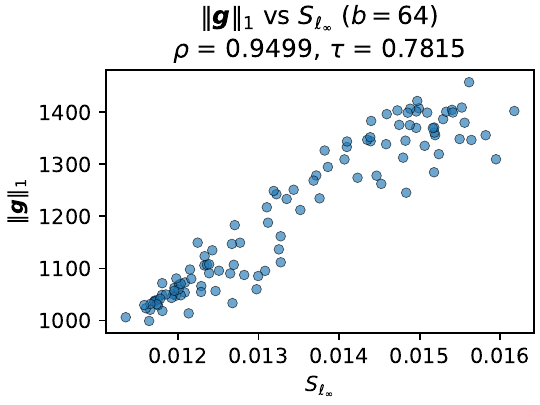}
  & \includegraphics[width=0.320\textwidth]{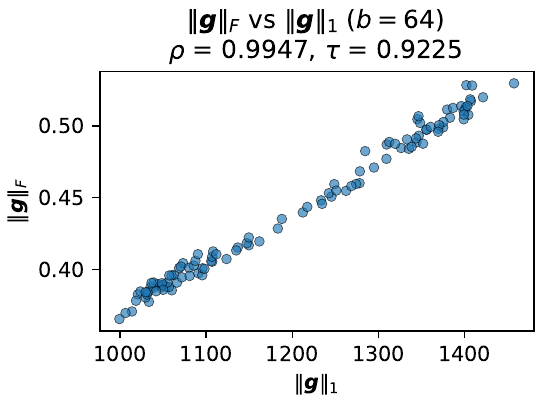}
  &
\end{longtable}}

\paragraph{Sharpness Metrics at Batch Size 4096}
{\setlength{\tabcolsep}{0pt}%
\begin{longtable}{ccc}
  \includegraphics[width=0.320\textwidth]{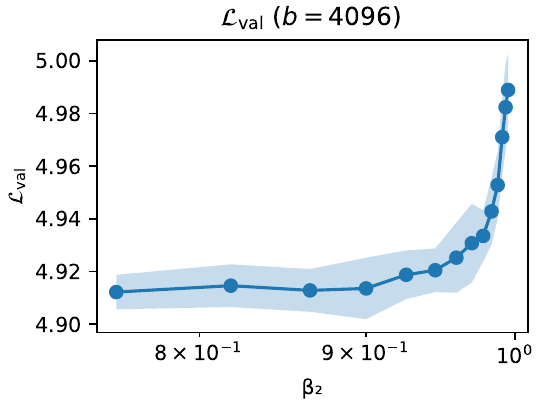}
  &
  \includegraphics[width=0.320\textwidth]{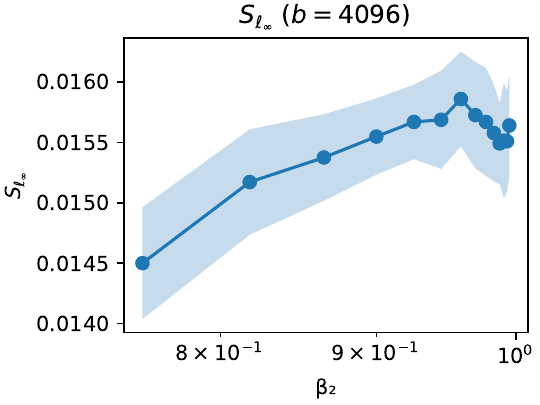}
  &
  \includegraphics[width=0.320\textwidth]{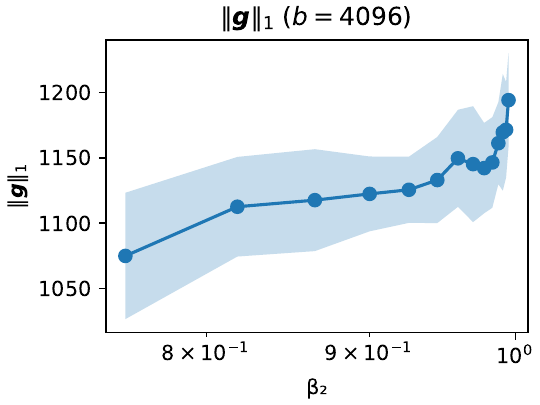}
  \\
  \includegraphics[width=0.320\textwidth]{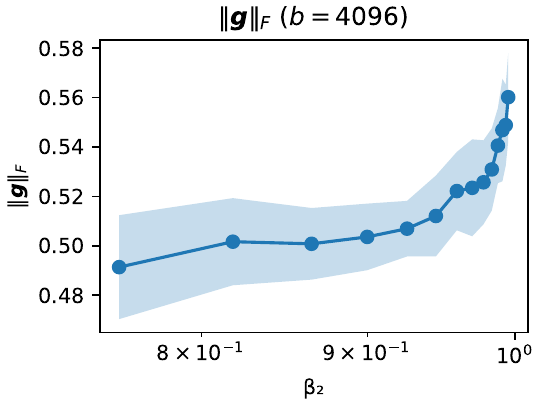}
  &
  \includegraphics[width=0.320\textwidth]{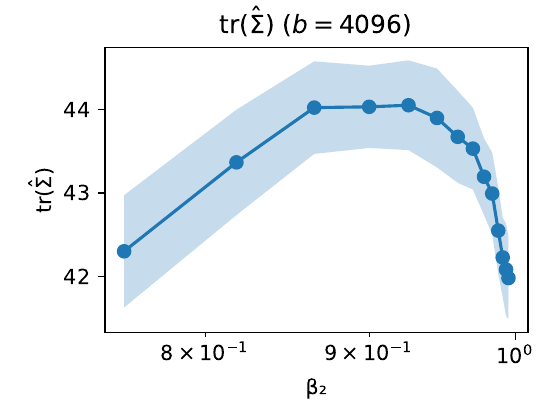}
  & \includegraphics[width=0.320\textwidth]{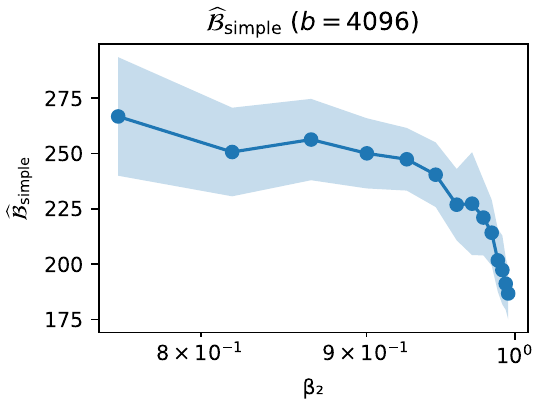}
  \\
  \includegraphics[width=0.320\textwidth]{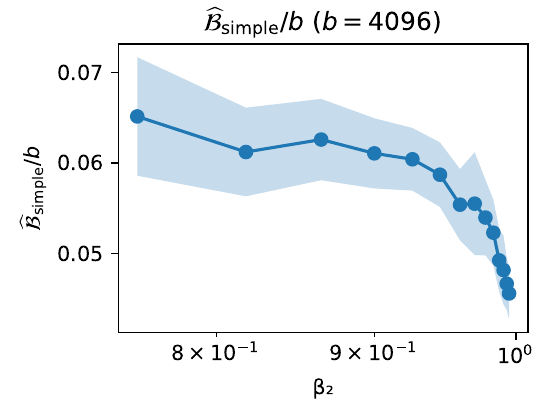}
  & \includegraphics[width=0.320\textwidth]{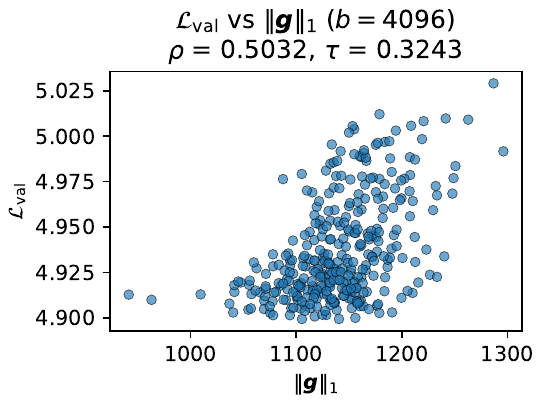}
  & \includegraphics[width=0.320\textwidth]{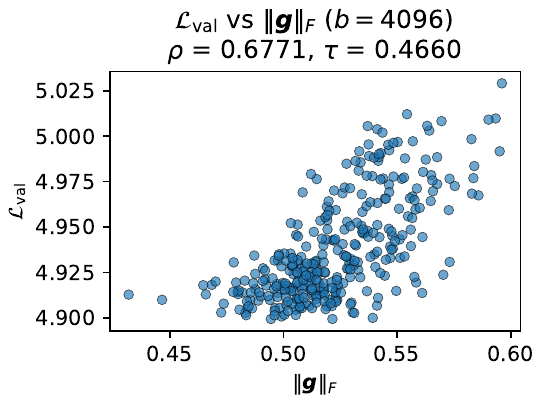}
  \\ \includegraphics[width=0.320\textwidth]{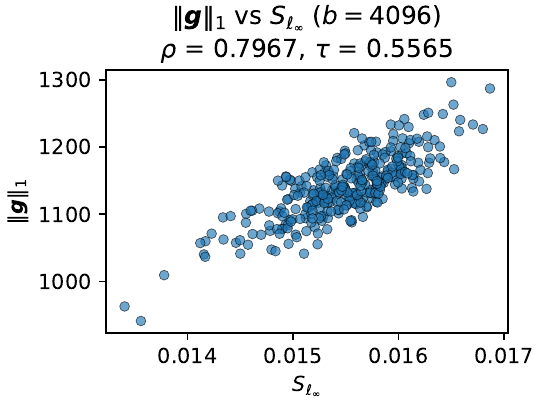}
  & \includegraphics[width=0.320\textwidth]{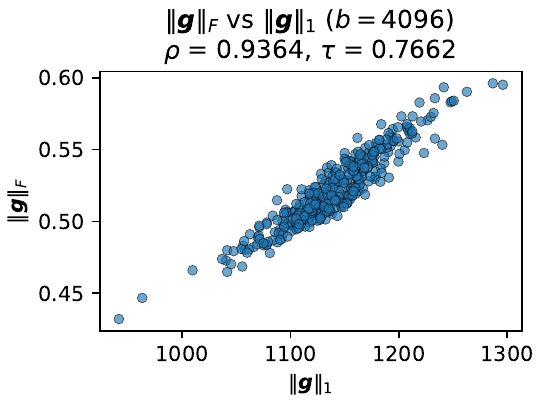}
  &
\end{longtable}}

\subsection{Transformer-XL on WikiText-2: $\beta_1$ Sweep with $\beta_2 = 0.999$ Fixed}

\subsubsection{Minimal Validation Perplexity at a Large Set of Batch Sizes}
We plot the minimal validation perplexity achieved (by definition, it is exactly at the point of overfitting), with learning rates $\crl{{\color{mplblue} 10^{-3.5}}, {\color{mplorange} 10^{-4}}, {\color{mplgreen} 10^{-4.5}}}$ and different batch sizes. The monotonicity trends as a function of $\beta_1$ largely revert as the batch size increases.

{\setlength{\tabcolsep}{0pt}%
\begin{longtable}{ccc}
  \includegraphics[width=0.320\textwidth]{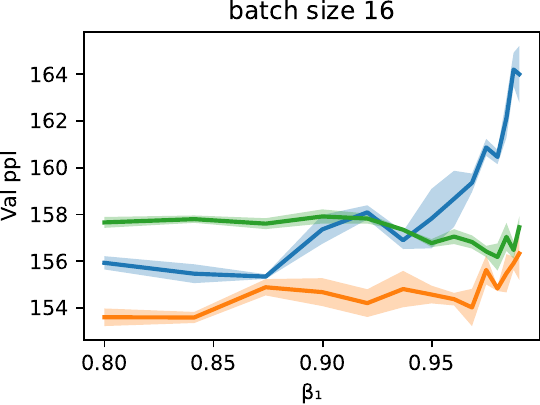}
  &
  \includegraphics[width=0.320\textwidth]{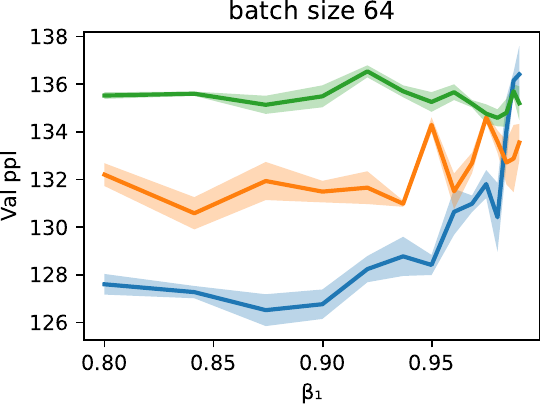}
  &
  \includegraphics[width=0.320\textwidth]{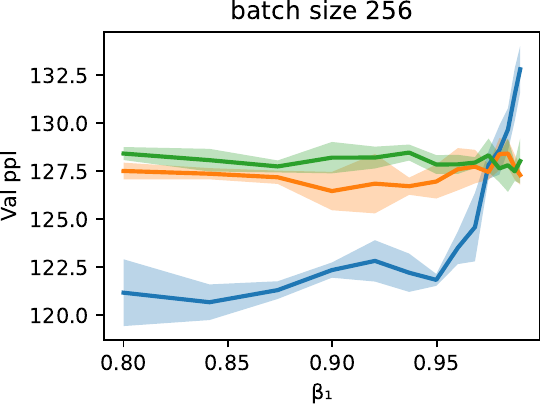}
  \\
  \includegraphics[width=0.320\textwidth]{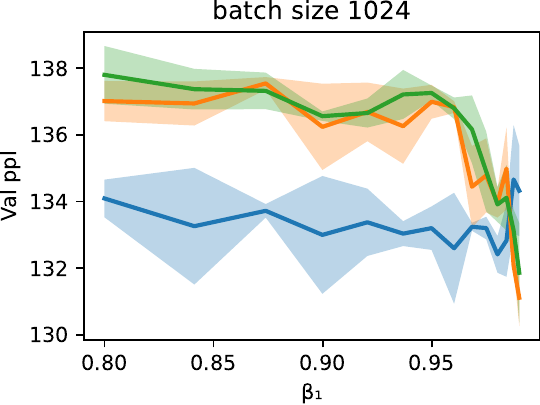}
  &
  \includegraphics[width=0.320\textwidth]{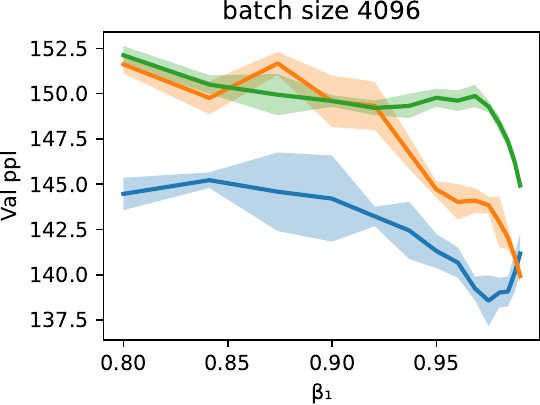}
  &
  \includegraphics[width=0.320\textwidth]{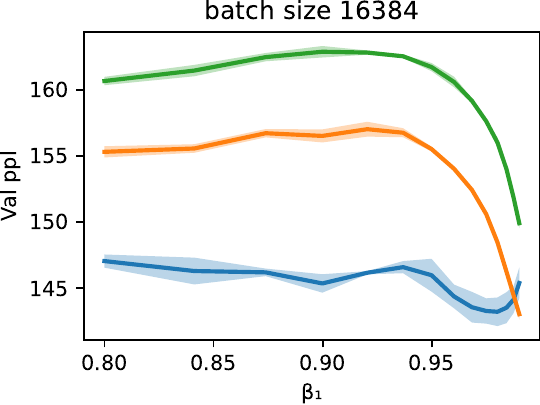}
\end{longtable}}

Similarly to the $\beta_2$ sweep, we also plot below the sharpness metrics.

\subsubsection{Sharpness Metrics at Batch Size 8}

{\setlength{\tabcolsep}{0pt}%
\begin{longtable}{ccc}
  \includegraphics[width=0.320\textwidth]{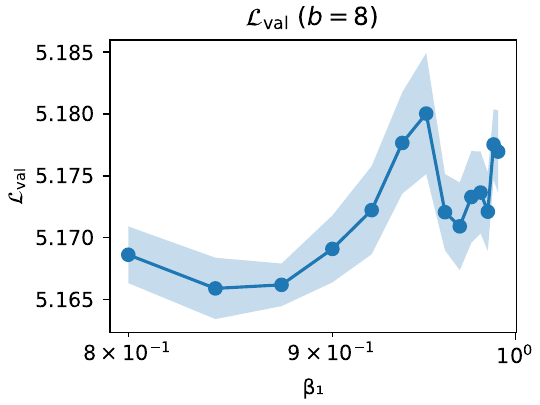}
  &
  \includegraphics[width=0.320\textwidth]{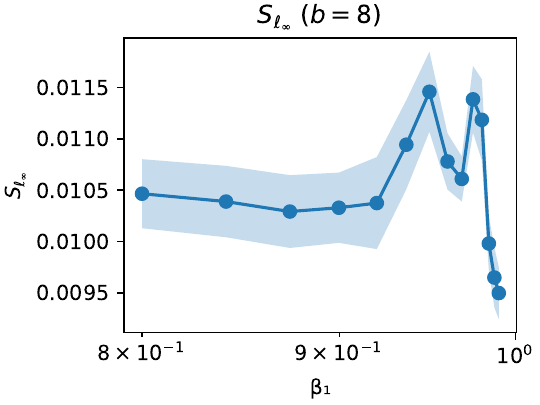}
  &
  \includegraphics[width=0.320\textwidth]{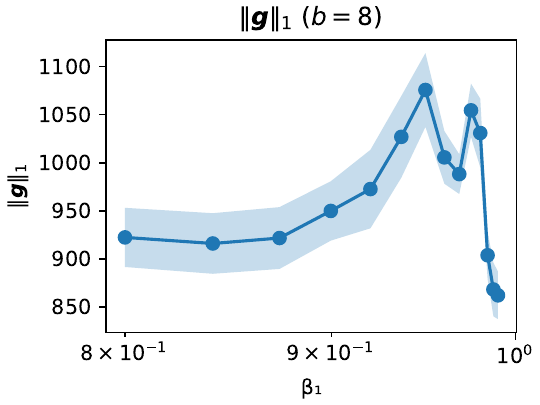}
  \\
  \includegraphics[width=0.320\textwidth]{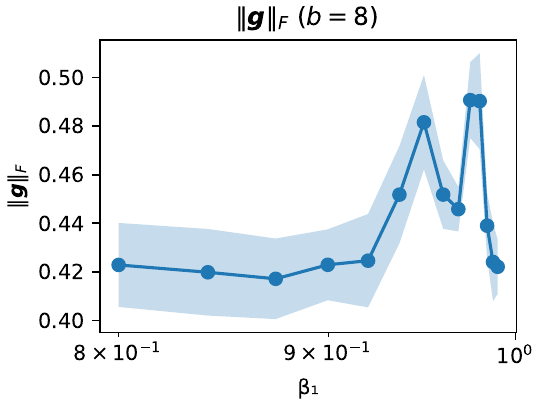}
  &
  \includegraphics[width=0.320\textwidth]{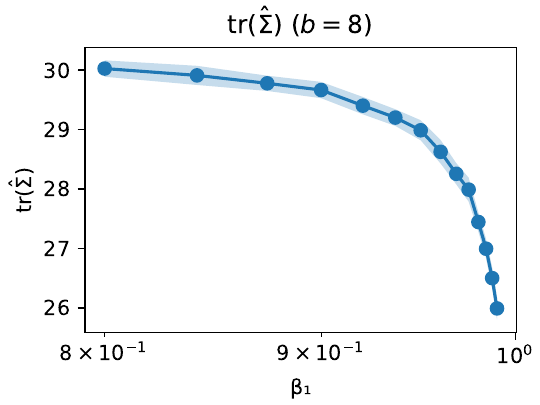}
  & \includegraphics[width=0.320\textwidth]{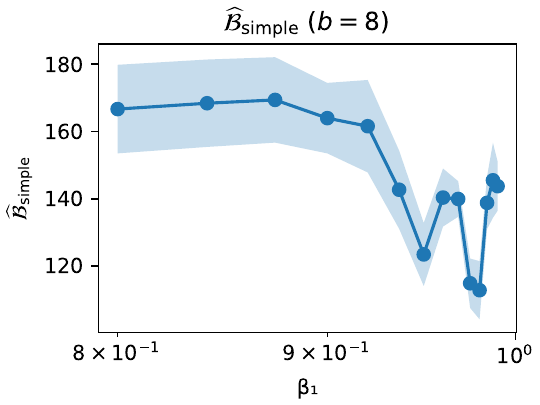}
  \\
  \includegraphics[width=0.320\textwidth]{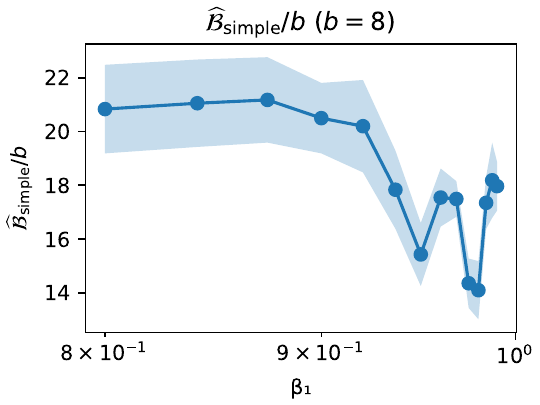}
  & \includegraphics[width=0.320\textwidth]{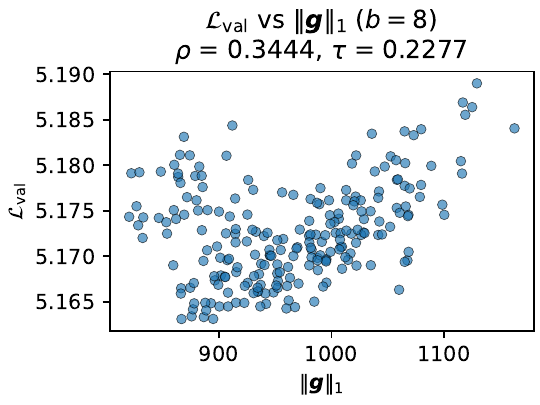}
  & \includegraphics[width=0.320\textwidth]{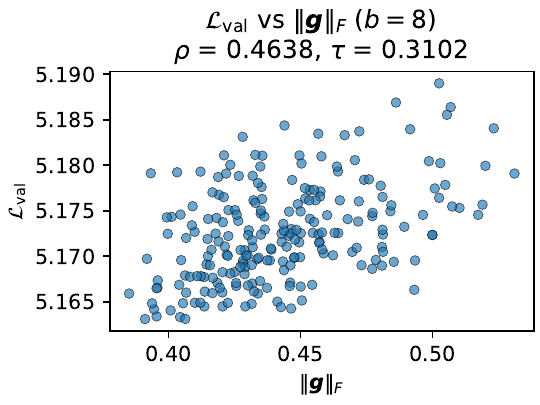}
  \\ \includegraphics[width=0.320\textwidth]{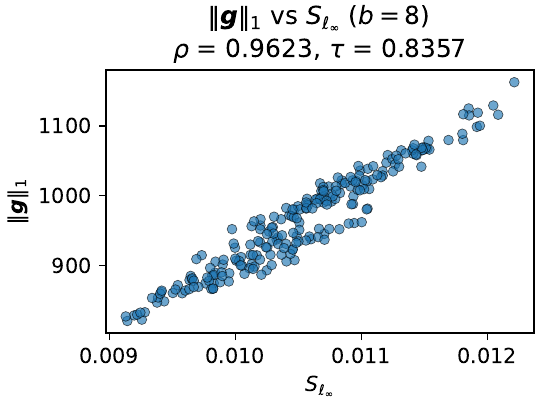}
  & \includegraphics[width=0.320\textwidth]{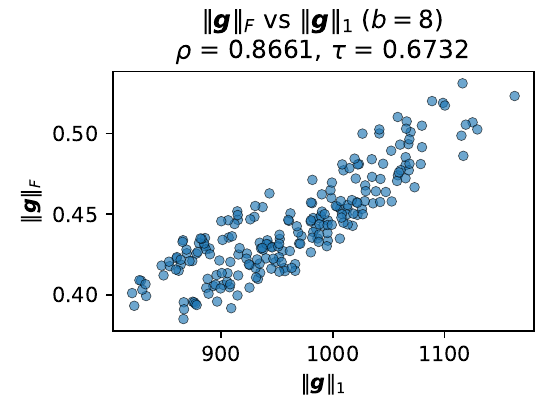}
  &
\end{longtable}}

\subsubsection{Sharpness Metrics at Batch Size 4096}

{\setlength{\tabcolsep}{0pt}%
\begin{longtable}{ccc}
  \includegraphics[width=0.320\textwidth]{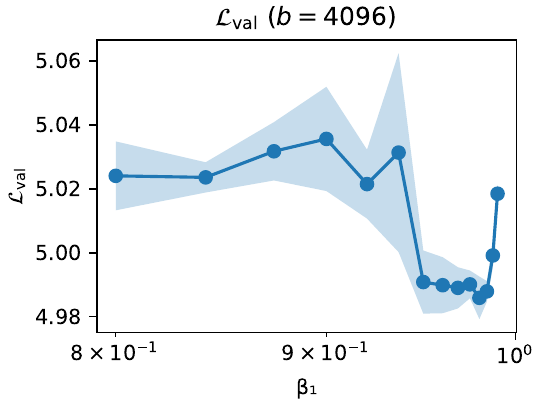}
  &
  \includegraphics[width=0.320\textwidth]{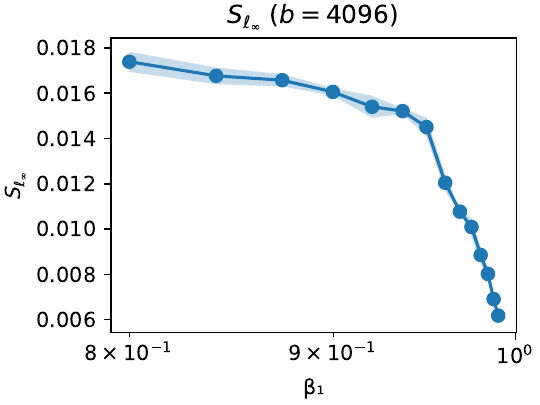}
  &
  \includegraphics[width=0.320\textwidth]{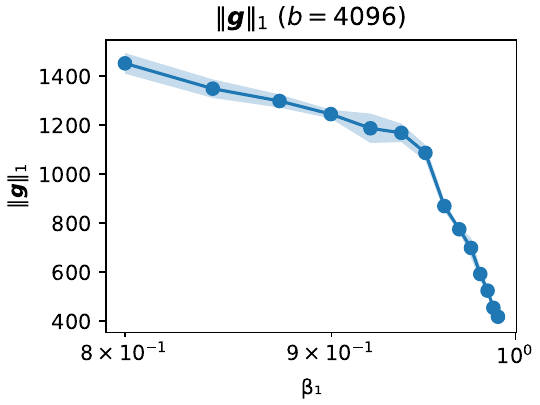}
  \\
  \includegraphics[width=0.320\textwidth]{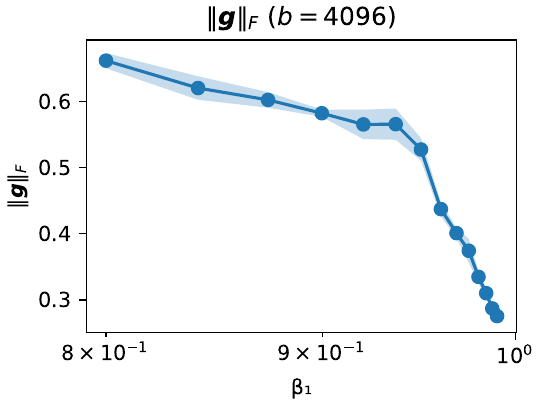}
  &
  \includegraphics[width=0.320\textwidth]{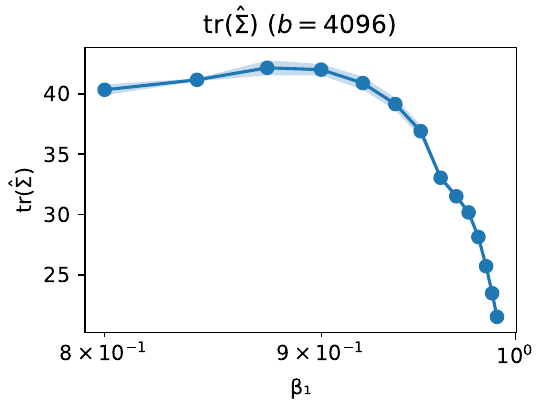}
  & \includegraphics[width=0.320\textwidth]{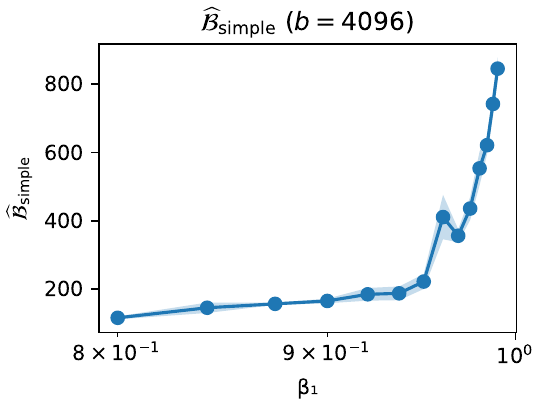}
  \\
  \includegraphics[width=0.320\textwidth]{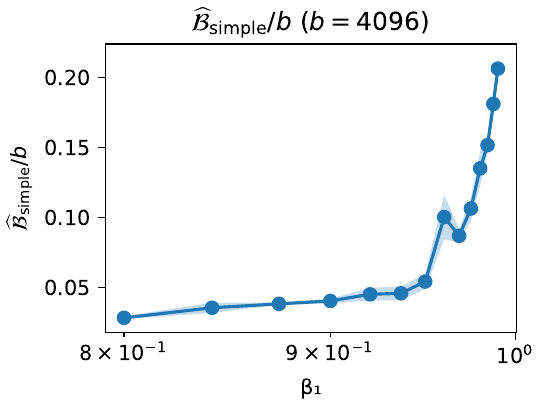}
  & \includegraphics[width=0.320\textwidth]{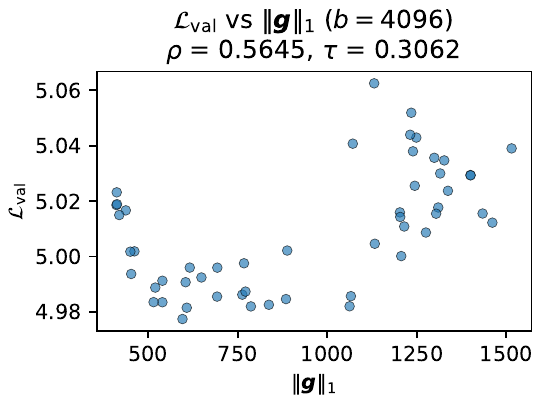}
  & \includegraphics[width=0.320\textwidth]{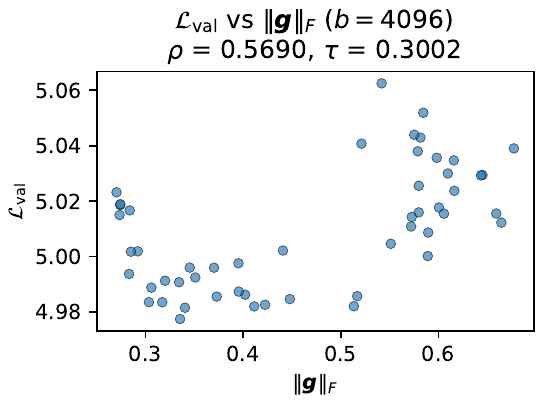}
  \\ \includegraphics[width=0.320\textwidth]{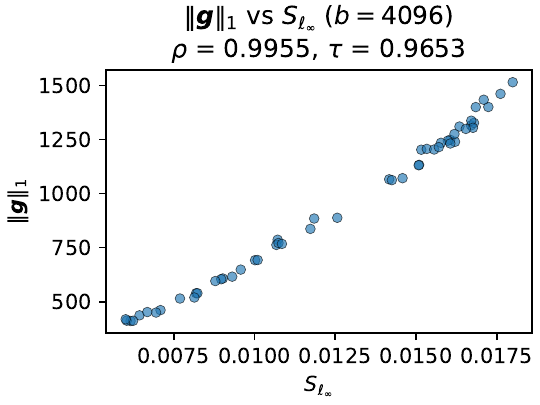}
  & \includegraphics[width=0.320\textwidth]{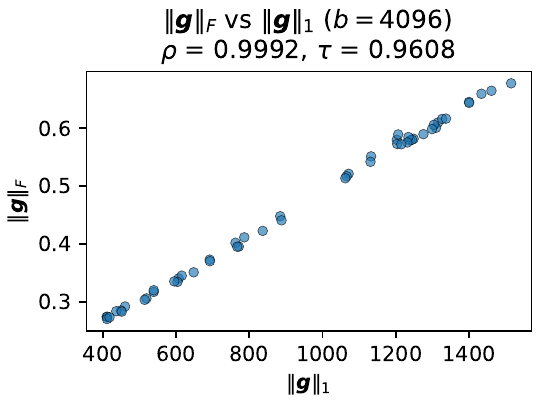}
  &
\end{longtable}}

\subsection{Llama3 on DCLM}

Llama3 with 12 layers and 12 heads, head dimension 64, sequence length 4096 is trained on DCLM for 6\,144\,000 sequences (25\,B tokens, or about 90 tokens per parameter).
No weight tying is used, and the total number of parameters is around 280\,M.
The dataset is tokenized with the Llama 3.1 tokenizer.
We isolate the first $2^{24}$ documents of DCLM as a validation set,
and choose the first 1024 sequences (4\,M tokens) for calculating the validation loss and other metrics.
AdamW is used with $\epsilon = 10^{-8}$ and decoupled weight decay $\lambda = 0.1$ (PyTorch parametrization) applied to all but the normalization parameters.
The number of training steps is chosen to make a full pass over the sequences
(96\,000 steps for $b = 64$ and 6\,000 steps for $b = 1024$). We plot validation loss and sharpness metrics at the last step, along with scatterplots showing correlations
($\rho$ in the title denotes the Pearson correlation coefficient, $\tau$ denotes Kendall's tau).

{\setlength{\tabcolsep}{0pt}%
  \begin{longtable}{ccc}
    \includegraphics[width=0.320\textwidth]{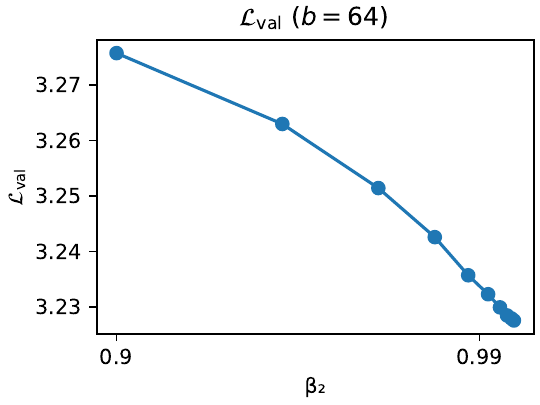}
  &
  \includegraphics[width=0.320\textwidth]{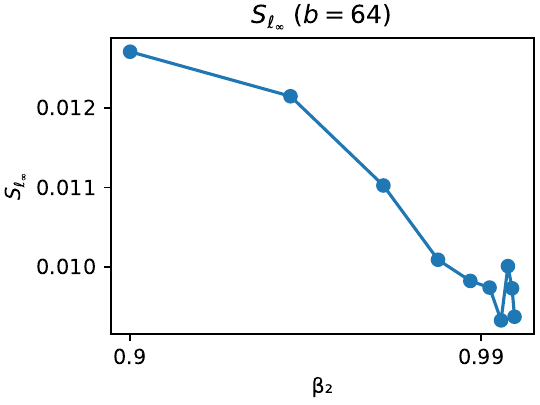}
  &
  \includegraphics[width=0.320\textwidth]{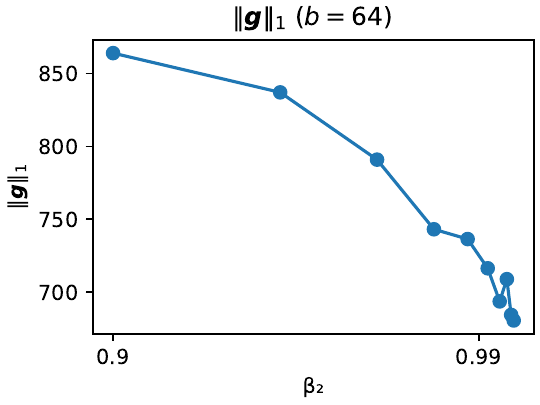}
  \\
  \includegraphics[width=0.320\textwidth]{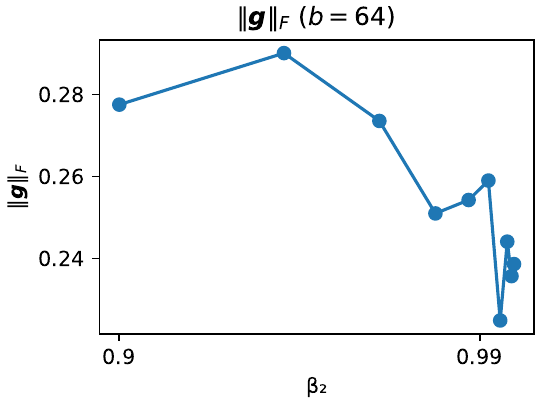}
  &
    \includegraphics[width=0.320\textwidth]{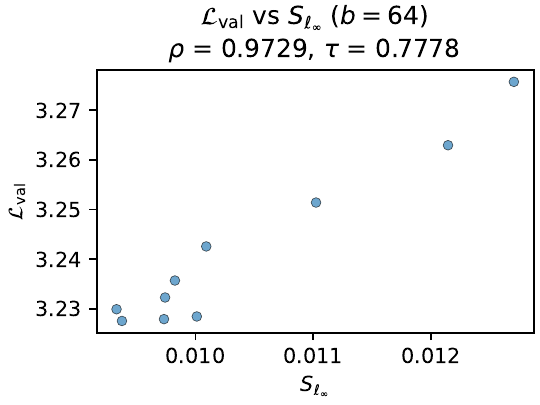}
  &
    \includegraphics[width=0.320\textwidth]{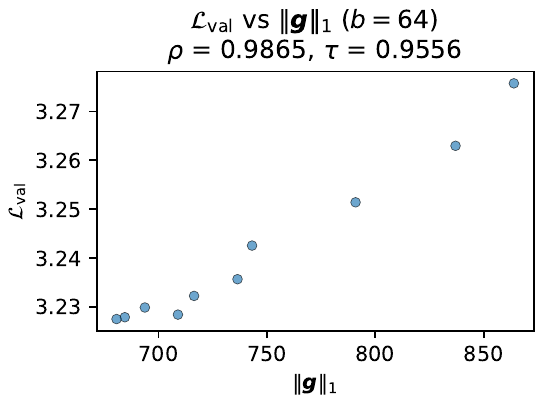}
  \\
  \includegraphics[width=0.320\textwidth]{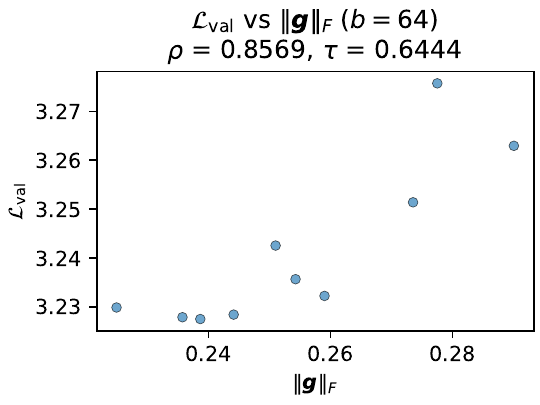}
  &
  \includegraphics[width=0.320\textwidth]{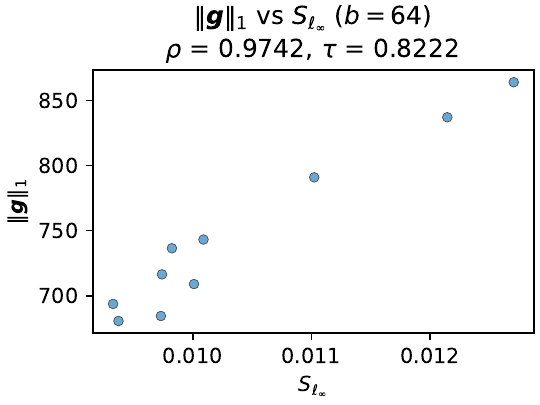}
  &
    \includegraphics[width=0.320\textwidth]{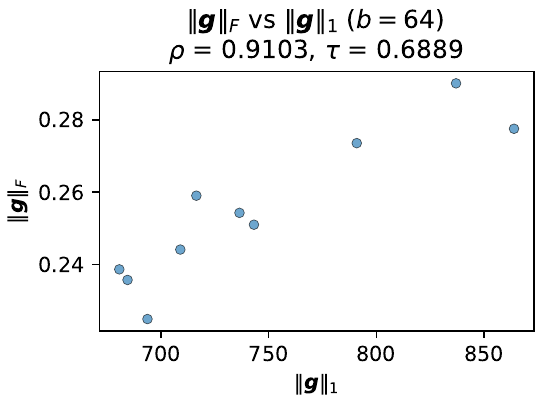}
 \\
  \includegraphics[width=0.320\textwidth]{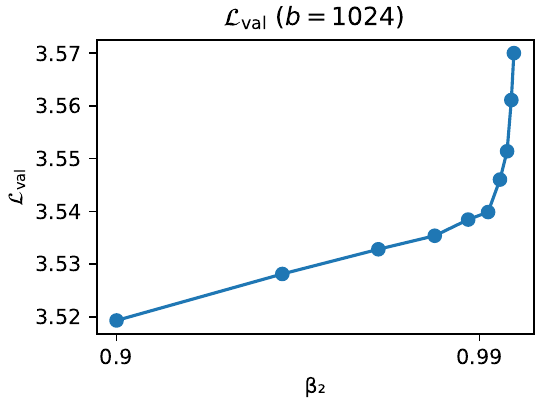}
  &
  \includegraphics[width=0.320\textwidth]{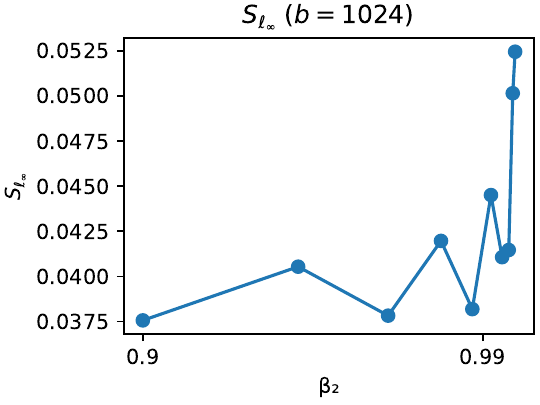}
  &
  \includegraphics[width=0.320\textwidth]{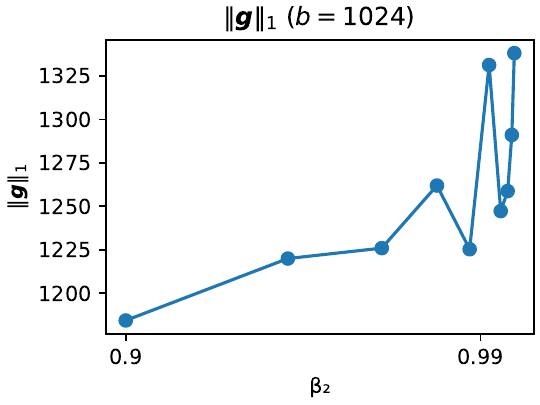}
  \\
  \includegraphics[width=0.320\textwidth]{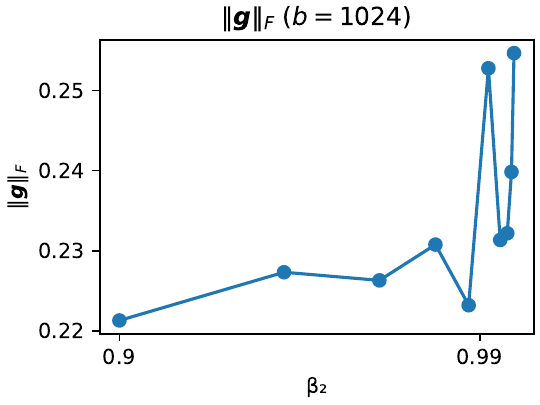}
  &
    \includegraphics[width=0.320\textwidth]{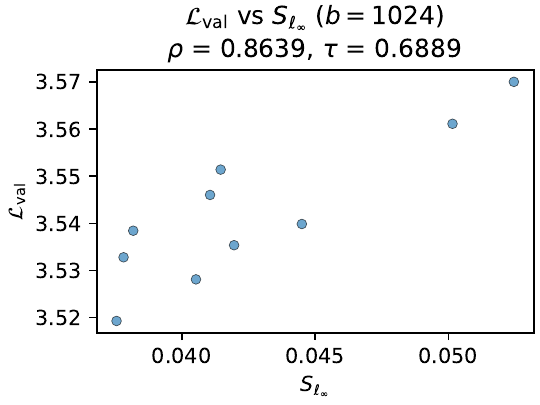}
  &
    \includegraphics[width=0.320\textwidth]{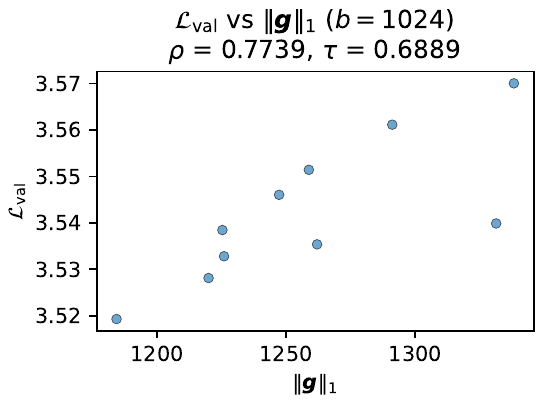}
  \\
  \includegraphics[width=0.320\textwidth]{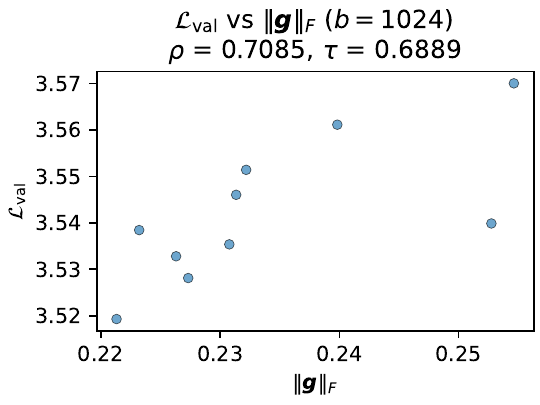}
  &
  \includegraphics[width=0.320\textwidth]{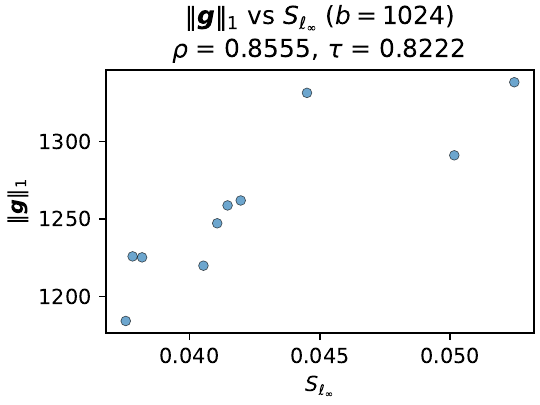}
  &
  \includegraphics[width=0.320\textwidth]{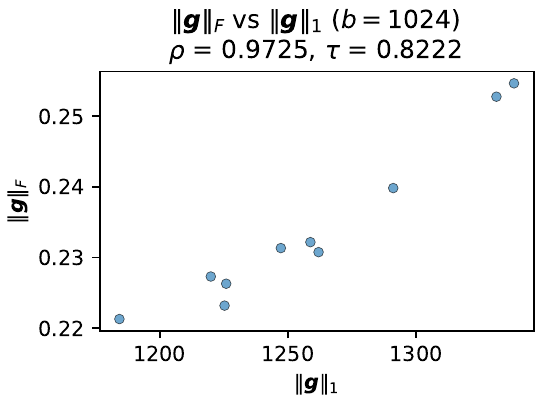}
\end{longtable}}

\section{Main Theorems}\label{sec:general-theorems}

We start with the full statement of the memory removal step (\cref{sec:removing-memory}), from which technical expressions were omitted in the main part of this article.

\begin{theorem}[Memory removal]\label{thm:applying-memory-removal}
Let $\Theta \subset \mathbb{R}^{\dim \vtheta}$ be an open convex domain of parameters $\vtheta$ of interest, and assume $\ell_r(\cdot) \in \mathcal{C}^3(\Theta; \mathbb{R})$ with
\begin{equation*}
\sup_{\vtheta \in \Theta} \sup_N \max_{1 \leq r \leq N} \max_{1 \leq s \leq 3} \norm{\nabla^s \ell_r(\vtheta)} < \infty,
\end{equation*}
where $\norm{\cdot}$ is the corresponding operator norm.
Let the iteration $\crl[\big]{\vtheta_t}_{t = 0}^{\infty}$ be given by Adam, \cref{def:adam},
and the iteration $\crl[\big]{\tilde{\vtheta}_t}_{t = 0}^{\infty}$ be given by
\begin{align*}
  &\tilde{\vtheta}_{t + 1} = \tilde{\vtheta}_t
    - \eta \, \main{t}(\tilde{\vtheta}_t) - \eta^2 \, \corr{t}(\tilde{\vtheta}_t), \quad \tilde{\vtheta}_0 = \vtheta_0,\\
  \text{with} \quad &\mainsc{j}{t}(\btheta) := \frac{\sum_{k = 0}^t \mu_{t, k} \partial_j \mathcal{L}_k(\btheta)}{\sqrt{\sum_{k = 0}^t \nu_{t, k} \abs{\partial_j \mathcal{L}_k(\btheta)}^2 + \epsilon}},\numberthis\label{eq:main-term-adam-def}\\
  &\corrsc{j}{t}(\btheta) := \frac{L_{t, j}(\btheta)}{R_{t, j}(\btheta)} - \frac{M_{t, j}(\btheta) P_{t, j}(\btheta)}{R_{t, j}(\btheta)^3},\numberthis\label{eq:corr-term-adam-def}\\
  &M_{t, j}(\btheta) := \sum_{k = 0}^t \mu_{t, k} \partial_j \mathcal{L}_k(\btheta),\\
  &R_{t, j}(\btheta) := \sqrt{\sum_{k = 0}^t \nu_{t, k} \abs{\partial_j \mathcal{L}_k(\btheta)}^2 + \epsilon},\\
  &L_{t, j}(\btheta) := \sum_{k = 0}^{t - 1} \mu_{t, k} \sum_{i = 1}^{\dim \btheta} \partial_{i j} \mathcal{L}_k(\btheta) \sum_{l = k}^{t - 1} \frac{M_{l, i}(\btheta)}{R_{l, i}(\btheta)},\\
  &P_{t, j}(\btheta) := \sum_{k = 0}^{t - 1} \nu_{t, k} \partial_j \mathcal{L}_k(\btheta) \sum_{i = 1}^{\dim \btheta} \partial_{i j} \mathcal{L}_k(\btheta) \sum_{l = k}^{t - 1} \frac{M_{l, i}(\btheta)}{R_{l, i}(\btheta)}.
\end{align*}
Then, for any constant ``physical time'' horizon $T > 0$, the following global error bound holds:
\begin{align*}
    \max_{t \in \range{0}{\lfloor T / \eta \rfloor}} \norm[\big]{\vtheta_t - \tilde{\vtheta}_t}_{\infty} \leq C \eta^2
\end{align*}
for some constant $C = C_T$ not depending on $\eta$.
\end{theorem}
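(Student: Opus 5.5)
This follows by applying the general memory-removal result of \citet{cattaneo2025howmemoryoptimization}, recalled in \eqref{eq:quzxvB}--\eqref{eq:tetoJR}, to Adam written in the form \eqref{eq:quzxvB}. We set
\begin{equation*}
F_{t,j}(\btheta_t,\ldots,\btheta_0) := \frac{\sum_{k=0}^t \mu_{t,k}\,\partial_j\mathcal{L}_k(\btheta_k)}{\sqrt{\sum_{k=0}^t \nu_{t,k}\,\abs{\partial_j\mathcal{L}_k(\btheta_k)}^2+\epsilon}} .
\end{equation*}
Two things then need to be done: verify the hypotheses of the general theorem, and compute the main and correction terms from formula \eqref{eq:tetoJR}.

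\textbf{Hypotheses.} The general result (formally \Cref{th:UCVELF}) requires the following.
\begin{itemize}
\item Uniform bounds on $\vF_t$ and its first two derivatives with respect to each $\btheta_{t-k}$.
\item Bounds on the derivatives in $\btheta_{t-k}$ that decay summably in $k$, for example geometrically.
\end{itemize}
By the assumption on $\nabla^s\ell_r$ for $s\le 3$, each $\partial_j\mathcal{L}_k$ lies in $\mathcal{C}^2$ with bounded derivatives up to order two, uniformly in $k$ and in the permutation.

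The denominator satisfies $R\ge\sqrt{\epsilon}>0$. Hence $x\mapsto x^{-1/2}$ is smooth with bounded derivatives on $[\epsilon,\infty)$, and the quotient is $\mathcal{C}^2$ with uniformly bounded derivatives.

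The derivative with respect to $\btheta_{t-k}$ only passes through the $k$th summand in the numerator and in the denominator. It is therefore bounded by
\begin{equation*}
C\bigl(\mu_{t,t-k}+\nu_{t,t-k}\bigr)\le C'\bigl(\beta_1^k+\beta_2^k\bigr),
\end{equation*}
uniformly in $t$. This uses $1-\beta^{t+1}\ge 1-\beta$. The same geometric bound holds for the second derivatives, so the decay condition holds.

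\textbf{Main term.} Freezing the history, $\main{t}(\btheta)=\vF_t(\btheta,\ldots,\btheta)$ gives \eqref{eq:main-term-adam-def} immediately.

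\textbf{Correction term.} For the $j$th component, use the quotient rule on $M/R$ with $R=(\cdot)^{1/2}$. This gives
\begin{equation*}
\frac{\partial F_{t,j}}{\partial\btheta_{k}}
=\frac{\mu_{t,k}\nabla\partial_j\mathcal{L}_k}{R_{t,j}}
-\frac{M_{t,j}\,\nu_{t,k}\,\partial_j\mathcal{L}_k\,\nabla\partial_j\mathcal{L}_k}{R_{t,j}^3},
\end{equation*}
evaluated at the frozen point. Reindex \eqref{eq:tetoJR} with $k\mapsto t-k$, so the sum runs over $k\in\range{0}{t-1}$ and the inner sum is $\sum_{l=k}^{t-1}\vF_l(\btheta)$. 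Also use $\vF_l(\btheta)=\bigl(M_{l,i}/R_{l,i}\bigr)_i$. Contracting the gradient $\nabla\partial_j\mathcal{L}_k$ with this vector gives $\sum_i \partial_{ij}\mathcal{L}_k\sum_{l=k}^{t-1}M_{l,i}/R_{l,i}$. The first piece then produces $L_{t,j}/R_{t,j}$ and the second produces $M_{t,j}P_{t,j}/R_{t,j}^3$, which is \eqref{eq:corr-term-adam-def}.

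The $O(\eta^2)$ global bound over $\lfloor T/\eta\rfloor$ steps, with $C=C_T$, is then inherited directly from the general theorem.

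\textbf{Main obstacle.} The step needing the most care is checking the uniform summable-decay hypotheses, in particular the second derivatives in two distinct past iterates. When $k\ne k'$, the mixed derivative $\partial^2 F/\partial\btheta_k\partial\btheta_{k'}$ arises only through the denominator. It is bounded by $C\nu_{t,k}\nu_{t,k'}$ or by $C\mu_{t,k}\nu_{t,k'}$, which is doubly geometric.

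Two uniformity issues must also be checked. The bias-correction normalisations $1-\beta^{t+1}$ must not destroy uniformity in $t$; they are bounded below by $1-\beta$. The constants must not depend on $n$, $N$ or the permutation $\pi$; this holds because all bounds come from the uniform $\sup$ over $r$ and $N$ in the assumption. With these in place, the rest is the computation above.
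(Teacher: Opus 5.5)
Your proposal is correct and follows essentially the same route as the paper. Both arguments apply the general memory-removal theorem (\cref{th:UCVELF}) with $F_{t,j}$ equal to the Adam update, verify its hypotheses via $R_{t,j}\ge\sqrt{\epsilon}$ and the geometric bounds $\mu_{t,t-q}\le\beta_1^q$, $\nu_{t,t-q}\le\beta_2^q$ (using diagonal plus product structure for the second derivatives), and read off $\main{t}$ and $\corr{t}$ from \eqref{eq:tetoJR}; the only difference is that you spell out the quotient-rule computation and reindexing yielding $L_{t,j}/R_{t,j}-M_{t,j}P_{t,j}/R_{t,j}^3$, which the paper leaves as a direct application.
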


Next, we state the full version of the mini-batch noise expansions.

\begin{theorem}[Mini-batch noise expansion of the memoryless dynamics]\label{thm:finite-n-nonzero-eps}
In the setting of \cref{thm:applying-memory-removal}, for every
\(j \in \range{1}{\dim \btheta}\),
  \begin{equation}\label{eq:expect-eps}
    \begin{aligned}
      \sqrt{g_j^2+\epsilon}\,\Eletpi \corrsc{j}{n}
      ={}& \fbneps{j}
      + \mbnoneneps{j}
      + \mbntwoneps{j}\\
      &+ \mbnthreeneps{j}
      + \mbnfourneps{j}
      + \mbnfiveneps{j}\\
      &+ O(d^3) + o_n(b^{-1}),
    \end{aligned}
  \end{equation}
  where
  \[
    \fbneps{j}
    :=
    \left(
      \sum_{k=0}^{n-1}\mu_{n,k}(n-k)
      -
      \frac{g_j^2}{g_j^2+\epsilon}
      \sum_{k=0}^{n-1}\nu_{n,k}(n-k)
    \right)
    \partial_j \|\vg\|_{1,\epsilon},
  \]
  and the five mini-batch-noise corrections are given by
  \begin{align*}
    \mbnoneneps{j}
    :=&\
    \frac{n}{(n + 1) b-1}\,
    \partial_j \|\vg\|_{1,\epsilon}\,\Sigma_{jj}\,
    A^{(n,\epsilon)}_{j},
    \\
    \mbntwoneps{j}
    :=&\
    \frac{n}{(n + 1) b-1}
    \sum_i
    \frac{\hess{i}{j}g_i}{(g_i^2+\epsilon)^{5/2}}
    \Sigma_{ii}\,
    B^{(n,\epsilon)}_{i,j},
    \\
    \mbnthreeneps{j}
    :=&\
    \frac{n}{2((n + 1) b - 1)}
    \frac{g_j}{g_j^2+\epsilon}\\
    & \times \prn[\bigg]{
      -2 \sum_{k=0}^{n-1}(n-k)\mu_{n,k}\nu_{n,k}
      - \sum_{k=0}^{n-1}(n-k)\nu_{n,k}
      + 3\frac{g_j^2}{g_j^2+\epsilon}\sum_{k=0}^{n-1}(n-k)\nu_{n,k}^2
    }\\
    & \times \sum_i \frac{g_i}{\sqrt{g_i^2+\epsilon}}\,\partial_i \Sigma_{jj},
    \\
    \mbnfourneps{j}
    :=&\
    \frac{n}{2((n + 1) b-1)}
    \sum_i
    \frac{1}{\sqrt{g_i^2+\epsilon}}
    D^{(n,\epsilon)}_{i,j}\,
    \partial_j \Sigma_{ii},
    \\
    \mbnfiveneps{j}
    :=&\
    \frac{n}{(n + 1)b-1}
    \frac{g_j}{g_j^2+\epsilon}
    \sum_i
    \frac{\hess{i}{j}}{\sqrt{g_i^2+\epsilon}}
    E^{(n,\epsilon)}_{i,j}\,
    \Sigma_{ij}.
  \end{align*}
  Here
  \begin{align*}
    A^{(n,\epsilon)}_{j}
    :={}&
      \frac{1}{2 (g_j^2+\epsilon)^2}
      \sum_{r = 0}^{n}
      \prn*{3 g_j^2 \nu_{n, r}^2 - (g_j^2+\epsilon)\nu_{n, r}}
      \sum_{k = 0}^{n - 1} \mu_{n, k} (n-k)
\\
    &- \frac{g_j^2}{(g_j^2+\epsilon)^3}
      \sum_{r = 0}^{n - 1} (n-r)\nu_{n,r}
      \sum_{k = 0}^{n}
      \prn*{4 g_j^2 \nu_{n, k}^2 - (g_j^2+\epsilon)\nu_{n, k}
      - 2 (g_j^2+\epsilon)\mu_{n, k}\nu_{n, k}}
\\
    &- \frac{1}{(g_j^2+\epsilon)^2}
      \sum_{k = 0}^{n - 1} (n-k)\nu_{n,k}
      \prn*{(g_j^2+\epsilon)\mu_{n,k} - 2 g_j^2 \nu_{n,k}}
\\
    &+ \frac{g_j^2}{(g_j^2+\epsilon)^3}
      \sum_{p = 0}^{n - 1} (n-p)\nu_{n,p}
      \sum_{k = 0}^{n}\nu_{n,k}
      \prn*{(g_j^2+\epsilon)\mu_{n,k} - 2 g_j^2 \nu_{n,k}}
\\
    &- \frac{g_j^2}{2 (g_j^2+\epsilon)^3}
      \sum_{r = 0}^{n - 1} (n-r)\nu_{n,r}
      \sum_{k = 0}^{n}
      \prn*{3 g_j^2 \nu_{n, k}^2 - (g_j^2+\epsilon)\nu_{n, k}}
\\
    &\ + \frac{g_j^2}{(g_j^2+\epsilon)^2}
      \sum_{k = 0}^{n - 1}(n-k)\nu_{n,k}^2,\\
    B^{(n,\epsilon)}_{i,j}
    :={}&
      \frac12
      \sum_{l=0}^{n-1}\sum_{k,p=0}^{l}
      \mu_{n,k}
      \prn*{3 g_i^2\nu_{l,p}^2-(g_i^2+\epsilon)\nu_{l,p}
      -2(g_i^2+\epsilon)\mu_{l,p}\nu_{l,p}}
\\
    &- \frac{g_j^2}{2(g_j^2+\epsilon)}
      \sum_{l=0}^{n-1}\sum_{k,p=0}^{l}
      \nu_{n,k}
      \prn*{3 g_i^2\nu_{l,p}^2-(g_i^2+\epsilon)\nu_{l,p}
      -2(g_i^2+\epsilon)\mu_{l,p}\nu_{l,p}},\\
    D^{(n,\epsilon)}_{i,j}
    :={}&
      \sum_{l=0}^{n-1}\sum_{k=0}^{l}
      \mu_{n,k}
        \brk*{\mu_{l,k}-\frac{g_i^2}{g_i^2+\epsilon}\nu_{l,k}}
     - \frac{g_j^2}{g_j^2+\epsilon}
      \sum_{l=0}^{n-1}\sum_{k=0}^{l}
      \nu_{n,k}
      \brk*{\mu_{l,k}-\frac{g_i^2}{g_i^2+\epsilon}\nu_{l,k}},\\
    E^{(n,\epsilon)}_{i,j}
    :={}&
      - \sum_{l=0}^{n-1}\sum_{k,p=0}^{l}
        \mu_{n,k}\nu_{n,p}
        \brk*{\mu_{l,p}-\frac{g_i^2}{g_i^2+\epsilon}\nu_{l,p}}
\\
    &- \frac{1}{g_j^2+\epsilon}
        \sum_{l=0}^{n-1}\sum_{k,p=0}^{l}
        \nu_{n,k}
        \brk*{\mu_{l,p}-\frac{g_i^2}{g_i^2+\epsilon}\nu_{l,p}}
        \prn*{(g_j^2+\epsilon)\mu_{n,p}-2g_j^2\nu_{n,p}}
\\
    &+ \frac{g_j^2}{g_j^2+\epsilon}
        \sum_{l=0}^{n-1}\sum_{k,p=0}^{l}
        \nu_{n,k}
        \brk*{\mu_{l,p}-\frac{g_i^2}{g_i^2+\epsilon}\nu_{l,p}}
        \nu_{n,p}
\\
    &- \sum_{l=0}^{n-1}\sum_{k=0}^{l}
        \nu_{n,k}
        \brk*{\mu_{l,k}-\frac{g_i^2}{g_i^2+\epsilon}\nu_{l,k}}.
  \end{align*}
\end{theorem}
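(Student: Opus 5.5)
We will prove \cref{thm:finite-n-nonzero-eps} by writing every mini-batch quantity in $\corrsc{j}{n}$ from \eqref{eq:corr-term-adam-def} as a full-batch quantity plus noise, Taylor-expanding to second order in the noise, and then averaging over permutations using exact second moments of without-replacement sampling.

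\textbf{Substitution.} Put $t=n$ and replace $\partial_j\mathcal{L}_k = g_j+\partial_j d_k$ and $\partial_{ij}\mathcal{L}_k=\partial_{ij}\mathcal{L}+\partial_{ij}d_k$ in $M_{l,i}$, $R_{l,i}$, $L_{n,j}$ and $P_{n,j}$.

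\textbf{Expansion of the nonlinear factors.} Write $R_{l,i}=\sqrt{g_i^2+\epsilon+\delta_{l,i}}$, where
\begin{equation*}
\delta_{l,i}=\sum_{p\le l}\nu_{l,p}\bigl(2g_i\,\partial_i d_p+(\partial_i d_p)^2\bigr).
\end{equation*}
Then expand $R_{l,i}^{-1}$ and $R_{n,j}^{-1}$, $R_{n,j}^{-3}$ in $\delta$ to second order. This produces the coefficients $3g^2\nu^2-(g^2+\epsilon)\nu$, of the form $-\tfrac12 x^{-3/2}+\tfrac38 x^{-5/2}\cdot 4g^2$. Multiplying by the numerators $M_{l,i}=g_i+\sum_p\mu_{l,p}\partial_i d_p$ gives the cross terms $\mu\nu$ and the combinations $\mu_{l,k}-\frac{g_i^2}{g_i^2+\epsilon}\nu_{l,k}$, which come from the first-order expansion of $M/R$. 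We keep every monomial of degree $\le2$ in $\{d_k,\partial d_k,\partial^2 d_k\}$ and discard the rest into $O(d^3)$.

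\textbf{Zeroth-order term.} This is the full-batch case: $L/R$ reduces to $\sum_k\mu_{n,k}(n-k)\sum_i\partial_{ij}\mathcal{L}\,g_i/\sqrt{g_i^2+\epsilon}$. Using $\sum_k\mu_{n,k}=1$, this is $\sum_k\mu_{n,k}(n-k)\,\partial_j\|\vg\|_{1,\epsilon}$. The $MP/R^3$ part similarly gives the $\nu$ term with the factor $g_j^2/(g_j^2+\epsilon)$. Multiplying by $\sqrt{g_j^2+\epsilon}$ yields $\fbneps{j}$.

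\textbf{Averaging.} First-degree monomials vanish because $\Eletpi d_k=0$. For second-degree monomials, use the exact without-replacement identities
\begin{equation*}
\Eletpi\,\partial_i d_k\,\partial_j d_{k'} = \frac{n}{(n+1)b-1}\,\frac{\Sigma_{ij}}{b}\cdot\begin{cases}1,& k=k',\\ -1/n,& k\neq k',\end{cases}
\end{equation*}
up to the normalization fixed by the definition of $\Sigma$. The analogous identities hold with one factor replaced by a Hessian noise $\partial_{il}d_k$, giving derivatives of $\Sigma$ such as $\tfrac12\partial_l\Sigma_{ij}$. These identities follow by writing $d_k$ as an average of centered per-sample terms and counting coincident indices.

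\textbf{Bookkeeping into the five terms.} Group the resulting monomials by structure:
\begin{itemize}
\item $(\partial_j d)^2$ factors multiplying the full-batch Hessian–gradient contraction give $\mbnoneneps{j}$ with coefficient $A$.
\item $(\partial_i d)^2$ inside the inner $R_{l,i}$ give $\mbntwoneps{j}$ with coefficient $B$.
\item $\partial_{ij}d\cdot\partial_j d$ pairs in $P$ and $R_{n,j}$ give $\partial_i\Sigma_{jj}$, which is $\mbnthreeneps{j}$.
\item $\partial_{ij}d\cdot\partial_i d$ pairs with $i\ne j$ give $\partial_j\Sigma_{ii}$, which is $\mbnfourneps{j}$ with coefficient $D$.
\item $\partial_i d\,\partial_j d$ pairs multiplied by the full Hessian give $\Sigma_{ij}$, which is $\mbnfiveneps{j}$ with coefficient $E$.
\end{itemize}

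\textbf{Off-diagonal contributions.} The $k\neq k'$ terms carry the factor $-1/n$, and for each group we show they are $o_n(b^{-1})$. The weights $\mu_{n,k}$ and $\nu_{n,k}$ are geometric, so the double sums over $k\neq k'$ are bounded uniformly in $n$, and the prefactor $1/n$ kills them. This step, together with the $1-\beta^{n+1}$ normalizations, is the only source of the $o_n(b^{-1})$ remainder. The same uniform bounds control the $O(d^3)$ remainder thanks to the $\mathcal{C}^3$ boundedness assumption of \cref{thm:applying-memory-removal}.

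\textbf{Main obstacle.} The hard part is not conceptual but combinatorial: tracking nested index sums such as $\sum_{l}\sum_{k\le l}\sum_{p\le l}$ and $\sum_{l=k}^{n-1}$, and which time indices coincide in each monomial, so that the coefficients match $A,B,D,E$ exactly. My first line of attack is to exchange summation orders systematically, writing $\sum_{k}\mu_{n,k}\sum_{l=k}^{n-1}(\cdot)=\sum_l\sum_{k\le l}\mu_{n,k}(\cdot)$. I would organize the expansion through a symbolic "derivative of $x\mapsto x/\sqrt{y}$" table to avoid sign errors, and check the final coefficients against a computer-algebra expansion for small $n$.
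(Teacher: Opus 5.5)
Your plan is essentially the paper's proof: expand $L_{n,j}/R_{n,j}$ and $M_{n,j}P_{n,j}/R_{n,j}^3$ to quadratic order in the noise, discard cross-batch pairs as $O(((n+1)b)^{-1})=o_n(b^{-1})$, and evaluate same-batch pairs exactly; your remark that the cross-batch coefficient sums are bounded uniformly in $n$ makes explicit a step the paper leaves implicit. When you carry it out, fix three slips.

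\textbf{The $1/b$ factor.} With the paper's $1/N$-normalized $\Sigma$, the same-batch identities are $\Eletpi \partial_i d_k\,\partial_j d_k=\frac{n}{(n+1)b-1}\Sigma_{ij}$ and $\Eletpi \partial_{ij} d_k\,\partial_j d_k=\frac{n}{2((n+1)b-1)}\partial_i\Sigma_{jj}$. There is no extra factor $1/b$; keeping it would give the wrong prefactor.

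\textbf{The half-derivative form.} Your general $\tfrac12\partial_l\Sigma_{ij}$ is not correct for $i\neq j$. The half-derivative form is valid here only because the gradient-noise index always repeats one of the Hessian-noise indices.

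\textbf{The $\mbnthreeneps{j}$ versus $\mbnfourneps{j}$ split.} It is determined by where the gradient-noise partner of $\partial_{ij}d_k$ comes from, not by whether $i\neq j$. Partners from the outer factors $M_{n,j}$, $R_{n,j}$, $\partial_j\mathcal{L}_k$ go to $\mbnthreeneps{j}$; partners from the inner ratios $M_{l,i}/R_{l,i}$ go to $\mbnfourneps{j}$. Both sums must therefore include $i=j$.
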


The following theorem provides limits of the above expressions which is useful for interpreting them.

\begin{theorem}[Limits of full-batch and mini-batch corrections]\label{thm:limits}\leavevmode
  \begin{assertions}[wide]
\item\label{thm:large-n-limits-eps} The finite-\(n\) quantities in
\cref{thm:finite-n-nonzero-eps} satisfy, as \(n\to\infty\),
\begin{equation*}
\fbneps{j}
=
\fbinfeps{j} + o_n(1),
\qquad
\mbnrneps{j}
=
\mbnrinfeps{j} + o_n(b^{-1}),
\qquad r \in \range{1}{5},
\end{equation*}
where
\begin{align*}
\fbinfeps{j}
:=&\
\left(
\frac{\beta_1}{1-\beta_1}
-
\frac{g_j^2}{g_j^2+\epsilon}\frac{\beta_2}{1-\beta_2}
\right)\partial_j\|\vg\|_{1,\epsilon},
\\[1ex]
\mbnoneinfeps{j}
:=&\
\frac{1}{b}\,
\partial_j\|\vg\|_{1,\epsilon}\,\Sigma_{jj}\,
A^{(\infty,\epsilon)}_j,
\\[1ex]
\mbntwoinfeps{j}
:=&\
\frac{1}{b}
\sum_i
\frac{\hess{i}{j}g_i}{(g_i^2+\epsilon)^{5/2}}
\Sigma_{ii}\,
B^{(\infty,\epsilon)}_{i,j},
\\[1ex]
\mbnthreeinfeps{j}
:=&\
\frac{1}{2b}
    \frac{g_j}{g_j^2+\epsilon}
\prn[\bigg]{
-2\frac{\beta_1\beta_2(1-\beta_1)(1-\beta_2)}
{(1-\beta_1\beta_2)^2}
-\frac{\beta_2}{1-\beta_2}
+3\frac{g_j^2}{g_j^2+\epsilon}\frac{\beta_2^2}{(1+\beta_2)^2}
    }
  \\
&\times
\sum_i \frac{g_i}{\sqrt{g_i^2+\epsilon}}\,\partial_i\Sigma_{jj},
\\[1ex]
\mbnfourinfeps{j}
:=&\
\frac{1}{2b}
\sum_i
\frac{1}{\sqrt{g_i^2+\epsilon}}
D^{(\infty,\epsilon)}_{i,j}\,
\partial_j\Sigma_{ii},
\\[1ex]
\mbnfiveinfeps{j}
:=&\
\frac{1}{b}
\frac{g_j}{g_j^2+\epsilon}
\sum_i
\frac{\hess{i}{j}}{\sqrt{g_i^2+\epsilon}}
E^{(\infty,\epsilon)}_{i,j}\,
\Sigma_{ij}.
\end{align*}
The limiting coefficient functions are
\begin{align*}
A^{(\infty,\epsilon)}_j
:=&\
\frac{\beta_1}{2(1-\beta_1)(g_j^2+\epsilon)}
\left(
3\frac{g_j^2}{g_j^2+\epsilon}\frac{1-\beta_2}{1+\beta_2}
-1
\right)
\\
&
-\frac{g_j^2}{(g_j^2+\epsilon)^2}
\frac{\beta_2}{1-\beta_2}
\left(
4\frac{g_j^2}{g_j^2+\epsilon}\frac{1-\beta_2}{1+\beta_2}
-1
-\frac{2(1-\beta_1)(1-\beta_2)}{1-\beta_1\beta_2}
\right)
\\
&
-\frac{1}{g_j^2+\epsilon}
\left(
\frac{\beta_1\beta_2(1-\beta_1)(1-\beta_2)}
{(1-\beta_1\beta_2)^2}
-2\frac{g_j^2}{g_j^2+\epsilon}
\frac{\beta_2^2}{(1+\beta_2)^2}
\right)
\\
&
+\frac{g_j^2}{(g_j^2+\epsilon)^2}
\frac{\beta_2}{1-\beta_2}
\left(
\frac{(1-\beta_1)(1-\beta_2)}{1-\beta_1\beta_2}
-2\frac{g_j^2}{g_j^2+\epsilon}\frac{1-\beta_2}{1+\beta_2}
\right)
\\
&
-\frac{g_j^2}{2(g_j^2+\epsilon)^2}
\frac{\beta_2}{1-\beta_2}
\left(
3\frac{g_j^2}{g_j^2+\epsilon}\frac{1-\beta_2}{1+\beta_2}
-1
\right)
+\frac{g_j^2}{(g_j^2+\epsilon)^2}
\frac{\beta_2^2}{(1+\beta_2)^2},
\\[1ex]
B^{(\infty,\epsilon)}_{i,j}
:=&\
\frac{g_i^2+\epsilon}{2}
\left(
3\frac{g_i^2}{g_i^2+\epsilon}\frac{1-\beta_2}{1+\beta_2}
-1
-\frac{2(1-\beta_1)(1-\beta_2)}{1-\beta_1\beta_2}
\right)
\\
&\times
\left(
\frac{\beta_1}{1-\beta_1}
-\frac{g_j^2}{g_j^2+\epsilon}\frac{\beta_2}{1-\beta_2}
\right),
\\[1ex]
D^{(\infty,\epsilon)}_{i,j}
:=&\
\frac{\beta_1}{1+\beta_1}
-\frac{g_i^2}{g_i^2+\epsilon}
\frac{\beta_1(1-\beta_2)}{1-\beta_1\beta_2}
-\frac{g_j^2}{g_j^2+\epsilon}
\frac{\beta_2(1-\beta_1)}{1-\beta_1\beta_2}
\\
&
+\frac{g_i^2}{g_i^2+\epsilon}
\frac{g_j^2}{g_j^2+\epsilon}
\frac{\beta_2}{1+\beta_2},
\\[1ex]
E^{(\infty,\epsilon)}_{i,j}
:=&\
-\frac{\beta_1\beta_2(1-\beta_1)(1-\beta_2)}
{(1-\beta_1\beta_2)^2}
+\frac{g_i^2}{g_i^2+\epsilon}
\frac{\beta_1\beta_2(1-\beta_2)}
{(1+\beta_2)(1-\beta_1\beta_2)}
\\
&
-\frac{\beta_1\beta_2(1-\beta_1)}
{(1+\beta_1)(1-\beta_1\beta_2)}
+\frac{g_i^2}{g_i^2+\epsilon}
\frac{\beta_1\beta_2(1-\beta_1)(1-\beta_2)}
{(1-\beta_1\beta_2)^2}
\\
&
+3\frac{g_j^2}{g_j^2+\epsilon}
\frac{\beta_2^2(1-\beta_1)}
{(1+\beta_2)(1-\beta_1\beta_2)}
-3\frac{g_i^2}{g_i^2+\epsilon}
\frac{g_j^2}{g_j^2+\epsilon}
\frac{\beta_2^2}{(1+\beta_2)^2}
\\
&
-\frac{\beta_2(1-\beta_1)}{1-\beta_1\beta_2}
+\frac{g_i^2}{g_i^2+\epsilon}\frac{\beta_2}{1+\beta_2}.
\end{align*}

\item\label{thm:large-n-zero-eps-simple-version}
Further, the limits of the full-batch and mini-batch noise expansions as $\epsilon \to 0$
\begin{equation*}
\fbinfeps{j} = \fb{j} + o_{\epsilon}(1), \quad \mbnrinfeps{j} = \mbnr{j} + o_{\epsilon}(1), \quad r \in \range{1}{5},
\end{equation*}
are given by \cref{eq:fb-simple,eq:mbn-simple} with
\begin{align*}
  C_1(\beta_1, \beta_2) :=&\
  \frac{1 - \beta_1^2}{\beta_1 (1 - \beta_1 \beta_2)}
  + \frac{(1 - \beta_1)^2}{\beta_1 (1 - \beta_1 \beta_2)^2}
  + \frac{3 (1 + \beta_1)}{2 (1 - \beta_1) (1 + \beta_2)}
  \\
  &- \frac{2}{\beta_1 (1 - \beta_1)}
  + \frac{3}{2 - 2 \beta_2}
  + \frac{3}{(1 + \beta_2)^2}
  - 2,
  \\
  C_2(\beta_1, \beta_2) :=&\
  \frac{(\beta_1 - \beta_2)
  (\beta_1 \beta_2^2 - \beta_1 \beta_2 + \beta_1 + \beta_2^2 - 2 \beta_2)}
  {(1 - \beta_1) (1 - \beta_2) (1 + \beta_2) (1 - \beta_1 \beta_2)},
  \\
  C_3(\beta_1, \beta_2) :=&\
  \frac{1}{2(1 - \beta_2)(1 + \beta_2)^2 (1 - \beta_1 \beta_2)^2}
  \crl[\big]{
  -2 \beta_1(\beta_1 + 1) \beta_2^5
  + (\beta_1^2 + 8 \beta_1) \beta_2^4
  \\
  &
  + (2 \beta_1 - 5 \beta_1^2 - 4) \beta_2^3
  + (2 \beta_1 + 1) \beta_2^2
  + (2 \beta_1^2 - 2 \beta_1 - 1) \beta_2
  },
  \\
  C_4(\beta_1, \beta_2) :=&
  - \frac{(\beta_1 - \beta_2)^2}
  {2(1 + \beta_1) (1 + \beta_2) (1 - \beta_1 \beta_2)},
  \\
  C_5(\beta_1, \beta_2) :=&\
  \frac{\beta_2 (\beta_2 - \beta_1) (2 \beta_2 - 3 \beta_1 - 1)}
  {(1 + \beta_1) (1 + \beta_2)^2 (1 - \beta_1 \beta_2)}.
  \numberthis\label{eq:constants}
\end{align*}
\end{assertions}
\end{theorem}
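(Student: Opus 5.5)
The plan is to treat the two assertions separately. Assertion~\ref{thm:large-n-limits-eps} is a computation of limits of explicit finite geometric-type sums. Assertion~\ref{thm:large-n-zero-eps-simple-version} is a pointwise substitution followed by rational-function algebra. No analytic input beyond \cref{thm:finite-n-nonzero-eps} is needed. The expressions $g_j$, $\Sigma_{ij}$, $\partial_i g_j$, $\partial_i\Sigma_{jj}$ are fixed at the current point $\btheta$ and do not depend on $n$. Hence every limit concerns only the scalar coefficients built from $\mu_{n,k}$, $\nu_{n,k}$ and the prefactor $n/((n+1)b-1)$.

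\textbf{Step 1: prefactor and single sums.} First I will note that
\begin{equation*}
\frac{n}{(n+1)b-1}=\frac1b+O\prn*{\frac{1}{nb}}.
\end{equation*}
The coefficients multiplying it stay bounded in $n$, so this error is $o_n(b^{-1})$; this is what converts every $\mbnrneps{j}$ into $\mbnrinfeps{j}+o_n(b^{-1})$. Next, with the reindexing $m=n-k$ and $1-\beta^{n+1}\to1$ exponentially fast, the elementary identities $\sum_{m\ge1} m x^m = x/(1-x)^2$ and $\sum_{m\ge0}x^m=1/(1-x)$ give the following limits.
\begin{itemize}
\item $\sum_{k<n}(n-k)\mu_{n,k}\to\beta_1/(1-\beta_1)$, and likewise for $\nu$ with $\beta_2$.
\item $\sum_{k<n}(n-k)\mu_{n,k}\nu_{n,k}\to \beta_1\beta_2(1-\beta_1)(1-\beta_2)/(1-\beta_1\beta_2)^2$.
\item $\sum_{k<n}(n-k)\nu_{n,k}^2\to\beta_2^2/(1+\beta_2)^2$.
\item $\sum_{k\le n}\nu_{n,k}^2\to(1-\beta_2)/(1+\beta_2)$.
\item $\sum_{k\le n}\mu_{n,k}\nu_{n,k}\to(1-\beta_1)(1-\beta_2)/(1-\beta_1\beta_2)$.
\end{itemize}
Substituting these into $\fbneps{j}$, $\mbnthreeneps{j}$ and $A^{(n,\epsilon)}_j$ (which involve only single sums and products of single sums) gives $\fbinfeps{j}$, $\mbnthreeinfeps{j}$ and $A^{(\infty,\epsilon)}_j$ term by term.

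\textbf{Step 2: double sums in $B$, $D$, $E$.} These involve $\sum_{l=0}^{n-1}\sum_{k\le l}$ with weights of the form $\mu_{n,k}$ or $\nu_{n,k}$ times $\mu_{l,\cdot}$ or $\nu_{l,\cdot}$.

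For $B$, the inner sum over $p\le l$ (e.g.\ $\sum_p\nu_{l,p}^2$ or $\sum_p\mu_{l,p}\nu_{l,p}$) converges to its stationary value as $l\to\infty$, with error $O(\beta^{l})$. Since $\mu_{n,k}$ with $k\le l$ puts weight $O(\beta_1^{n-l})$ on each $l$, the small-$l$ error region carries exponentially small total weight. So the sum factorizes in the limit: the stationary inner value times $\sum_l\sum_{k\le l}\mu_{n,k}=\sum_k(n-k)\mu_{n,k}\to\beta_1/(1-\beta_1)$. This produces the product form of $B^{(\infty,\epsilon)}$.

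For $D$ and $E$, the indices are coupled, e.g.\ $\mu_{n,k}\mu_{l,k}\propto\beta_1^{(n-l)+2(l-k)}$. Here I will substitute $c=n-l\ge1$ and $a=l-k\ge0$ and sum the two geometric series. For example,
\begin{equation*}
(1-\beta_1)^2\cdot\frac{\beta_1}{1-\beta_1}\cdot\frac{1}{1-\beta_1^2}=\frac{\beta_1}{1+\beta_1}.
\end{equation*}
The cross-terms give $\beta_1(1-\beta_2)/(1-\beta_1\beta_2)$, $\beta_2(1-\beta_1)/(1-\beta_1\beta_2)$ and $\beta_2/(1+\beta_2)$, matching $D^{(\infty,\epsilon)}$. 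In $E$, triple products such as $\mu_{n,k}\nu_{n,p}\mu_{l,p}$ split into a sum over $k\le l$, which is handled as in $B$, and a coupled geometric sum in $(n-l,\,l-p)$. Dominated convergence with a uniform geometric majorant justifies all interchanges.

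\textbf{Step 3: $\epsilon\to0$.} Assuming $g_i\neq0$ for the coordinates involved, I will substitute $g^2/(g^2+\epsilon)\to1$, $\sqrt{g^2+\epsilon}\to|g|$ and $\partial_j\|\vg\|_{1,\epsilon}\to\partial_j\|\vg\|_1$, and then match the structural forms.
\begin{itemize}
\item $\fbinfeps{j}$ immediately becomes $\fb{j}$.
\item In $\mbntwoinfeps{j}$, $B^{(\infty,0)}_{i,j}=\tfrac{g_i^2}{2}(\cdots)$. Combined with $\partial_i g_j\,g_i/|g_i|^5$ and the symmetry $\partial_ig_j=\partial_jg_i$, this yields $\partial_j|g_i|\,\Sigma_{ii}/g_i^2$ times the constant $C_2$.
\item In $\mbnoneinfeps{j}$, $A^{(\infty,0)}_j$ is $g_j^{-2}$ times a constant, which gives $C_1$.
\item $\mbnthreeinfeps{j}$ gives $C_3$ after putting $\tfrac12\{\cdots\}$ over the common denominator $(1-\beta_2)(1+\beta_2)^2(1-\beta_1\beta_2)^2$.
\item $D$ and $E$ give $C_4$ and $C_5$.
\end{itemize}
The multiplication by $|g_j|$ in \cref{eq:expect} versus $\sqrt{g_j^2+\epsilon}$ in \cref{eq:expect-eps} is consistent in the limit.

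The main obstacle is purely algebraic. It consists of keeping track of the many index-coupled double sums in $E^{(n,\epsilon)}$ so that no boundary term is lost, and then reducing the resulting rational functions to the stated closed forms of $C_1$, $C_3$ and $C_5$. I would first verify each closed form symbolically, for instance by checking the polynomial identity after clearing denominators, and only then write out the reduction by hand.
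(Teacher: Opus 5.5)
Your proposal is correct and follows the paper's proof essentially step by step. Like the paper, you list the elementary geometric-series limits: the single sums, the factorizing double sums in $B$, and the index-coupled double and triple sums in $D$ and $E$. You then use $n/((n+1)b-1)=b^{-1}+o_n(b^{-1})$ and obtain $C_1,\dots,C_5$ by setting $\epsilon=0$ with $g_i\neq 0$ and simplifying. Your reindexing $c=n-l$, $a=l-k$ with a geometric majorant makes explicit the justification the paper leaves implicit. One small correction: since $N=(n+1)b$, the data-dependent quantities $g$ and $\Sigma$ do depend on $n$. They need only be uniformly bounded in $n$, which the standing assumptions guarantee, rather than literally independent of $n$.
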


\section{Proof of \cref{thm:applying-memory-removal}}\label{sec:proof-of-thm-applying-memory-removal}

This result is taken from \citet{cattaneo2025howmemoryoptimization}.
Specifically, it is a special case of the following general theorem,
a reformulation of their Corollary 3.3.

\begin{theorem}[General memory removal theorem]\label{th:UCVELF}
  Let $\Theta$ be an open convex domain in $\mathbb{R}^{\dim \vtheta}$ and $\vF_t \in C^2(\Theta^{t + 1}; \mathbb{R}^d)$ be a family of functions, such that for any $t \in \mathbb{Z}_{\geq 0}$, $k_1, k_2 \in \range{0}{t}$, $r, i, j \in \range{1}{\dim \vtheta}$,
\begin{equation*}
\abs{F_{t, r}} \leq \gamma_{-1}, \quad \abs[\bigg]{\frac{\partial F_{t, r}}{\partial \theta_{t - k_1, i}}} \leq \gamma_{k_1}, \quad \abs[\bigg]{\frac{\partial^2 F_{t, r}}{\partial \theta_{t - k_1, i} \partial \theta_{t - k_2, j}}} \leq \gamma_{k_1, k_2},
\end{equation*}
where $\gamma_{-1}$, $\gamma_{k_1}$ and $\gamma_{k_1, k_2}$ are families of positive reals (not depending on $t$) satisfying $\sum_{k_1 = 1}^{\infty} \gamma_{k_1} k_1^2 + \sum_{k_1, k_2 = 1}^{\infty} \gamma_{k_1, k_2} k_1 k_2 < \infty$ (sufficiently fast decay of memory).
Let $T \geq 0$ be a fixed ``physical time'' horizon.
Then iterations $\crl[\big]{\vtheta_t}_{t = 0}^{\infty}$ and $\crl[\big]{\tilde{\vtheta}_t}_{t = 0}^{\infty}$, given in \cref{eq:quzxvB,eq:HsXmgC,eq:tetoJR} with the same initial condition $\tilde{\vtheta}_0 = \vtheta_0$, satisfy
\begin{align*}
    \max_{t \in \range{0}{\lfloor T / \eta \rfloor}} \norm[\big]{\vtheta_t - \tilde{\vtheta}_t}_{\infty} \leq C \eta^2
\end{align*}
for some constant $C$ depending on $T$ but not depending on $\eta$.
\end{theorem}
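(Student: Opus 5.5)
The plan is to derive \cref{th:UCVELF} from Corollary~3.3 of \citet{cattaneo2025howmemoryoptimization}, checking that its hypotheses are exactly the ones listed: uniform bounds $\gamma_{-1},\gamma_{k_1},\gamma_{k_1,k_2}$ together with the weighted summability condition. To keep the argument self-contained, I would also reprove it directly as a standard consistency-plus-stability argument for one-step methods. It has two parts: a local truncation estimate of order $\eta^3$ per step, and a discrete Gronwall bound over $\lfloor T/\eta\rfloor$ steps.

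\emph{Step 1 (a priori displacement bound).} Since $\abs{F_{s,r}}\le\gamma_{-1}$, every Adam-type iterate satisfies $\norm{\vtheta_{t-k}-\vtheta_t}_\infty\le\gamma_{-1}\eta k$. The same bound holds for any history point.

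\emph{Step 2 (local consistency).} Fix $t$ and Taylor-expand $\vF_t(\vtheta_t,\ldots,\vtheta_0)$ around the diagonal point $(\vtheta_t,\ldots,\vtheta_t)$. This gives
\[
\vF_t(\vtheta_t,\ldots,\vtheta_0)=\main{t}(\vtheta_t)+\sum_{k=1}^t\frac{\partial \vF_t}{\partial\vtheta_{t-k}}(\vtheta_t,\ldots,\vtheta_t)(\vtheta_{t-k}-\vtheta_t)+R_t .
\]
By Step 1, the remainder satisfies $\abs{R_t}\lesssim\eta^2\sum_{k_1,k_2}\gamma_{k_1,k_2}k_1k_2=O(\eta^2)$. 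Next write $\vtheta_{t-k}-\vtheta_t=\eta\sum_{s=t-k}^{t-1}\vF_s(\vtheta_s,\ldots,\vtheta_0)$. In each summand, replace the full history of $\vF_s$ by the diagonal point $(\vtheta_t,\ldots,\vtheta_t)$. All arguments of $\vF_s$ lie within $O(\eta(k+j))$ of $\vtheta_t$, where $j$ is the lag inside $\vF_s$. The replacement error is therefore $O(\eta\sum_j\gamma_j(k+j))$ for each $s$, so $O(\eta k^2)$ summed over the $k$ indices $s$. Multiplied by $\gamma_k\eta$ and summed over $k$, this is $O(\eta^2\sum_k\gamma_k k^2)=O(\eta^2)$. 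The summability hypothesis is used precisely here. The result is
\[
\vtheta_{t+1}=\vtheta_t-\eta\,\main{t}(\vtheta_t)-\eta^2\corr{t}(\vtheta_t)+O(\eta^3)
\]
uniformly in $t$, with $\corr{t}$ as in \eqref{eq:tetoJR}.

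\emph{Step 3 (stability).} Set $e_t:=\vtheta_t-\tilde{\vtheta}_t$ and subtract the two recursions. This needs uniform Lipschitz bounds for $\main{t}$ and $\corr{t}$. For $\main{t}$ the constant is $\sum_k\gamma_k<\infty$. For $\corr{t}$, differentiating \eqref{eq:tetoJR} brings in second derivatives of $\vF_t$ and first derivatives of the $\vF_s$. These are bounded by $\sum_k\gamma_k k$ and $\sum_{k_1,k_2}\gamma_{k_1,k_2}k_1$-type sums, all finite under the assumption. We then get $\norm{e_{t+1}}_\infty\le(1+C\eta)\norm{e_t}_\infty+C\eta^3$ with $e_0=0$. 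Iterating over $t\le T/\eta$ gives $\norm{e_t}_\infty\le C\eta^2 e^{CT}$.

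I expect Step 2 to be the main obstacle. The nested replacement of the histories of the $\vF_s$ must be organized so that the errors, which grow quadratically in the lag, are absorbed by exactly the weights $k_1^2$ and $k_1k_2$ in the hypothesis. The bounds must also hold uniformly in $t$, even though the number of memory terms grows with $t$. A further requirement is that the iterates stay inside $\Theta$, which I would handle as in the cited work by assuming the trajectories remain in the convex domain up to the horizon. I would first try to index every error term by lags $(k,j)$ and bound it by $\gamma$-weighted sums, checking each against the summability condition.
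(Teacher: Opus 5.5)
Your main route, reading the statement off Corollary~3.3 of \citet{cattaneo2025howmemoryoptimization}, is exactly what the paper does; the paper gives no independent argument. Your extra self-contained reproof is a different route. Steps~1 and~2 are sound: the Taylor remainder only involves lags $k_1,k_2\ge 1$, since the lag-$0$ slot does not move, and the history-replacement error is absorbed by $\sum_k\gamma_k k^2$. Step~3, however, has a genuine gap.

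To get $\norm{e_{t+1}}_\infty\le(1+C\eta)\norm{e_t}_\infty+C\eta^3$, you need $\eta\,\mathrm{Lip}(\corr{t})\le C$ uniformly for $t\le T/\eta$. Differentiating $\vtheta\mapsto\partial F_{t,r}/\partial\vtheta_{t-k}(\vtheta,\ldots,\vtheta)$ along the diagonal produces $\sum_{k_2=0}^{t}\partial^2F_{t,r}/\partial\vtheta_{t-k}\partial\vtheta_{t-k_2}$. The $k_2=0$ term, the mixed derivative with the current iterate, is controlled only by $\gamma_{k,0}$. It multiplies $\sum_{s=t-k}^{t-1}\vF_s$, which is genuinely of size $k$. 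So any Lipschitz bound obtained this way contains $\sum_{k=1}^{t}k\,\gamma_{k,0}$.

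The hypothesis $\sum_{k_1,k_2\ge1}\gamma_{k_1,k_2}k_1k_2<\infty$ says nothing about the family $\gamma_{k,0}$, since the weight $k_1k_2$ vanishes there; it need not even be bounded in $k$. So your claim that the relevant sums are ``all finite under the assumption'' fails for exactly this slice. As written, Step~3 needs an extra hypothesis such as $\sum_k k\gamma_{k,0}<\infty$, or $\sum_k\gamma_{k,0}<\infty$ combined with $t\le T/\eta$.

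The fix is to swap the roles of the two recursions. First, run your Step~2 on $\tilde\vtheta$ instead of $\vtheta$.
\begin{itemize}
\item Its increments are $O(\eta)$, because $\norm{\corr{s}}_\infty\le\dim\vtheta\,\gamma_{-1}\sum_k k\gamma_k$.
\item Replacing $\main{s}(\tilde\vtheta_s)$ by $\main{s}(\tilde\vtheta_t)$ uses only the Lipschitz constant $\dim\vtheta\sum_{j\ge0}\gamma_j$ of $\main{s}$.
\end{itemize}
This gives $\tilde\vtheta_{t+1}=\tilde\vtheta_t-\eta\vF_t(\tilde\vtheta_t,\ldots,\tilde\vtheta_0)+O(\eta^3)$ uniformly in $t$. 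Second, use stability of the memory recursion itself, which needs only first-derivative bounds:
$\norm{e_{t+1}}_\infty\le\norm{e_t}_\infty+\eta\dim\vtheta\sum_{k\ge0}\gamma_k\max_{s\le t}\norm{e_s}_\infty+C\eta^3$.
A discrete Gronwall argument on the running maximum then yields the $C\eta^2$ bound for $t\le T/\eta$, with no condition on $\gamma_{k,0}$.
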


The proof of \cref{thm:applying-memory-removal} is a direct application of \cref{th:UCVELF}, with the function $\boldsymbol{F}_t$ given by
\[
  F_{t,j}(\btheta_t,\ldots,\btheta_0)
  :=
  \frac{
    \sum_{k=0}^t \mu_{t,k}\partial_j\mathcal L_k(\btheta_k)
  }{
    \sqrt{
      \sum_{k=0}^t \nu_{t,k}|\partial_j\mathcal L_k(\btheta_k)|^2+\epsilon
    }
  }.
\]
We check the assumptions below.

By the boundedness assumption on
the per-sample losses, there exist constants \(G,H,K<\infty\) such that, uniformly in
the batch index and in \(\btheta\in\Theta\),
\[
  \|\nabla \mathcal L_k(\btheta)\|\le G,\qquad
  \|\nabla^2 \mathcal L_k(\btheta)\|\le H,\qquad
  \|\nabla^3 \mathcal L_k(\btheta)\|\le K.
\]
Since
$\sum_{k=0}^t \mu_{t,k}=1$
and $\sum_{k=0}^t \nu_{t,k}=1$, we have a bound
$|F_{t,j}(\btheta_t,\ldots,\btheta_0)|
\le
G / \sqrt{\epsilon}.
$
Hence the zeroth-order bound in \cref{th:UCVELF} holds with, for example,
$
  \gamma_{-1}:= G / \sqrt{\epsilon}.
$

Now fix \(a\in\range{0}{t}\). Differentiating \(F_{t,j}\) with respect to the
coordinate \(\theta_{a,i}\) gives
\[
  \frac{\partial F_{t,j}}{\partial \theta_{a,i}}
  =
  \frac{\mu_{t,a}\partial_{ij}\mathcal L_a(\btheta_a)}{\prn[\big]{
      \sum_{k=0}^t \nu_{t,k}|\partial_j\mathcal L_k(\btheta_k)|^2+\epsilon
    }^{1 / 2}}
  -
  \sum_{k=0}^t \mu_{t,k}\partial_j\mathcal L_k(\btheta_k) \frac{\nu_{t,a}\partial_j\mathcal L_a(\btheta_a)
  \partial_{ij}\mathcal L_a(\btheta_a)}{\prn[\big]{
      \sum_{k=0}^t \nu_{t,k}|\partial_j\mathcal L_k(\btheta_k)|^2+\epsilon
    }^{3 / 2}}.
\]
Using the bounds above,
\[
  \left|
  \frac{\partial F_{t,j}}{\partial \theta_{a,i}}
  \right|
  \le
  C_\epsilon(\mu_{t,a}+\nu_{t,a})
\]
for a constant \(C_\epsilon\) depending only on \(G,H,\epsilon\), but not on \(t\) or
\(a\). If \(a=t-q\), then
\[
  \mu_{t,t-q}
  =
  \frac{(1-\beta_1)\beta_1^q}{1-\beta_1^{t+1}}
  \le
  \beta_1^q,
  \qquad
  \nu_{t,t-q}
  =
  \frac{(1-\beta_2)\beta_2^q}{1-\beta_2^{t+1}}
  \le
  \beta_2^q.
\]
Thus
\[
  \left|
  \frac{\partial F_{t,j}}{\partial \theta_{t-q,i}}
  \right|
  \le
  C_\epsilon(\beta_1^q+\beta_2^q).
\]
So we may take
\[
  \gamma_q := C_\epsilon(\beta_1^q+\beta_2^q).
\]
Since \(0<\beta_1,\beta_2<1\),
\[
  \sum_{q=1}^\infty q^2\gamma_q<\infty.
\]

Similarly, differentiating once more, every second derivative of \(F_{t,j}\) is a
finite linear combination of terms of the following schematic forms:
\[
  \mathbf 1_{\{a=b\}}\mu_{t,a},
  \qquad
  \mathbf 1_{\{a=b\}}\nu_{t,a},
  \qquad
  \mu_{t,a}\nu_{t,b},
  \qquad
  \nu_{t,a}\mu_{t,b},
  \qquad
  \nu_{t,a}\nu_{t,b},
\]
multiplied by bounded derivatives of the losses and by powers of \(\prn[\big]{
      \sum_{k=0}^t \nu_{t,k}|\partial_j\mathcal L_k(\btheta_k)|^2+\epsilon
    }^{- 1 / 2}\).
The latter are uniformly bounded because \(R_{t,j}\ge \sqrt{\epsilon}\). Therefore, if
\(a=t-q_1\) and \(b=t-q_2\), then
\[
  \left|
  \frac{\partial^2 F_{t,j}}
  {\partial \theta_{t-q_1,i}\partial \theta_{t-q_2,r}}
  \right|
  \le
  C_\epsilon
  \left(
    \mathbf 1_{\{q_1=q_2\}}(\beta_1^{q_1}+\beta_2^{q_1})
    +(\beta_1^{q_1}+\beta_2^{q_1})(\beta_1^{q_2}+\beta_2^{q_2})
  \right).
\]
Thus we may choose
\[
  \gamma_{q_1,q_2}
  :=
  C_\epsilon
  \left(
    \mathbf 1_{\{q_1=q_2\}}(\beta_1^{q_1}+\beta_2^{q_1})
    +(\beta_1^{q_1}+\beta_2^{q_1})(\beta_1^{q_2}+\beta_2^{q_2})
  \right).
\]
Then
\[
  \sum_{q_1,q_2=1}^{\infty}\gamma_{q_1,q_2}q_1q_2<\infty,
\]
again because \(0<\beta_1,\beta_2<1\). Hence all hypotheses of
\cref{th:UCVELF} are satisfied.

The main and correction terms from \cref{eq:main-term-adam-def,eq:corr-term-adam-def} are obtained by directly using \cref{eq:tetoJR} (see also \citet{cattaneo2025howmemoryoptimization}).

\section{Proof of \cref{thm:finite-n-nonzero-eps}}\label{sec:proof-of-prop-expect}

The plan is to first expand $\corrsc{j}{n}(\btheta)$ up to degree-2 monomials in noise derivatives (that is, up to $O(d^2)$) and then calculate $\Epi{\cdot}$ of the result.

\subsection{Expanding the Correction up to Quadratic Terms in Noise}\label{sec:expandion-corr-quadr-noise}

\begin{proposition}[Expansion of the correction up to quadratic terms in noise]\label{prop:exp-corr-quadratic-noise}
  The additive components $L_{n, j}(\btheta) / R_{n, j}(\btheta)$ and $M_{n, j}(\btheta) P_{n, j}(\btheta) / R_{n, j}(\btheta)^3$ of the correction defined in \cref{eq:corr-term-adam-def} admit the following formal expansion up to $O(d^2)$ and vanishing quantities as $\epsilon \to 0$:
\begin{align*}
  L_{n, j}(\btheta) R_{n, j}(\btheta)^{-1} = &\comp[\big]{L_{n, j}(\btheta) R_{n, j}(\btheta)^{-1}}{0} + \comp[\big]{L_{n, j}(\btheta) R_{n, j}(\btheta)^{-1}}{1} + \comp[\big]{L_{n, j}(\btheta) R_{n, j}(\btheta)^{-1}}{2}\\
  &+ O(d^3),
\end{align*}
where\footnote{We skip the monomials of degree exactly 1 in noise derivatives because they are mean-zero and will not influence the expectation $\Epi{\cdot}$.}
{\small
\begin{align*}
  &\comp[\big]{L_{n, j}(\btheta) R_{n, j}(\btheta)^{-1}}{0}
    = \frac{\partial_j \|\vg\|_{1,\epsilon}}{\sqrt{g_j^2+\epsilon}}
      \sum_{k = 0}^{n - 1} \mu_{n, k} (n - k),\\
  &\comp[\big]{L_{n, j}(\btheta) R_{n, j}(\btheta)^{-1}}{1} =
\text{[skipped]},\\
  &\comp[\big]{L_{n, j}(\btheta) R_{n, j}(\btheta)^{-1}}{2}\\
  &\quad =\frac{1}{\sqrt{g_j^2+\epsilon}}
      \sum_i \frac{\hess{i}{j}g_i}{2 (g_i^2+\epsilon)^{5/2}}
      \sum_{l = 0}^{n - 1} \sum_{k, p = 0}^l
      \mu_{n, k}
      \prn*{3 g_i^2 \nu_{l, p}^2 - (g_i^2+\epsilon)\nu_{l, p}
      - 2 (g_i^2+\epsilon)\mu_{l, p}\nu_{l, p}}
    (\noise[i]{p})^2\\
  &\qquad + \frac{1}{\sqrt{g_j^2+\epsilon}}
      \sum_i \frac{1}{\sqrt{g_i^2+\epsilon}}
      \sum_{l = 0}^{n - 1} \sum_{k, p = 0}^l
      \mu_{n, k}
      \brk*{\mu_{l, p} - \frac{g_i^2}{g_i^2+\epsilon}\nu_{l, p}}
      \noise[i j]{k} \noise[i]{p}\\
  &\qquad + \frac{1}{\sqrt{g_j^2+\epsilon}}
      \sum_i \frac{\hess{i}{j}g_i}{(g_i^2+\epsilon)^{5/2}}
\\
  &\phantom{\qquad + \frac{1}{\sqrt{g_j^2+\epsilon}}
    \sum_i}
    \times \sum_{l = 0}^{n - 1} \sum_{k = 0}^l \sum_{0 \le p < q \le l}
      \mu_{n, k}
      \prn*{3 g_i^2 \nu_{l, p}\nu_{l, q}
      - (g_i^2+\epsilon)\mu_{l, p}\nu_{l, q}
      - (g_i^2+\epsilon)\mu_{l, q}\nu_{l, p}}
      \noise[i]{p} \noise[i]{q}\\
  &\qquad - \frac{g_j}{(g_j^2+\epsilon)^{3/2}}
      \sum_{r = 0}^{n} \nu_{n, r}
      \sum_i \frac{\hess{i}{j}}{\sqrt{g_i^2+\epsilon}}
      \sum_{l = 0}^{n - 1} \sum_{k, p = 0}^l
      \mu_{n, k}
      \brk*{\mu_{l, p} - \frac{g_i^2}{g_i^2+\epsilon}\nu_{l, p}}
      \noise[i]{p} \noise[j]{r}\\
  &\qquad - \frac{g_j}{(g_j^2+\epsilon)^{3/2}}
      \sum_{r = 0}^{n} \nu_{n, r}
      \sum_i \frac{g_i}{\sqrt{g_i^2+\epsilon}}
      \sum_{k = 0}^{n - 1} \mu_{n, k} (n - k)
      \noise[i j]{k} \noise[j]{r}\\
  &\qquad + \frac{\partial_j \|\vg\|_{1,\epsilon}}{2 (g_j^2+\epsilon)^{5/2}}
      \sum_{r = 0}^{n}
      \prn*{3 g_j^2 \nu_{n, r}^2 - (g_j^2+\epsilon)\nu_{n, r}}
      \sum_{k = 0}^{n - 1} \mu_{n, k} (n - k)
      (\noise[j]{r})^2\\
  &\qquad + \frac{3 g_j^2 \partial_j \|\vg\|_{1,\epsilon}}{(g_j^2+\epsilon)^{5/2}}
      \sum_{k = 0}^{n - 1} \mu_{n, k} (n - k)
      \sum_{0 \le p < q \le n} \nu_{n, p}\nu_{n, q}\, \noise[j]{p} \noise[j]{q},
 \end{align*}}
and
\begin{align*}
  \frac{M_{n, j}(\btheta) P_{n, j}(\btheta)}{R_{n, j}(\btheta)^3} = &\comp[\bigg]{\frac{M_{n, j}(\btheta) P_{n, j}(\btheta)}{R_{n, j}(\btheta)^3}}{0} + \comp[\bigg]{\frac{M_{n, j}(\btheta) P_{n, j}(\btheta)}{R_{n, j}(\btheta)^3}}{1} + \comp[\bigg]{\frac{M_{n, j}(\btheta) P_{n, j}(\btheta)}{R_{n, j}(\btheta)^3}}{2}\\
  &+ O(d^3),
\end{align*}
where
{\small\begin{align*}
  &\comp[\bigg]{\frac{M_{n, j}(\btheta) P_{n, j}(\btheta)}{R_{n, j}(\btheta)^3}}{0} :=
    \frac{g_j^2}{(g_j^2+\epsilon)^{3/2}}
      \partial_j \norm{\vg}_{1,\epsilon}
    \sum_{k = 0}^{n - 1} (n - k) \nu_{n, k},\\
  &\comp[\bigg]{\frac{M_{n, j}(\btheta) P_{n, j}(\btheta)}{R_{n, j}(\btheta)^3}}{1}
   := \text{[skipped]},\\
  &\comp[\bigg]{\frac{M_{n, j}(\btheta) P_{n, j}(\btheta)}{R_{n, j}(\btheta)^3}}{2}\\
  &\quad :=
\frac{g_j^2 \partial_j \norm{\vg}_{1,\epsilon}}{(g_j^2+\epsilon)^{7/2}}
    \sum_{r = 0}^{n - 1} (n - r) \nu_{n, r}
      \sum_{k = 0}^n
      \prn*{4 g_j^2 \nu_{n, k}^2 - (g_j^2+\epsilon)\nu_{n, k}
      - 2 (g_j^2+\epsilon)\mu_{n, k}\nu_{n, k}} (\noise[j]{k})^2
\\
  &\qquad +
      \frac{2 g_j^2 \partial_j \norm{\vg}_{1,\epsilon}}{(g_j^2+\epsilon)^{7/2}}
    \sum_{r = 0}^{n - 1} (n - r) \nu_{n, r}
  \\
  &\phantom{\qquad +
      \frac{2 g_j^2 \partial_j \norm{\vg}_{1,\epsilon}}{(g_j^2+\epsilon)^{7/2}}
    \sum_{r = 0}^{n - 1}}
    \times \sum_{0 \le p < q \le n}
      \prn*{4 g_j^2 \nu_{n, p}\nu_{n, q}
      - (g_j^2+\epsilon)\mu_{n, p}\nu_{n, q}
      - (g_j^2+\epsilon)\mu_{n, q}\nu_{n, p}}
      \noise[j]{p} \noise[j]{q}
\\
  &\qquad +
      \frac{g_j}{(g_j^2+\epsilon)^{5/2}}
      \sum_i \frac{\hess{i}{j}}{\sqrt{g_i^2+\epsilon}}
      \sum_{l = 0}^{n - 1} \sum_{k, p = 0}^l \sum_{r = 0}^n
      \nu_{n, k}
    \brk*{\mu_{l, p} - \frac{g_i^2}{g_i^2+\epsilon}\nu_{l, p}}\\
  &\phantom{\qquad +
      \frac{g_j}{(g_j^2+\epsilon)^{5/2}}
      \sum_i \frac{\hess{i}{j}}{\sqrt{g_i^2+\epsilon}}
    \sum_{l = 0}^{n - 1} \sum_{k, p = 0}^l \sum_{r = 0}^n}
    \times
      \brk*{(g_j^2+\epsilon)\mu_{n, r} - 2 g_j^2 \nu_{n, r}}
      \noise[i]{p} \noise[j]{r}
\\
  &\qquad +
      \frac{g_j}{(g_j^2+\epsilon)^{5/2}}
      \sum_i \frac{g_i}{\sqrt{g_i^2+\epsilon}}
      \sum_{k = 0}^{n - 1} \sum_{r = 0}^n
      (n - k) \nu_{n, k}
      \brk*{(g_j^2+\epsilon)\mu_{n, r} - 2 g_j^2 \nu_{n, r}}
      \noise[i j]{k} \noise[j]{r}
\\
  &\qquad +
      \frac{\partial_j \norm{\vg}_{1,\epsilon}}{(g_j^2+\epsilon)^{5/2}}
      \sum_{k = 0}^{n - 1} \sum_{r = 0}^n
      (n - k) \nu_{n, k}
      \brk*{(g_j^2+\epsilon)\mu_{n, r} - 2 g_j^2 \nu_{n, r}}
      \noise[j]{k} \noise[j]{r}
\\
  &\qquad -
      \frac{g_j^2 \partial_j \norm{\vg}_{1,\epsilon}}{(g_j^2+\epsilon)^{7/2}}
      \sum_{p = 0}^{n - 1} (n - p) \nu_{n, p}
      \sum_{k = 0}^n \sum_{r = 0}^n
      \nu_{n, k}
      \brk*{(g_j^2+\epsilon)\mu_{n, r} - 2 g_j^2 \nu_{n, r}}
      \noise[j]{k} \noise[j]{r}
\\
  &\qquad +
      \frac{g_j^2 \partial_j \norm{\vg}_{1,\epsilon}}{2 (g_j^2+\epsilon)^{7/2}}
      \sum_{r = 0}^{n - 1} (n - r) \nu_{n, r}
      \sum_{k = 0}^n
      \prn*{3 g_j^2 \nu_{n, k}^2 - (g_j^2+\epsilon)\nu_{n, k}}
      (\noise[j]{k})^2
\\
  &\qquad +
      \frac{3 g_j^4 \partial_j \norm{\vg}_{1,\epsilon}}{(g_j^2+\epsilon)^{7/2}}
      \sum_{r = 0}^{n - 1} (n - r) \nu_{n, r}
      \sum_{0 \le p < q \le n}
      \nu_{n, p}\nu_{n, q}\, \noise[j]{p} \noise[j]{q}
\\
  &\qquad -
      \frac{g_j^3}{(g_j^2+\epsilon)^{5/2}}
      \sum_i \frac{\hess{i}{j}}{\sqrt{g_i^2+\epsilon}}
      \sum_{l = 0}^{n - 1} \sum_{k, p = 0}^l
      \nu_{n, k}
      \brk*{\mu_{l, p} - \frac{g_i^2}{g_i^2+\epsilon}\nu_{l, p}}
      \sum_{r = 0}^n \nu_{n, r} \noise[i]{p} \noise[j]{r}
\\
  &\qquad -
      \frac{g_j^3}{(g_j^2+\epsilon)^{5/2}}
      \sum_i \frac{g_i}{\sqrt{g_i^2+\epsilon}}
      \sum_{k = 0}^{n - 1} (n - k) \nu_{n, k}
      \sum_{r = 0}^n \nu_{n, r} \noise[i j]{k} \noise[j]{r}
\\
  &\qquad -
      \frac{g_j^2 \partial_j \norm{\vg}_{1,\epsilon}}{(g_j^2+\epsilon)^{5/2}}
      \sum_{k = 0}^{n - 1} (n - k) \nu_{n, k}
      \sum_{r = 0}^n \nu_{n, r} \noise[j]{k} \noise[j]{r}
\\
  &\qquad +
      \frac{g_j^2}{2 (g_j^2+\epsilon)^{3/2}}
      \sum_i \frac{\hess{i}{j}g_i}{(g_i^2+\epsilon)^{5/2}}
      \sum_{l = 0}^{n - 1} \sum_{k, p = 0}^l
      \nu_{n, k}
      \prn*{3 g_i^2 \nu_{l, p}^2 - (g_i^2+\epsilon)\nu_{l, p}
      - 2 (g_i^2+\epsilon)\mu_{l, p}\nu_{l, p}}
      (\noise[i]{p})^2
\\
  &\qquad +
      \frac{g_j^2}{(g_j^2+\epsilon)^{3/2}}
    \sum_i \frac{\hess{i}{j}g_i}{(g_i^2+\epsilon)^{5/2}}\\
  &\qquad \qquad \qquad
     \times \sum_{l = 0}^{n - 1} \sum_{k = 0}^l \sum_{0 \le p < q \le l}
      \nu_{n, k}
      \prn*{3 g_i^2 \nu_{l, p}\nu_{l, q}
      - (g_i^2+\epsilon)\mu_{l, p}\nu_{l, q}
      - (g_i^2+\epsilon)\mu_{l, q}\nu_{l, p}}
      \noise[i]{p} \noise[i]{q}
\\
  &\qquad +
      \frac{g_j^2}{(g_j^2+\epsilon)^{3/2}}
      \sum_i \frac{1}{\sqrt{g_i^2+\epsilon}}
      \sum_{l = 0}^{n - 1} \sum_{k, p = 0}^l
      \nu_{n, k}
      \brk*{\mu_{l, p} - \frac{g_i^2}{g_i^2+\epsilon}\nu_{l, p}}
      \noise[i j]{k} \noise[i]{p}
\\
  &\qquad +
      \frac{g_j}{(g_j^2+\epsilon)^{3/2}}
      \sum_i \frac{\hess{i}{j}}{\sqrt{g_i^2+\epsilon}}
      \sum_{l = 0}^{n - 1} \sum_{k, p = 0}^l
      \nu_{n, k}
      \brk*{\mu_{l, p} - \frac{g_i^2}{g_i^2+\epsilon}\nu_{l, p}}
      \noise[j]{k} \noise[i]{p}
\\
  &\qquad +
      \frac{g_j}{(g_j^2+\epsilon)^{3/2}}
      \sum_i \frac{g_i}{\sqrt{g_i^2+\epsilon}}
      \sum_{k = 0}^{n - 1} (n - k) \nu_{n, k}
      \noise[j]{k} \noise[i j]{k}.
\end{align*}}
\end{proposition}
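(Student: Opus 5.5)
We plan a direct Taylor expansion in the noise variables. Write $\partial_j \mathcal{L}_k = g_j + \partial_j d_k$ and $\partial_{ij}\mathcal{L}_k = \partial_i g_j + \partial_{ij} d_k$. Substitute these into the building blocks $M_{l,i}$, $R_{l,i}$, $L_{n,j}$, $P_{n,j}$ of \cref{eq:corr-term-adam-def}, and expand each in powers of the noise derivatives, truncating at total degree two. Before doing this, we record the normalisations $\sum_k \mu_{l,k} = \sum_k \nu_{l,k} = 1$. They imply that $M_{l,i} = g_i + \sum_p \mu_{l,p}\partial_i d_p$ is exact and linear in noise, and that $R_{l,i}^2 = g_i^2+\epsilon + 2g_i\sum_p \nu_{l,p}\partial_i d_p + \sum_p \nu_{l,p}(\partial_i d_p)^2$.

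\textbf{First step: expand the per-step ratio.} We expand $R_{l,i}^{-1}$ and $R_{l,i}^{-3}$ using $(a+x)^{-1/2} = a^{-1/2} - \tfrac12 a^{-3/2}x + \tfrac38 a^{-5/2}x^2 + O(x^3)$ with $a = g_i^2+\epsilon$, and keep terms up to quadratic order in $\partial_i d_p$. Multiplying by $M_{l,i}$ then gives an expansion of $M_{l,i}/R_{l,i}$ of the form
\[
\frac{g_i}{\sqrt{g_i^2+\epsilon}} + \frac{1}{\sqrt{g_i^2+\epsilon}}\sum_p\Big[\mu_{l,p}-\frac{g_i^2}{g_i^2+\epsilon}\nu_{l,p}\Big]\partial_i d_p + (\text{quadratic}).
\]
The quadratic part splits into diagonal $(\partial_i d_p)^2$ terms with coefficient $\tfrac{g_i}{2(g_i^2+\epsilon)^{5/2}}(3g_i^2\nu_{l,p}^2-(g_i^2+\epsilon)\nu_{l,p}-2(g_i^2+\epsilon)\mu_{l,p}\nu_{l,p})$, and off-diagonal $p<q$ terms. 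Two facts about the zeroth-order term will be used throughout. First, $g_i/\sqrt{g_i^2+\epsilon}$ is $\partial_i$ of $\|\vg\|_{1,\epsilon}:=\sum_i\sqrt{g_i^2+\epsilon}$, after composing with the chain rule through $\partial_i g_j$. Second, $\sum_{l=k}^{n-1}1 = n-k$. Together these give $\sum_i \partial_i g_j\, g_i/\sqrt{g_i^2+\epsilon} = \partial_j\|\vg\|_{1,\epsilon}$, which produces the stated degree-0 terms.

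\textbf{Second step: multiply out.} For $L_{n,j}/R_{n,j}$ we multiply three factors: $(\partial_i g_j + \partial_{ij}d_k)$, the expansion of $\sum_{l\ge k} M_{l,i}/R_{l,i}$, and the expansion of $R_{n,j}^{-1}$. We collect the degree-2 products by type: $(\partial_i d_p)^2$, $\partial_{ij}d_k\,\partial_i d_p$, the cross terms $p<q$, $\partial_i d_p\,\partial_j d_r$, $\partial_{ij}d_k\,\partial_j d_r$, and the $(\partial_j d_r)^2$ and $p<q$ terms coming from $R_{n,j}^{-1}$ alone. For $M P/R^3$ we do the same with four factors, namely $M_{n,j}$, $P_{n,j}$ and $R_{n,j}^{-3}$, where
\[
R^{-3} = a^{-3/2} - \tfrac32 a^{-5/2}x + \tfrac{15}{8}a^{-7/2}x^2 .
\]
The extra factor $\partial_j\mathcal{L}_k = g_j+\partial_j d_k$ inside $P$ generates the additional products with $\partial_j d_k$. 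Combining the $R^{-3}$ expansion with $M_{n,j}$ yields the coefficients $(g_j^2+\epsilon)\mu_{n,r}-2g_j^2\nu_{n,r}$ and $4g_j^2\nu^2-\dots$ that appear in the statement. Degree-1 monomials are dropped, as the footnote allows, and any terms that vanish as $\epsilon\to0$ are absorbed.

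\textbf{Main obstacle.} The difficulty is bookkeeping, not analysis. In the $MP/R^3$ expansion many degree-2 products arise, and we must keep straight which ones share indices; this matters because the expectation later distinguishes $p=q$ from $p\neq q$. We will organise the computation by writing each factor as (constant) + (linear form) + (quadratic form), and then use
\[
[XYZ]_2 = X_0Y_0Z_2 + X_0Y_2Z_0 + X_2Y_0Z_0 + X_0Y_1Z_1 + X_1Y_0Z_1 + X_1Y_1Z_0 .
\]
We will verify each listed term against exactly one of these six products, checking coefficients symbolically (e.g.\ by computer algebra on a scalar $\dim\btheta=1$ reduction together with a generic cross-index check).
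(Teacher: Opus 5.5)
Your route is the paper's: substitute $\partial_j\mathcal{L}_k=g_j+\partial_j d_k$ and $\partial_{ij}\mathcal{L}_k=\partial_i g_j+\partial_{ij}d_k$, expand $M_{l,i}/R_{l,i}$ to second order, and multiply out. For $L_{n,j}/R_{n,j}$ your three-factor bookkeeping reproduces the listed terms one by one. Also, no terms vanishing as $\epsilon\to0$ need to be discarded: the expansion is exact through degree two with $\epsilon$ kept.

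One claim in your plan is wrong, and it concerns $M_{n,j}P_{n,j}/R_{n,j}^3$.

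\textbf{What your factorization actually gives.} With $M_{n,j}\cdot P_{n,j}\cdot R_{n,j}^{-3}$, the product $M_{n,j}R_{n,j}^{-3}$ has linear part $(g_j^2+\epsilon)^{-5/2}\sum_r\bigl((g_j^2+\epsilon)\mu_{n,r}-3g_j^2\nu_{n,r}\bigr)\partial_j d_r$. Its diagonal quadratic coefficient is $g_j(g_j^2+\epsilon)^{-7/2}\bigl(\tfrac{15}{2}g_j^2\nu_{n,k}^2-\tfrac32(g_j^2+\epsilon)\nu_{n,k}-3(g_j^2+\epsilon)\mu_{n,k}\nu_{n,k}\bigr)$. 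These are not the coefficients $(g_j^2+\epsilon)\mu_{n,r}-2g_j^2\nu_{n,r}$ and $4g_j^2\nu_{n,k}^2-(g_j^2+\epsilon)\nu_{n,k}-2(g_j^2+\epsilon)\mu_{n,k}\nu_{n,k}$ that you say it yields.

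\textbf{Where the stated coefficients come from.} They arise from $M_{n,j}R_{n,j}^{-2}$. The paper writes $M_{n,j}P_{n,j}R_{n,j}^{-3}=(P_{n,j}R_{n,j}^{-1})(M_{n,j}R_{n,j}^{-2})$ and obtains $R_{n,j}^{-2}$ by squaring the $R_{n,j}^{-1}$ expansion. The statement then lists exactly the pieces of $A_0B_2+A_1B_1+A_2B_0$ with $A=PR^{-1}$ and $B=MR^{-2}$.

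\textbf{Why your term-by-term check fails here.} Matching each listed term against exactly one of your six products will not go through for this part:
\begin{itemize}
\item Your $X_0Y_0Z_2+X_1Y_0Z_1$ is spread over the first, second, sixth, seventh and eighth listed terms. For example, your coefficient $\tfrac{15}{2}$ of $g_j^2\nu_{n,k}^2$ splits as $4+2+\tfrac32$.
\item The third to fifth listed terms each combine a piece of your $X_1Y_1Z_0$ with two thirds of the matching piece of $X_0Y_1Z_1$. The remaining third appears in the ninth to eleventh listed terms.
\end{itemize}

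The totals agree, so your expansion itself is correct. To confirm the statement as written, either switch to the paper's grouping $(PR^{-1})(MR^{-2})$ or compare aggregated coefficients monomial by monomial. Your computer-algebra check will do the latter, as long as you do not expect a term-by-term match.
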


The proof is immediate from lemmas collected below.

\begin{proof}
To get the expansion for $L_{n, j}(\btheta) / R_{n, j}(\btheta)$, multiply the expansions for $L_{n, j}(\btheta)$ (from \cref{lem:expansion-LP}) and $R_{n, j}(\btheta)^{-1}$ (from \cref{lem:expansion-MR}).

To get the expansion for $M_{n, j}(\btheta) P_{n, j}(\btheta) / R_{n, j}(\btheta)^3$, multiply the expansions for $P_{n, j}(\btheta) R_{n, j}(\btheta)^{-1}$ and $M_{n, j}(\btheta) R_{n, j}(\btheta)^{-2}$ from \cref{lem:preparation-for-second-corr}.
\end{proof}

Now we state and prove the lemmas.

We start with a very simple expansion separated for pedagogical reasons to illustrate the approach (all following expansions are done similarly).

\begin{lemma}[Illustration of the approach: expansions for $M_{n, j}(\btheta)$ and $R_{n, j}(\btheta)^{-1}$]\label{lem:expansion-MR}
  We have
  \begin{align*}
    M_{n, j}(\btheta) &= g_j + \sum_{k = 0}^n \mu_{n, k} \noise[j]{k},\numberthis\label{eq:exp-m}\\
R_{n,j}(\btheta)^{-1}
&=
(g_j^2+\epsilon)^{-1/2}
-\frac{g_j}{(g_j^2+\epsilon)^{3/2}}
\sum_{k=0}^n \nu_{n,k}\noise[j]{k}
\\
&\quad
+\frac12\sum_{k=0}^n
\left(
\frac{3g_j^2\nu_{n,k}^2}{(g_j^2+\epsilon)^{5/2}}
-\frac{\nu_{n,k}}{(g_j^2+\epsilon)^{3/2}}
\right)(\noise[j]{k})^2
\\
&\quad
+\frac{3g_j^2}{(g_j^2+\epsilon)^{5/2}}
\sum_{0\le p<q\le n}\nu_{n,p}\nu_{n,q}\,\noise[j]{p}\noise[j]{q}
+O(d^3).
  \end{align*}
\end{lemma}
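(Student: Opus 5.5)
The expansion of $M_{n,j}$ is exact and comes straight from the definition of the noise. Since $\partial_j \mathcal{L}_k = g_j + \partial_j d_k$ and $\sum_{k=0}^n \mu_{n,k} = 1$, substituting into $M_{n,j} = \sum_k \mu_{n,k}\partial_j\mathcal{L}_k$ gives \eqref{eq:exp-m} immediately. This step has no remainder, so there is no $O(d^3)$ term for $M_{n,j}$.

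For $R_{n,j}^{-1}$, I will first expand the radicand. Squaring $\partial_j\mathcal{L}_k = g_j + \partial_j d_k$ and using $\sum_k \nu_{n,k}=1$ gives
\[
\sum_{k=0}^n \nu_{n,k}\abs{\partial_j\mathcal{L}_k}^2+\epsilon = (g_j^2+\epsilon) + \delta,
\qquad
\delta := 2g_j\sum_{k=0}^n \nu_{n,k}\,\partial_j d_k + \sum_{k=0}^n \nu_{n,k}(\partial_j d_k)^2 .
\]
Here $\delta$ is $O(d)$, and its linear part is exactly the first sum. I then write $a := g_j^2+\epsilon > 0$ and apply the second-order Taylor expansion of $x\mapsto (a+x)^{-1/2}$ at $x=0$:
\[
(a+\delta)^{-1/2} = a^{-1/2} - \tfrac12 a^{-3/2}\delta + \tfrac38 a^{-5/2}\delta^2 + O(\delta^3).
\]
The remainder $O(\delta^3)$ is $O(d^3)$ because $\delta = O(d)$. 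Since the expansion is meant in the formal sense stated in \cref{prop:exp-corr-quadratic-noise}, I only need the polynomial truncation plus this bookkeeping of degree.

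Next I collect terms by degree in the noise derivatives.
\begin{itemize}
\item At degree 1, only the linear part of $\delta$ contributes, through $-\tfrac12 a^{-3/2}\cdot 2g_j\sum_k\nu_{n,k}\partial_j d_k$. This equals $-g_j a^{-3/2}\sum_k \nu_{n,k}\partial_j d_k$, as claimed.
\item At degree 2, there are two contributions. The first is $-\tfrac12 a^{-3/2}\sum_k\nu_{n,k}(\partial_j d_k)^2$, coming from the quadratic part of $\delta$. The second is $\tfrac38 a^{-5/2}\cdot 4g_j^2\bigl(\sum_k\nu_{n,k}\partial_j d_k\bigr)^2$, coming from $\delta^2$.
\end{itemize}
To finish, I expand the squared sum into diagonal and off-diagonal parts:
\[
\Bigl(\sum_k\nu_{n,k}\partial_j d_k\Bigr)^2 = \sum_k\nu_{n,k}^2(\partial_j d_k)^2 + 2\sum_{p<q}\nu_{n,p}\nu_{n,q}\,\partial_j d_p\,\partial_j d_q .
\]
Combining, the diagonal terms give $\tfrac12\sum_k\bigl(3g_j^2\nu_{n,k}^2a^{-5/2} - \nu_{n,k}a^{-3/2}\bigr)(\partial_j d_k)^2$. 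The cross terms give $3g_j^2a^{-5/2}\sum_{p<q}\nu_{n,p}\nu_{n,q}\,\partial_j d_p\,\partial_j d_q$. Both match the stated formula.

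There is no genuine obstacle here. The only care needed is the coefficient bookkeeping: the factor $\tfrac38\cdot 4 = \tfrac32$, and the factor 2 from the cross terms, which turns $\tfrac32$ into $3$. The same collect-by-degree procedure is what the later lemmas reuse.
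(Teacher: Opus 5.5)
Your proposal is correct and follows the paper's own proof: you substitute $\partial_j\mathcal{L}_k = g_j + \partial_j d_k$, use $\sum_k \mu_{n,k} = \sum_k \nu_{n,k} = 1$, expand the radicand, and apply the second-order Taylor expansion of $x \mapsto (g_j^2+\epsilon+x)^{-1/2}$. Your coefficient bookkeeping ($\tfrac38\cdot 4 = \tfrac32$, then doubled to $3$ for the off-diagonal terms) correctly yields the stated diagonal and cross terms.
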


\begin{proof}
\Cref{eq:exp-m} follows directly from definitions. The expansion $R_{n, j}(\btheta)^{-1}$ is obtained by the following chain of equalities:
\begin{align*}
  R_{n, j}(\btheta)^{-1} &= \prn[\bigg]{g_j^2 + \epsilon + 2 g_j \sum_{k = 0}^n \nu_{n, k} \noise[j]{k} + \sum_{k = 0}^n \nu_{n, k} (\noise[j]{k})^2}^{-1 / 2}\\
                          &= (g_j^2 + \epsilon)^{-1 / 2} - (g_j^2 + \epsilon)^{-3 / 2} g_j \sum_{k = 0}^n \nu_{n, k} \noise[j]{k} - \frac{1}{2} (g_j^2 + \epsilon)^{-3 / 2} \sum_{k = 0}^n \nu_{n, k} (\noise[j]{k})^2\\
                          &\quad + \frac{3}{2} (g_j^2 + \epsilon)^{-5 / 2} g_j^2 \prn[\bigg]{\sum_{k = 0}^n \nu_{n, k} \noise[j]{k}}^2 + O(d^3),
\end{align*}
where we used $\sum_{k = 0}^n \nu_{n, k} = 1$.
\end{proof}

\begin{lemma}[Warm-up: expansions for $L_{n, j}(\btheta)$ and $P_{n, j}(\btheta)$]\label{lem:expansion-LP}
  The following formal expansions (up to quadratic terms in noise) hold:
  \begin{align*}
  L_{n, j}(\btheta)
&= \sum_i \hess{i}{j}\frac{g_i}{\sqrt{g_i^2 + \epsilon}} \sum_{k = 0}^n \mu_{n, k} (n - k)\\
&\quad + \sum_i \frac{\hess{i}{j}}{\sqrt{g_i^2 + \epsilon}}
\sum_{l = 0}^{n - 1} \sum_{k, p = 0}^l
\mu_{n, k} \brk*{\mu_{l, p} - \frac{g_i^2}{g_i^2 + \epsilon}\nu_{l, p}} \noise[i]{p}\\
&\quad + \sum_i \frac{g_i}{\sqrt{g_i^2 + \epsilon}}
\sum_{k = 0}^n \mu_{n, k} (n - k) \noise[i j]{k}\\
&\quad + \sum_i \frac{\hess{i}{j}g_i}{2 (g_i^2 + \epsilon)^{5 / 2}}
\sum_{l = 0}^{n - 1} \sum_{k, p = 0}^l
\mu_{n, k}
\prn*{3 g_i^2 \nu_{l, p}^2 - (g_i^2 + \epsilon) \nu_{l, p} - 2 (g_i^2 + \epsilon) \mu_{l, p} \nu_{l, p}} (\noise[i]{p})^2\\
&\quad + \sum_i \frac{1}{\sqrt{g_i^2 + \epsilon}}
\sum_{l = 0}^{n - 1} \sum_{k, p = 0}^l
\mu_{n, k} \brk*{\mu_{l, p} - \frac{g_i^2}{g_i^2 + \epsilon}\nu_{l, p}} \noise[i j]{k} \noise[i]{p}\\
&\quad + \sum_i \frac{\hess{i}{j}g_i}{(g_i^2 + \epsilon)^{5 / 2}}
  \\
    &\qquad \times \sum_{l = 0}^{n - 1} \sum_{k = 0}^l \sum_{0 \leq p < q \leq l}
\mu_{n, k}
\prn*{3 g_i^2 \nu_{l, p} \nu_{l, q} - (g_i^2 + \epsilon) \mu_{l, p} \nu_{l, q} - (g_i^2 + \epsilon) \mu_{l, q} \nu_{l, p}} \noise[i]{p} \noise[i]{q}\\
&\quad + O(d^3),\\
    P_{n, j}(\btheta)
&= g_j \partial_j \|\vg\|_{1,\epsilon} \sum_{k = 0}^{n - 1} (n - k) \nu_{n, k}\\
&\quad + g_j \sum_i \frac{\hess{i}{j}}{\sqrt{g_i^2 + \epsilon}}
\sum_{l = 0}^{n - 1} \sum_{k, p = 0}^l
\nu_{n, k} \brk*{\mu_{l, p} - \frac{g_i^2}{g_i^2 + \epsilon}\nu_{l, p}} \noise[i]{p}\\
&\quad + g_j \sum_i \frac{g_i}{\sqrt{g_i^2 + \epsilon}}
\sum_{k = 0}^{n - 1} (n - k) \nu_{n, k} \noise[i j]{k}\\
&\quad + \partial_j \|\vg\|_{1,\epsilon}
\sum_{k = 0}^{n - 1} (n - k) \nu_{n, k} \noise[j]{k}\\
&\quad + g_j \sum_i \frac{\hess{i}{j}g_i}{2 (g_i^2 + \epsilon)^{5 / 2}}\\
&\qquad \times \sum_{l = 0}^{n - 1} \sum_{k, p = 0}^l
\nu_{n, k}
\prn*{3 g_i^2 \nu_{l, p}^2 - (g_i^2 + \epsilon) \nu_{l, p} - 2 (g_i^2 + \epsilon) \mu_{l, p} \nu_{l, p}} (\noise[i]{p})^2\\
    &\quad + g_j \sum_i \frac{\hess{i}{j}g_i}{(g_i^2 + \epsilon)^{5 / 2}}
    \\
&\qquad \times \sum_{l = 0}^{n - 1} \sum_{k = 0}^l \sum_{0 \leq p < q \leq l}
\nu_{n, k}
\prn*{3 g_i^2 \nu_{l, p} \nu_{l, q} - (g_i^2 + \epsilon) \mu_{l, p} \nu_{l, q} - (g_i^2 + \epsilon) \mu_{l, q} \nu_{l, p}} \noise[i]{p} \noise[i]{q}\\
&\quad + g_j \sum_i \frac{1}{\sqrt{g_i^2 + \epsilon}}
\sum_{l = 0}^{n - 1} \sum_{k, p = 0}^l
\nu_{n, k} \brk*{\mu_{l, p} - \frac{g_i^2}{g_i^2 + \epsilon}\nu_{l, p}} \noise[i j]{k} \noise[i]{p}\\
&\quad + \sum_i \frac{\hess{i}{j}}{\sqrt{g_i^2 + \epsilon}}
\sum_{l = 0}^{n - 1} \sum_{k, p = 0}^l
\nu_{n, k} \brk*{\mu_{l, p} - \frac{g_i^2}{g_i^2 + \epsilon}\nu_{l, p}} \noise[j]{k} \noise[i]{p}\\
&\quad + \sum_i \frac{g_i}{\sqrt{g_i^2 + \epsilon}}
\sum_{k = 0}^{n - 1} (n - k) \nu_{n, k} \noise[j]{k} \noise[i j]{k}\\
&\quad + O(d^3).
\end{align*}
\end{lemma}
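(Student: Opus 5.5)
The approach is to write $\partial_j \mathcal{L}_k = g_j + \partial_j d_k$ and $\partial_{ij}\mathcal{L}_k = \partial_i g_j + \partial_{ij} d_k$ everywhere in the definitions of $L_{n,j}$ and $P_{n,j}$ from \cref{thm:applying-memory-removal}. The only nonlinear ingredient is the inner ratio $M_{l,i}/R_{l,i}$, so we first expand it to second order in noise. Multiplying \cref{eq:exp-m} by the expansion of $R_{l,i}^{-1}$ from \cref{lem:expansion-MR} (with $n$ replaced by $l$ and $j$ by $i$), and using $\sum_p \mu_{l,p} = \sum_p \nu_{l,p} = 1$, we expect
\begin{align*}
\frac{M_{l,i}}{R_{l,i}} ={}& \frac{g_i}{\sqrt{g_i^2+\epsilon}} + \frac{1}{\sqrt{g_i^2+\epsilon}}\sum_{p=0}^l \brk*{\mu_{l,p} - \frac{g_i^2}{g_i^2+\epsilon}\nu_{l,p}}\partial_i d_p \\
&+ \frac{g_i}{2(g_i^2+\epsilon)^{5/2}}\sum_{p=0}^l\prn*{3g_i^2\nu_{l,p}^2 - (g_i^2+\epsilon)\nu_{l,p} - 2(g_i^2+\epsilon)\mu_{l,p}\nu_{l,p}}(\partial_i d_p)^2 \\
&+ \frac{g_i}{(g_i^2+\epsilon)^{5/2}}\sum_{0\le p<q\le l}\prn*{3g_i^2\nu_{l,p}\nu_{l,q} - (g_i^2+\epsilon)(\mu_{l,p}\nu_{l,q}+\mu_{l,q}\nu_{l,p})}\partial_i d_p\,\partial_i d_q + O(d^3).
\end{align*}
The cross terms arise from (linear part of $M$) times (linear part of $R^{-1}$), together with $g_i$ times the quadratic part of $R^{-1}$. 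One should double-check the diagonal coefficient: the term $-2(g_i^2+\epsilon)\mu\nu$ comes from the diagonal of the product of the linear parts.

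Next we substitute this into $L_{n,j} = \sum_{k<n}\mu_{n,k}\sum_i \partial_{ij}\mathcal{L}_k \sum_{l=k}^{n-1} M_{l,i}/R_{l,i}$ and swap the order of summation $\sum_{k=0}^{n-1}\sum_{l=k}^{n-1} = \sum_{l=0}^{n-1}\sum_{k=0}^{l}$. The zeroth-order term gives $\sum_i \partial_i g_j\, g_i(g_i^2+\epsilon)^{-1/2}\sum_k \mu_{n,k}(n-k)$. The factor $n-k$ counts the number of $l$; it vanishes at $k=n$, which allows the upper limit to be written as either $n$ or $n-1$. We then collect terms by degree. The linear terms are $\partial_i g_j$ times the linear ratio term, and $\partial_{ij} d_k$ times the zeroth ratio term. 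The quadratic terms are $\partial_i g_j$ times the quadratic ratio terms, and $\partial_{ij} d_k$ times the linear ratio term. These give exactly the listed monomials, and the Hessian-noise factor is of degree one, so no further terms appear at $O(d^2)$.

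For $P_{n,j}$, we proceed identically with weights $\nu_{n,k}$, but now there is an extra factor $\partial_j\mathcal{L}_k = g_j + \partial_j d_k$. We multiply the bracket expansion just obtained (with $\mu_{n,k}\to\nu_{n,k}$) by $g_j$, keeping terms up to degree two. We also multiply it by $\partial_j d_k$, keeping only the zeroth- and first-order parts of the bracket, since the remainder is $O(d^3)$. Using the identity $\sum_i \partial_i g_j\, g_i/\sqrt{g_i^2+\epsilon} = \partial_j\|\vg\|_{1,\epsilon}$ then identifies the leading coefficients.

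The main obstacle is the bookkeeping in the quadratic part of $M_{l,i}/R_{l,i}$. In particular, the diagonal $(\partial_i d_p)^2$ and off-diagonal $p<q$ contributions must be split consistently, and the index ranges must be kept correct after the summation swap. I would verify this step symbolically on a scalar toy case.
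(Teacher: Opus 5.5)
Your proposal is correct and matches the paper's proof. You expand $M_{l,i}/R_{l,i}$ to second order from the expansions of $M$ and $R^{-1}$, substitute $\partial_{ij}\mathcal{L}_k = \partial_i g_j + \partial_{ij} d_k$ (and also $\partial_j\mathcal{L}_k = g_j + \partial_j d_k$ for $P_{n,j}$), truncate at degree two, swap the sums so that $\sum_{l=k}^{n-1}1 = n-k$, and use $\sum_i \partial_i g_j\, g_i/\sqrt{g_i^2+\epsilon} = \partial_j\|\vg\|_{1,\epsilon}$. The diagonal coefficient $3g_i^2\nu_{l,p}^2-(g_i^2+\epsilon)\nu_{l,p}-2(g_i^2+\epsilon)\mu_{l,p}\nu_{l,p}$ you wanted to double-check is indeed right, with the last piece coming from the $p=q$ part of the product of the linear terms of $M_{l,i}$ and $R_{l,i}^{-1}$.
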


\begin{proof}
By the expansion of \cref{lem:expansion-MR}, for every $l \in \range{0}{n-1}$ and coordinate $i$ we have
\begin{align*}
  &M_{l, i}(\btheta) R_{l, i}(\btheta)^{-1}\\
  &\quad = \frac{g_i}{\sqrt{g_i^2 + \epsilon}}
    + \frac{1}{\sqrt{g_i^2 + \epsilon}} \sum_{p = 0}^l
    \brk[\bigg]{\mu_{l, p} - \frac{g_i^2}{g_i^2 + \epsilon}\nu_{l, p}} \noise[i]{p}\\
  &\quad + \frac{g_i}{2 (g_i^2 + \epsilon)^{5/2}} \sum_{p = 0}^l
    \prn[\big]{3 g_i^2 \nu_{l, p}^2 - (g_i^2 + \epsilon)\nu_{l, p} - 2 (g_i^2 + \epsilon)\mu_{l, p}\nu_{l, p}} (\noise[i]{p})^2\\
  &\quad + \frac{g_i}{(g_i^2 + \epsilon)^{5/2}} \sum_{0 \le p < q \le l}
    \prn[\big]{3 g_i^2 \nu_{l, p} \nu_{l, q} - (g_i^2 + \epsilon)\mu_{l, p}\nu_{l, q} - (g_i^2 + \epsilon)\mu_{l, q}\nu_{l, p}} \noise[i]{p} \noise[i]{q}\\
  &\quad + O(d^3).
\end{align*}

Insert this into the definition of $L_{n, j}(\btheta)$.
Using
$\partial_{i j}\mathcal{L}_k(\btheta)=\hess{i}{j}+\noise[i j]{k}$
and keeping only terms up to degree two in the noise variables yields the claimed expansion for $L_{n,j}(\btheta)$ after exchanging the order of summation and using
$\sum_{l = k}^{n - 1} 1 = n-k$.

Similarly, insert the same expansion into the definition of $P_{n, j}(\btheta)$.
Using
\[
\partial_j \mathcal{L}_k(\btheta)=g_j+\noise[j]{k},
\qquad
\partial_{i j}\mathcal{L}_k(\btheta)=\hess{i}{j}+\noise[i j]{k},
\]
and truncating at quadratic order in noise gives the formula for $P_{n,j}(\btheta)$.
The zeroth-order term in $P_{n,j}$ is
\[
g_j \sum_i \hess{i}{j}\frac{g_i}{\sqrt{g_i^2+\epsilon}} \sum_{k = 0}^{n - 1} (n-k)\nu_{n,k}
= g_j \partial_j \|\vg\|_{1,\epsilon} \sum_{k = 0}^{n - 1} (n-k)\nu_{n,k},
\]
since
\[
\partial_j \|\vg\|_{1,\epsilon}
= \sum_i \partial_j \sqrt{g_i^2+\epsilon}
= \sum_i \hess{i}{j} \frac{g_i}{\sqrt{g_i^2+\epsilon}}.
\]
This proves the result.
\end{proof}

\begin{lemma}[Preparation: expansions for $P_{n, j}(\btheta) R_{n, j}(\btheta)^{-1}$ and $M_{n, j}(\btheta) R_{n, j}(\btheta)^{-2}$]\label{lem:preparation-for-second-corr}
  We have
\begin{align*}
  P_{n, j}(\btheta) R_{n, j}(\btheta)^{-1} =& \comp[\big]{P_{n, j}(\btheta) R_{n, j}(\btheta)^{-1}}{0} + \comp[\big]{P_{n, j}(\btheta) R_{n, j}(\btheta)^{-1}}{1} + \comp[\big]{P_{n, j}(\btheta) R_{n, j}(\btheta)^{-1}}{2}\\
  &+ O(d^3),
\end{align*}
where
\begin{align*}
  &\comp[\big]{P_{n, j}(\btheta) R_{n, j}(\btheta)^{-1}}{0}
  := \frac{g_j}{\sqrt{g_j^2+\epsilon}} \partial_j \norm{\vg}_{1,\epsilon} \sum_{k = 0}^{n - 1} (n - k) \nu_{n, k},\\
  &\comp[\big]{P_{n, j}(\btheta) R_{n, j}(\btheta)^{-1}}{1}
    \\
  &\quad := \frac{g_j}{\sqrt{g_j^2+\epsilon}}
      \sum_i \frac{\hess{i}{j}}{\sqrt{g_i^2+\epsilon}}
      \sum_{l = 0}^{n - 1} \sum_{k, p = 0}^l
      \nu_{n, k}
      \brk*{\mu_{l, p} - \frac{g_i^2}{g_i^2+\epsilon}\nu_{l, p}}
      \noise[i]{p}
\\
  &\quad + \frac{g_j}{\sqrt{g_j^2+\epsilon}}
      \sum_i \frac{g_i}{\sqrt{g_i^2+\epsilon}}
      \sum_{k = 0}^{n - 1} (n - k) \nu_{n, k} \noise[i j]{k}
\\
  &\quad + \frac{\partial_j \norm{\vg}_{1,\epsilon}}{\sqrt{g_j^2+\epsilon}}
      \sum_{k = 0}^{n - 1} (n - k) \nu_{n, k} \noise[j]{k}
\\
  &\quad - \frac{g_j^2 \partial_j \norm{\vg}_{1,\epsilon}}{(g_j^2+\epsilon)^{3/2}}
      \sum_{r = 0}^{n - 1} (n - r) \nu_{n, r}
      \sum_{k = 0}^n \nu_{n, k} \noise[j]{k}\\
  &\comp[\big]{P_{n, j}(\btheta) R_{n, j}(\btheta)^{-1}}{2}
  \\
  &\quad := \frac{g_j \partial_j \norm{\vg}_{1,\epsilon}}{2 (g_j^2+\epsilon)^{5/2}}
      \sum_{r = 0}^{n - 1} (n - r) \nu_{n, r}
      \sum_{k = 0}^n
      \prn*{3 g_j^2 \nu_{n, k}^2 - (g_j^2+\epsilon)\nu_{n, k}}
      (\noise[j]{k})^2
\\
  &\quad + \frac{g_j \partial_j \norm{\vg}_{1,\epsilon}}{(g_j^2+\epsilon)^{5/2}}
      \sum_{r = 0}^{n - 1} (n - r) \nu_{n, r}
      \sum_{0 \leq p < q \leq n}
      3 g_j^2 \nu_{n, p} \nu_{n, q} \, \noise[j]{p} \noise[j]{q}
\\
  &\quad - \frac{g_j^2}{(g_j^2+\epsilon)^{3/2}}
      \sum_i \frac{\hess{i}{j}}{\sqrt{g_i^2+\epsilon}}
      \sum_{l = 0}^{n - 1} \sum_{k, p = 0}^l
      \nu_{n, k}
      \brk*{\mu_{l, p} - \frac{g_i^2}{g_i^2+\epsilon}\nu_{l, p}}
      \sum_{r = 0}^n \nu_{n, r} \noise[i]{p} \noise[j]{r}
\\
  &\quad - \frac{g_j^2}{(g_j^2+\epsilon)^{3/2}}
      \sum_i \frac{g_i}{\sqrt{g_i^2+\epsilon}}
      \sum_{k = 0}^{n - 1} (n - k) \nu_{n, k}
      \sum_{r = 0}^n \nu_{n, r} \noise[i j]{k} \noise[j]{r}
\\
  &\quad - \frac{g_j \partial_j \norm{\vg}_{1,\epsilon}}{(g_j^2+\epsilon)^{3/2}}
      \sum_{k = 0}^{n - 1} (n - k) \nu_{n, k}
      \sum_{r = 0}^n \nu_{n, r} \noise[j]{k} \noise[j]{r}
\\
  &\quad + \frac{g_j}{\sqrt{g_j^2+\epsilon}}
    \sum_i \frac{\hess{i}{j}g_i}{2 (g_i^2+\epsilon)^{5/2}}\\
  &\qquad \times
      \sum_{l = 0}^{n - 1} \sum_{k, p = 0}^l
      \nu_{n, k}
      \prn*{3 g_i^2 \nu_{l, p}^2 - (g_i^2+\epsilon)\nu_{l, p}
      - 2 (g_i^2+\epsilon)\mu_{l, p}\nu_{l, p}}
      (\noise[i]{p})^2
\\
  &\quad + \frac{g_j}{\sqrt{g_j^2+\epsilon}}
    \sum_i \frac{\hess{i}{j}g_i}{(g_i^2+\epsilon)^{5/2}}\\
  &\qquad \times
      \sum_{l = 0}^{n - 1} \sum_{k = 0}^l
      \sum_{0 \leq p < q \leq l}
      \nu_{n, k}
      \prn*{3 g_i^2 \nu_{l, p} \nu_{l, q}
      - (g_i^2+\epsilon)\mu_{l, p}\nu_{l, q}
      - (g_i^2+\epsilon)\mu_{l, q}\nu_{l, p}}
      \noise[i]{p} \noise[i]{q}
\\
  &\quad + \frac{g_j}{\sqrt{g_j^2+\epsilon}}
      \sum_i \frac{1}{\sqrt{g_i^2+\epsilon}}
      \sum_{l = 0}^{n - 1} \sum_{k, p = 0}^l
      \nu_{n, k}
      \brk*{\mu_{l, p} - \frac{g_i^2}{g_i^2+\epsilon}\nu_{l, p}}
      \noise[i j]{k} \noise[i]{p}
\\
  &\quad + \frac{1}{\sqrt{g_j^2+\epsilon}}
      \sum_i \frac{\hess{i}{j}}{\sqrt{g_i^2+\epsilon}}
      \sum_{l = 0}^{n - 1} \sum_{k, p = 0}^l
      \nu_{n, k}
      \brk*{\mu_{l, p} - \frac{g_i^2}{g_i^2+\epsilon}\nu_{l, p}}
      \noise[j]{k} \noise[i]{p}
\\
  &\quad + \frac{1}{\sqrt{g_j^2+\epsilon}}
      \sum_i \frac{g_i}{\sqrt{g_i^2+\epsilon}}
      \sum_{k = 0}^{n - 1} (n - k) \nu_{n, k}
      \noise[j]{k} \noise[i j]{k},
\end{align*}
and
\begin{align*}
  M_{n, j}(\btheta) R_{n, j}(\btheta)^{-2} =& \comp[\big]{M_{n, j}(\btheta) R_{n, j}(\btheta)^{-2}}{0} + \comp[\big]{M_{n, j}(\btheta) R_{n, j}(\btheta)^{-2}}{1} + \comp[\big]{M_{n, j}(\btheta) R_{n, j}(\btheta)^{-2}}{2}\\
  &+ O(d^3),
\end{align*}
where
\begin{align*}
  &\comp[\big]{M_{n, j}(\btheta) R_{n, j}(\btheta)^{-2}}{0} := \frac{g_j}{g_j^2+\epsilon},\\
  &\comp[\big]{M_{n, j}(\btheta) R_{n, j}(\btheta)^{-2}}{1}
    := \frac{1}{(g_j^2+\epsilon)^2}
      \sum_{k = 0}^n \prn[\big]{(g_j^2+\epsilon)\mu_{n, k} - 2 g_j^2 \nu_{n, k}} \noise[j]{k},\\
  &\comp[\big]{M_{n, j}(\btheta) R_{n, j}(\btheta)^{-2}}{2}
  := \frac{g_j}{(g_j^2+\epsilon)^3}
      \sum_{k = 0}^n
      \Bigl(4 g_j^2 \nu_{n, k}^2 - (g_j^2+\epsilon)\nu_{n, k}
      - 2 (g_j^2+\epsilon)\mu_{n, k}\nu_{n, k}\Bigr)(\noise[j]{k})^2
\\
  &\quad + \frac{2 g_j}{(g_j^2+\epsilon)^3}
      \sum_{0 \le p < q \le n}
      \Bigl(4 g_j^2 \nu_{n, p}\nu_{n, q}
      - (g_j^2+\epsilon)\mu_{n, p}\nu_{n, q}
      - (g_j^2+\epsilon)\mu_{n, q}\nu_{n, p}\Bigr)
      \noise[j]{p} \noise[j]{q}.
\end{align*}
\end{lemma}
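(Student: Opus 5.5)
We plan to derive both expansions by multiplying truncated power series in the noise variables $\partial_i d_k$, $\partial_{ij} d_k$, working throughout modulo $O(d^3)$ as in \cref{lem:expansion-MR}. For $P_{n,j}(\btheta) R_{n,j}(\btheta)^{-1}$, we take the expansion of $P_{n,j}(\btheta)$ from \cref{lem:expansion-LP}, grouped by degree as $P^{[0]}+P^{[1]}+P^{[2]}$, and the expansion of $R_{n,j}(\btheta)^{-1}$ from \cref{lem:expansion-MR}, grouped as $R^{[0]}+R^{[1]}+R^{[2]}$. The product is then
\[
P^{[0]}R^{[0]} + \prn{P^{[1]}R^{[0]}+P^{[0]}R^{[1]}} + \prn{P^{[2]}R^{[0]}+P^{[1]}R^{[1]}+P^{[0]}R^{[2]}}.
\]
We read off the terms in this order.

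\begin{itemize}
\item The degree-zero term is $g_j \partial_j\norm{\vg}_{1,\epsilon}\sum_k (n-k)\nu_{n,k}\cdot (g_j^2+\epsilon)^{-1/2}$, as claimed.
\item In degree one, $P^{[1]}R^{[0]}$ produces the first three listed lines. The product $P^{[0]}R^{[1]}$ produces the fourth line with the factor $-g_j^2/(g_j^2+\epsilon)^{3/2}$, using $P^{[0]} = g_j\cdot(\dots)$ and $R^{[1]} = -g_j(g_j^2+\epsilon)^{-3/2}\sum_k\nu_{n,k}\partial_j d_k$.
\item In degree two, $P^{[0]}R^{[2]}$ gives the two lines involving $3g_j^2\nu^2-(g_j^2+\epsilon)\nu$ and the $p<q$ cross term. $P^{[1]}R^{[1]}$ gives the three lines with the factor $\sum_r\nu_{n,r}\partial_j d_r$; the $g_j$ inside $P^{[1]}$ combines with the $-g_j$ of $R^{[1]}$ to give the $-g_j^2$ and $-g_j$ prefactors. $P^{[2]}R^{[0]}$ gives the last five lines, namely the five degree-two terms of $P_{n,j}(\btheta)$ divided by $\sqrt{g_j^2+\epsilon}$.
\end{itemize}

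For $M_{n,j}(\btheta)R_{n,j}(\btheta)^{-2}$ we avoid squaring the previous series. We expand $R_{n,j}^{-2}=(g_j^2+\epsilon+2g_jS_1+S_2)^{-1}$ directly, where $S_1=\sum_k\nu_{n,k}\partial_j d_k$ and $S_2=\sum_k\nu_{n,k}(\partial_j d_k)^2$, and we use $\sum_k\nu_{n,k}=1$. This gives
\[
(g_j^2+\epsilon)^{-1}-(g_j^2+\epsilon)^{-2}(2g_jS_1+S_2)+4g_j^2(g_j^2+\epsilon)^{-3}S_1^2+O(d^3).
\]
We then multiply by $M_{n,j}=g_j+T_1$ with $T_1=\sum_k\mu_{n,k}\partial_j d_k$.

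\begin{itemize}
\item Degree zero is $g_j/(g_j^2+\epsilon)$.
\item Degree one is $(g_j^2+\epsilon)^{-1}T_1-2g_j^2(g_j^2+\epsilon)^{-2}S_1$, which matches the stated coefficient $(g_j^2+\epsilon)\mu_{n,k}-2g_j^2\nu_{n,k}$.
\item Degree two is $-g_j(g_j^2+\epsilon)^{-2}S_2+4g_j^3(g_j^2+\epsilon)^{-3}S_1^2-2g_j(g_j^2+\epsilon)^{-2}S_1T_1$. Splitting $S_1^2$ and $S_1T_1$ into diagonal ($k=k$) and off-diagonal ($p<q$) parts and symmetrizing $\mu_p\nu_q+\mu_q\nu_p$ gives the two displayed lines with the common factor $g_j/(g_j^2+\epsilon)^3$.
\end{itemize}

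None of this is conceptually difficult. The main obstacle is bookkeeping, specifically the degree-two cross terms in $P^{[1]}R^{[1]}$. Each of the four degree-one pieces of $P$ must be paired with $R^{[1]}$. The pairing of the fourth piece, $-g_j^2\partial_j\norm{\vg}_{1,\epsilon}\sum_r(n-r)\nu_{n,r}S_1/(g_j^2+\epsilon)^{3/2}$, belongs to $\comp{P R^{-1}}{1}$ but not to $P^{[1]}$. We must therefore be careful not to double count: the pieces multiplied by $R^{[1]}$ are only the first three degree-one terms of $P_{n,j}$ from \cref{lem:expansion-LP}. We will verify that exactly three $P^{[1]}R^{[1]}$ lines appear, with prefactors $g_j\cdot(-g_j)(g_j^2+\epsilon)^{-3/2}$ for the first two and $-g_j(g_j^2+\epsilon)^{-3/2}\partial_j\norm{\vg}_{1,\epsilon}$ for the third. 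As a consistency check, we will confirm that the stated expansion of $M R^{-2}$ agrees with squaring the $R^{-1}$ series from \cref{lem:expansion-MR}.
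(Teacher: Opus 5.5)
Your proposal is correct and follows the paper's proof. As in the paper, you multiply the series for $P_{n,j}(\btheta)$ and $R_{n,j}(\btheta)^{-1}$ from \cref{lem:expansion-LP,lem:expansion-MR} degree by degree, and you multiply the expansion of $M_{n,j}(\btheta)$ by one of $R_{n,j}(\btheta)^{-2}$. The only cosmetic difference is that the paper obtains $R_{n,j}(\btheta)^{-2}$ by squaring the $R_{n,j}(\btheta)^{-1}$ series, whereas you expand $(g_j^2+\epsilon+2g_jS_1+S_2)^{-1}$ directly as a geometric series; this gives the same intermediate expansion with less cross-term bookkeeping.
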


\begin{proof}
The expansion for $P_{n, j}(\btheta) R_{n, j}(\btheta)^{-1}$ follows by multiplying the expansions for $R_{n, j}(\btheta)^{-1}$ (from \cref{lem:expansion-MR}) and $P_{n, j}(\btheta)$ (from \cref{lem:expansion-LP}).

Raising the expansion for $R_{n, j}(\btheta)^{-1}$ (from \cref{lem:expansion-MR}) to the second power yields
\begin{align*}
  R_{n, j}(\btheta)^{-2}
  =& \frac{1}{g_j^2+\epsilon}\\
&- \frac{2 g_j}{(g_j^2+\epsilon)^2} \sum_{k=0}^{n} \nu_{n, k} \noise[j]{k}\\
&+ \frac{1}{(g_j^2+\epsilon)^3}
\sum_{k = 0}^n \bigl(4 g_j^2 \nu_{n, k}^{2} - (g_j^2+\epsilon)\nu_{n, k}\bigr) (\noise[j]{k})^2\\
&+ \frac{8 g_j^2}{(g_j^2+\epsilon)^3}
\sum_{0 \le p < q \le n} \nu_{n, p} \nu_{n, q} \noise[j]{p} \noise[j]{q}
+ O(d^3).
\end{align*}
Multiplying this by the expansion for $M_{n, j}(\btheta)$ (from \cref{lem:expansion-MR}), we obtain the expansion for $M_{n, j}(\btheta) R_{n, j}(\btheta)^{-2}$, concluding the proof.
\end{proof}

\subsection{Calculating the Expectation of the Result}

Next, we calculate $\Epi{\cdot}$ of the result.

\begin{proposition}[Calculating $\Epi{\cdot}$ of the expansions obtained]\label{prop:e-pi-of-res}
  We have
  {\small
  \begin{align*}
    &\Eletpi \frac{L_{n, j}(\btheta)}{R_{n, j}(\btheta)}
  = \frac{\partial_j \|\vg\|_{1,\epsilon}}{\sqrt{g_j^2+\epsilon}}
   \sum_{k=0}^{n-1}\mu_{n,k}(n-k)\\
&\quad + \frac{1}{\sqrt{g_j^2+\epsilon}}
\sum_i \frac{\hess{i}{j}g_i}{2(g_i^2+\epsilon)^{5/2}}
\sum_{l=0}^{n-1}\sum_{k,p=0}^{l}
\mu_{n,k}
\prn*{3g_i^2\nu_{l,p}^2-(g_i^2+\epsilon)\nu_{l,p}-2(g_i^2+\epsilon)\mu_{l,p}\nu_{l,p}}
\,\Eletpi (\noise[i]{0})^2
\\
&\quad + \frac{1}{\sqrt{g_j^2+\epsilon}}
\sum_i \frac{1}{\sqrt{g_i^2+\epsilon}}
\sum_{l=0}^{n-1}\sum_{k=0}^{l}
\mu_{n,k}
\brk*{\mu_{l,k}-\frac{g_i^2}{g_i^2+\epsilon}\nu_{l,k}}
\,\Eletpi \noise[ij]{0}\noise[i]{0}
\\
&\quad - \frac{g_j}{(g_j^2+\epsilon)^{3/2}}
\sum_i \frac{\hess{i}{j}}{\sqrt{g_i^2+\epsilon}}
\sum_{l=0}^{n-1}\sum_{k,p=0}^{l}
\mu_{n,k}\nu_{n,p}
\brk*{\mu_{l,p}-\frac{g_i^2}{g_i^2+\epsilon}\nu_{l,p}}
\,\Eletpi \noise[i]{0}\noise[j]{0}
\\
&\quad - \frac{g_j}{(g_j^2+\epsilon)^{3/2}}
\sum_i \frac{g_i}{\sqrt{g_i^2+\epsilon}}
\sum_{k=0}^{n-1}\mu_{n,k}(n-k)\nu_{n,k}
\,\Eletpi \noise[ij]{0}\noise[j]{0}
\\
&\quad + \frac{\partial_j\|\vg\|_{1,\epsilon}}{2(g_j^2+\epsilon)^{5/2}}
\sum_{r=0}^{n}\prn*{3g_j^2\nu_{n,r}^2-(g_j^2+\epsilon)\nu_{n,r}}
\sum_{k=0}^{n-1}\mu_{n,k}(n-k)
\,\Eletpi (\noise[j]{0})^2
\\
&\quad + O(d^3)+o_n(b^{-1})
  \end{align*}}
and
{\small
\begin{align*}
  &\Eletpi \frac{M_{n, j}(\btheta) P_{n, j}(\btheta)}{R_{n, j}(\btheta)^3}\\
  &\quad = \frac{g_j^2}{(g_j^2+\epsilon)^{3/2}}
      \partial_j \norm{\vg}_{1,\epsilon}
      \sum_{k = 0}^{n - 1} (n - k) \nu_{n, k}
\\
  &\qquad + \frac{g_j^2 \partial_j \norm{\vg}_{1,\epsilon}}{(g_j^2+\epsilon)^{7/2}}
      \sum_{r = 0}^{n - 1} (n - r) \nu_{n, r}
      \sum_{k = 0}^n
      \prn*{4 g_j^2 \nu_{n, k}^2 - (g_j^2+\epsilon)\nu_{n, k}
      - 2 (g_j^2+\epsilon)\mu_{n, k}\nu_{n, k}}
      \Eletpi (\noise[j]{0})^2
\\
  &\qquad + \frac{g_j}{(g_j^2+\epsilon)^{5/2}} \sum_i \frac{\hess{i}{j}}{\sqrt{g_i^2+\epsilon}}\\
  &\qquad \qquad \times \sum_{l = 0}^{n - 1} \sum_{k, p = 0}^l
      \nu_{n, k}
      \brk*{\mu_{l, p} - \frac{g_i^2}{g_i^2+\epsilon}\nu_{l, p}}
      \brk*{(g_j^2+\epsilon)\mu_{n, p} - 2 g_j^2 \nu_{n, p}}
      \Eletpi \noise[i]{0} \noise[j]{0}
\\
  &\qquad + \frac{g_j}{(g_j^2+\epsilon)^{5/2}} \sum_i \frac{g_i}{\sqrt{g_i^2+\epsilon}}
      \sum_{k = 0}^{n - 1}
      (n - k) \nu_{n, k}
      \brk*{(g_j^2+\epsilon)\mu_{n, k} - 2 g_j^2 \nu_{n, k}}
      \Eletpi \noise[i j]{0} \noise[j]{0}
\\
  &\qquad + \frac{\partial_j \norm{\vg}_{1,\epsilon}}{(g_j^2+\epsilon)^{5/2}}
      \sum_{k = 0}^{n - 1}
      (n - k) \nu_{n, k}
      \brk*{(g_j^2+\epsilon)\mu_{n, k} - 2 g_j^2 \nu_{n, k}}
      \Eletpi (\noise[j]{0})^2
\\
  &\qquad - \frac{g_j^2 \partial_j \norm{\vg}_{1,\epsilon}}{(g_j^2+\epsilon)^{7/2}}
      \sum_{p = 0}^{n - 1} (n - p) \nu_{n, p}
      \sum_{k = 0}^n
      \nu_{n, k}
      \brk*{(g_j^2+\epsilon)\mu_{n, k} - 2 g_j^2 \nu_{n, k}}
      \Eletpi (\noise[j]{0})^2
\\
  &\qquad + \frac{g_j^2 \partial_j \norm{\vg}_{1,\epsilon}}{2 (g_j^2+\epsilon)^{7/2}}
      \sum_{r = 0}^{n - 1} (n - r) \nu_{n, r}
      \sum_{k = 0}^n
      \prn*{3 g_j^2 \nu_{n, k}^2 - (g_j^2+\epsilon)\nu_{n, k}}
      \Eletpi (\noise[j]{0})^2
\\
  &\qquad - \frac{g_j^3}{(g_j^2+\epsilon)^{5/2}} \sum_i \frac{\hess{i}{j}}{\sqrt{g_i^2+\epsilon}}
      \sum_{l = 0}^{n - 1} \sum_{k, p = 0}^l
      \nu_{n, k}
      \brk*{\mu_{l, p} - \frac{g_i^2}{g_i^2+\epsilon}\nu_{l, p}}
      \nu_{n, p}\,
      \Eletpi \noise[i]{0} \noise[j]{0}
\\
  &\qquad - \frac{g_j^3}{(g_j^2+\epsilon)^{5/2}} \sum_i \frac{g_i}{\sqrt{g_i^2+\epsilon}}
      \sum_{k = 0}^{n - 1} (n - k) \nu_{n, k}^2\,
      \Eletpi \noise[i j]{0} \noise[j]{0}
\\
  &\qquad - \frac{g_j^2 \partial_j \norm{\vg}_{1,\epsilon}}{(g_j^2+\epsilon)^{5/2}}
      \sum_{k = 0}^{n - 1} (n - k) \nu_{n, k}^2\,
      \Eletpi (\noise[j]{0})^2
\\
  &\qquad + \frac{g_j^2}{2 (g_j^2+\epsilon)^{3/2}}
    \sum_i \frac{\hess{i}{j}g_i}{(g_i^2+\epsilon)^{5/2}}
  \\
   &\qquad \qquad \times \sum_{l = 0}^{n - 1} \sum_{k, p = 0}^l
      \nu_{n, k}
      \prn*{3 g_i^2 \nu_{l, p}^2 - (g_i^2+\epsilon)\nu_{l, p}
      - 2 (g_i^2+\epsilon)\mu_{l, p}\nu_{l, p}}
      \Eletpi (\noise[i]{0})^2
\\
  &\qquad + \frac{g_j^2}{(g_j^2+\epsilon)^{3/2}}
      \sum_i \frac{1}{\sqrt{g_i^2+\epsilon}}
      \sum_{l = 0}^{n - 1} \sum_{k = 0}^l
      \nu_{n, k}
      \brk*{\mu_{l, k} - \frac{g_i^2}{g_i^2+\epsilon}\nu_{l, k}}
      \Eletpi \noise[i j]{0} \noise[i]{0}
\\
  &\qquad + \frac{g_j}{(g_j^2+\epsilon)^{3/2}}
      \sum_i \frac{\hess{i}{j}}{\sqrt{g_i^2+\epsilon}}
      \sum_{l = 0}^{n - 1} \sum_{k = 0}^l
      \nu_{n, k}
      \brk*{\mu_{l, k} - \frac{g_i^2}{g_i^2+\epsilon}\nu_{l, k}}
      \Eletpi \noise[j]{0} \noise[i]{0}
\\
  &\qquad + \frac{g_j}{(g_j^2+\epsilon)^{3/2}}
      \sum_i \frac{g_i}{\sqrt{g_i^2+\epsilon}}
      \sum_{k = 0}^{n - 1} (n - k) \nu_{n, k}
      \Eletpi \noise[i j]{0} \noise[j]{0}
\\
  &\qquad + O(d^3) + o_n(b^{-1}).
\end{align*}
}
\end{proposition}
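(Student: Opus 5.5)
The plan is to apply $\Eletpi$ term by term to the quadratic expansions of \cref{prop:exp-corr-quadratic-noise}. The linear monomials are already discarded, since they are mean zero: $\Eletpi \noise[i]{k} = \Eletpi \noise[ij]{k} = 0$ because each mini-batch loss is an unbiased average under the uniform permutation. The zeroth-order terms are deterministic and carry over unchanged. So the whole content lies in evaluating the expectations of the degree-two monomials $X_p Y_q$, where $X, Y$ range over $\noise[i]{\cdot}$, $\noise[j]{\cdot}$ and $\noise[ij]{\cdot}$.

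The key tool is exchangeability under without-replacement sampling. Since $\pi$ is uniform, the joint law of $(d_p, d_q)$ depends only on whether $p = q$. Hence $\Eletpi X_p Y_p = \Eletpi X_0 Y_0$ for every $p$.

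For $p \neq q$, I would use $\sum_{k=0}^n d_k = 0$ (identically in $\btheta$, hence also for derivatives). Together with exchangeability this gives
\[
\Eletpi X_p Y_q = -\frac{1}{n}\,\Eletpi X_0 Y_0 .
\]
Every off-diagonal monomial (the double sums over $0\le p<q$, and the $\noise[i]{p}\noise[j]{r}$ sums with $p\neq r$) is then proportional to $\Eletpi X_0Y_0$ times a coefficient of the form $-\frac1n\sum_{p\neq q} w_p w'_q$. Recall that $\Eletpi X_0Y_0 = O(b^{-1})$, by the covariance formula $\Eletpi(\partial_i d_0)(\partial_j d_0) = \frac{n}{(n+1)b-1}\Sigma_{ij}$.

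The next step is to show that each such off-diagonal contribution is $o_n(b^{-1})$. The weights $\mu_{n,k},\nu_{n,k}$ sum to $1$. For fixed $l$ the weights $\mu_{l,p},\nu_{l,p}$ also sum to $1$, while the outer sum over $l$ has $n$ terms with weights $\mu_{n,k}$ concentrated near $k\approx n$. Consequently the off-diagonal sums are $O(1)$, and dividing by $n$ makes them $o_n(1)$.

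The subtle cases are the terms carrying the factor $(n-k)$ or the sum over $l$, such as $\sum_k \mu_{n,k}(n-k)\sum_{r}\nu_{n,r}$. Here I would bound $\sum_k \mu_{n,k}(n-k) \le \beta_1/(1-\beta_1)^2$, uniformly in $n$. The double sums $\sum_{l}\sum_{k,p\le l}\mu_{n,k}\nu_{l,p}$ reduce similarly to geometric series bounded uniformly in $n$.

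The diagonal monomials are handled by setting $p = q$, or $p = r$, or $k = p$. This collapses the double index sums to the single sums displayed in the statement. For example, $\sum_{k,p\le l}\mu_{n,k}[\mu_{l,p}-\dots]\,\Eletpi\noise[ij]{k}\noise[i]{p}$ keeps only $p = k$, and $\sum_r \nu_{n,r}\noise[j]{r}$ paired with $\noise[i]{p}$ yields the factor $\nu_{n,p}$.

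For the $(\noise[i]{p})^2$ terms the expectation is already diagonal and equals $\Eletpi(\noise[i]{0})^2$, which explains why the full $\sum_{k,p}$ survives in those lines. The main obstacle is bookkeeping. It consists of matching each diagonal reduction precisely, for instance the products $\sum_r\nu_{n,r}\cdot\sum_k(\dots)$ versus the collapsed $\sum_k(n-k)\nu_{n,k}^2$, and verifying uniform-in-$n$ summability of every off-diagonal coefficient so that the $-1/n$ correction is genuinely $o_n(b^{-1})$. I would organize this by listing each monomial type from \cref{prop:exp-corr-quadratic-noise} and treating diagonal and off-diagonal parts separately.
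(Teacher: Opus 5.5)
Your proposal is correct and follows the paper's proof: you apply $\Eletpi$ term by term to the quadratic expansions of the correction, discard the cross-batch monomials as $o_n(b^{-1})$, and collapse the surviving index sums onto their diagonals. Your identity $\Eletpi X_pY_q=-\frac{1}{n}\Eletpi X_0Y_0$, obtained from $\sum_k d_k=0$ and exchangeability, is equivalent to the paper's direct permutation computation $\Eletpi \noise[i]{p}\noise[j]{q}=-\Sigma_{ij}/((n+1)b-1)$. Your explicit check that the aggregated off-diagonal weights stay bounded uniformly in $n$ (e.g.\ $\sum_k\mu_{n,k}(n-k)\le\beta_1/(1-\beta_1)^2$) makes rigorous a step the paper leaves implicit, since per-monomial smallness alone would not control sums with many terms.
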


\begin{proof}
Consider the average of the term like $\noise[i]{p} \noise[j]{q}$ where the mini-batch indices $p$ and $q$ are not equal:
\begin{align*}
  \Eletpi \noise[i]{p} \noise[j]{q} &= \Eletpi \frac{1}{b} \sum_{r = p b + 1}^{(p + 1) b} \partial_i (\ell_{\pi(r)} - \mathcal{L}) \frac{1}{b} \sum_{s = q b + 1}^{(q + 1) b} \partial_{j} (\ell_{\pi(s)} - \mathcal{L})\\
                            &= \frac{1}{b^2} \sum_{r = p b + 1}^{(p + 1) b} \sum_{s = q b + 1}^{(q + 1) b} \Eletpi \partial_i (\ell_{\pi(r)} - \mathcal{L}) \partial_{j} (\ell_{\pi(s)} - \mathcal{L})\\
                            &= \Eletpi \partial_i (\ell_{\pi(1)} - \mathcal{L}) \partial_j (\ell_{\pi(2)} - \mathcal{L})\\
                            &= \frac{1}{(n + 1) b ((n + 1) b - 1)} \sum_{1 \leq r_1 \neq r_2 \leq (n + 1) b} \partial_i (\ell_{r_1} - \mathcal{L}) \partial_j (\ell_{r_2} - \mathcal{L})\\
                                    &= \frac{1}{(n + 1) b ((n + 1) b - 1)}\\
  &\qquad \times \prn[\bigg]{\sum_{r_1 = 1}^{(n + 1) b} \partial_i (\ell_{r_1} - \mathcal{L}) \sum_{r_2 = 1}^{(n + 1) b} \partial_j (\ell_{r_2} - \mathcal{L}) - \sum_{r = 1}^{(n + 1) b} \partial_i (\ell_r - \mathcal{L}) \partial_j (\ell_r - \mathcal{L})}\\
                            &= - \frac{1}{(n + 1) b - 1} \underbrace{\frac{1}{(n + 1) b}  \sum_{r = 1}^{(n + 1) b} \partial_i (\ell_r - \mathcal{L}) \partial_j (\ell_r - \mathcal{L})}_{O(1)}\\
  &= O(((n + 1) b)^{-1}) = o_n(b^{-1}),
\end{align*}
so, when taking expectations, we can neglect all second-degree monomials of noise derivatives where the two derivatives correspond to different mini-batches (with indices $p \neq q$ in this example). Having made this observation and recalling the expansions obtained in \cref{prop:exp-corr-quadratic-noise}, it is left to use the linearity of expectation and calculate basic exponential series limits:
\begin{align*}
  &\Eletpi \frac{L_{n, j}(\btheta)}{R_{n, j}(\btheta)}\\
  &\quad = \frac{\partial_j \|\vg\|_{1,\epsilon}}{\sqrt{g_j^2+\epsilon}}
   \sum_{k=0}^{n-1}\mu_{n,k}(n-k)\\
&\qquad + \frac{1}{\sqrt{g_j^2+\epsilon}}
\sum_i \frac{\hess{i}{j}g_i}{2(g_i^2+\epsilon)^{5/2}}\\
&\qquad \qquad \times \sum_{l=0}^{n-1}\sum_{k,p=0}^{l}
\mu_{n,k}
\prn*{3g_i^2\nu_{l,p}^2-(g_i^2+\epsilon)\nu_{l,p}-2(g_i^2+\epsilon)\mu_{l,p}\nu_{l,p}}
\,\Eletpi (\noise[i]{0})^2
\\
&\qquad + \frac{1}{\sqrt{g_j^2+\epsilon}}
\sum_i \frac{1}{\sqrt{g_i^2+\epsilon}}
\sum_{l=0}^{n-1}\sum_{k=0}^{l}
\mu_{n,k}
\brk*{\mu_{l,k}-\frac{g_i^2}{g_i^2+\epsilon}\nu_{l,k}}
\,\Eletpi \noise[ij]{0}\noise[i]{0}
\\
&\qquad - \frac{g_j}{(g_j^2+\epsilon)^{3/2}}
\sum_i \frac{\hess{i}{j}}{\sqrt{g_i^2+\epsilon}}
\sum_{l=0}^{n-1}\sum_{k,p=0}^{l}
\mu_{n,k}\nu_{n,p}
\brk*{\mu_{l,p}-\frac{g_i^2}{g_i^2+\epsilon}\nu_{l,p}}
\,\Eletpi \noise[i]{0}\noise[j]{0}
\\
&\qquad - \frac{g_j}{(g_j^2+\epsilon)^{3/2}}
\sum_i \frac{g_i}{\sqrt{g_i^2+\epsilon}}
\sum_{k=0}^{n-1}\mu_{n,k}(n-k)\nu_{n,k}
\,\Eletpi \noise[ij]{0}\noise[j]{0}
\\
&\qquad + \frac{\partial_j\|\vg\|_{1,\epsilon}}{2(g_j^2+\epsilon)^{5/2}}
\sum_{r=0}^{n}\prn*{3g_j^2\nu_{n,r}^2-(g_j^2+\epsilon)\nu_{n,r}}
\sum_{k=0}^{n-1}\mu_{n,k}(n-k)
\,\Eletpi (\noise[j]{0})^2
\\
&\qquad + O(d^3)+o_n(b^{-1}),
\end{align*}
and similarly,
\begin{align*}
  &\Eletpi \frac{M_{n, j}(\btheta) P_{n, j}(\btheta)}{R_{n, j}(\btheta)^3}\\
  &\quad = \frac{g_j^2}{(g_j^2+\epsilon)^{3/2}}
      \partial_j \norm{\vg}_{1,\epsilon}
      \sum_{k = 0}^{n - 1} (n - k) \nu_{n, k}
\\
  &\qquad + \frac{g_j^2 \partial_j \norm{\vg}_{1,\epsilon}}{(g_j^2+\epsilon)^{7/2}}
      \sum_{r = 0}^{n - 1} (n - r) \nu_{n, r}
      \sum_{k = 0}^n
      \prn*{4 g_j^2 \nu_{n, k}^2 - (g_j^2+\epsilon)\nu_{n, k}
      - 2 (g_j^2+\epsilon)\mu_{n, k}\nu_{n, k}}
      \Eletpi (\noise[j]{0})^2
\\
  &\qquad + \frac{g_j}{(g_j^2+\epsilon)^{5/2}} \sum_i \frac{\hess{i}{j}}{\sqrt{g_i^2+\epsilon}}\\
  &\qquad \qquad \times
      \sum_{l = 0}^{n - 1} \sum_{k, p = 0}^l
      \nu_{n, k}
      \brk*{\mu_{l, p} - \frac{g_i^2}{g_i^2+\epsilon}\nu_{l, p}}
      \brk*{(g_j^2+\epsilon)\mu_{n, p} - 2 g_j^2 \nu_{n, p}}
      \Eletpi \noise[i]{0} \noise[j]{0}
\\
  &\qquad + \frac{g_j}{(g_j^2+\epsilon)^{5/2}} \sum_i \frac{g_i}{\sqrt{g_i^2+\epsilon}}
      \sum_{k = 0}^{n - 1}
      (n - k) \nu_{n, k}
      \brk*{(g_j^2+\epsilon)\mu_{n, k} - 2 g_j^2 \nu_{n, k}}
      \Eletpi \noise[i j]{0} \noise[j]{0}
\\
  &\qquad + \frac{\partial_j \norm{\vg}_{1,\epsilon}}{(g_j^2+\epsilon)^{5/2}}
      \sum_{k = 0}^{n - 1}
      (n - k) \nu_{n, k}
      \brk*{(g_j^2+\epsilon)\mu_{n, k} - 2 g_j^2 \nu_{n, k}}
      \Eletpi (\noise[j]{0})^2
\\
  &\qquad - \frac{g_j^2 \partial_j \norm{\vg}_{1,\epsilon}}{(g_j^2+\epsilon)^{7/2}}
      \sum_{p = 0}^{n - 1} (n - p) \nu_{n, p}
      \sum_{k = 0}^n
      \nu_{n, k}
      \brk*{(g_j^2+\epsilon)\mu_{n, k} - 2 g_j^2 \nu_{n, k}}
      \Eletpi (\noise[j]{0})^2
\\
  &\qquad + \frac{g_j^2 \partial_j \norm{\vg}_{1,\epsilon}}{2 (g_j^2+\epsilon)^{7/2}}
      \sum_{r = 0}^{n - 1} (n - r) \nu_{n, r}
      \sum_{k = 0}^n
      \prn*{3 g_j^2 \nu_{n, k}^2 - (g_j^2+\epsilon)\nu_{n, k}}
      \Eletpi (\noise[j]{0})^2
\\
  &\qquad - \frac{g_j^3}{(g_j^2+\epsilon)^{5/2}} \sum_i \frac{\hess{i}{j}}{\sqrt{g_i^2+\epsilon}}
      \sum_{l = 0}^{n - 1} \sum_{k, p = 0}^l
      \nu_{n, k}
      \brk*{\mu_{l, p} - \frac{g_i^2}{g_i^2+\epsilon}\nu_{l, p}}
      \nu_{n, p}\,
      \Eletpi \noise[i]{0} \noise[j]{0}
\\
  &\qquad - \frac{g_j^3}{(g_j^2+\epsilon)^{5/2}} \sum_i \frac{g_i}{\sqrt{g_i^2+\epsilon}}
      \sum_{k = 0}^{n - 1} (n - k) \nu_{n, k}^2\,
      \Eletpi \noise[i j]{0} \noise[j]{0}
\\
  &\qquad - \frac{g_j^2 \partial_j \norm{\vg}_{1,\epsilon}}{(g_j^2+\epsilon)^{5/2}}
      \sum_{k = 0}^{n - 1} (n - k) \nu_{n, k}^2\,
      \Eletpi (\noise[j]{0})^2
\\
  &\qquad + \frac{g_j^2}{2 (g_j^2+\epsilon)^{3/2}}
      \sum_i \frac{\hess{i}{j}g_i}{(g_i^2+\epsilon)^{5/2}}\\
  &\qquad \qquad \times \sum_{l = 0}^{n - 1} \sum_{k, p = 0}^l
      \nu_{n, k}
      \prn*{3 g_i^2 \nu_{l, p}^2 - (g_i^2+\epsilon)\nu_{l, p}
      - 2 (g_i^2+\epsilon)\mu_{l, p}\nu_{l, p}}
      \Eletpi (\noise[i]{0})^2
\\
  &\qquad + \frac{g_j^2}{(g_j^2+\epsilon)^{3/2}}
      \sum_i \frac{1}{\sqrt{g_i^2+\epsilon}}
      \sum_{l = 0}^{n - 1} \sum_{k = 0}^l
      \nu_{n, k}
      \brk*{\mu_{l, k} - \frac{g_i^2}{g_i^2+\epsilon}\nu_{l, k}}
      \Eletpi \noise[i j]{0} \noise[i]{0}
\\
  &\qquad + \frac{g_j}{(g_j^2+\epsilon)^{3/2}}
      \sum_i \frac{\hess{i}{j}}{\sqrt{g_i^2+\epsilon}}
      \sum_{l = 0}^{n - 1} \sum_{k = 0}^l
      \nu_{n, k}
      \brk*{\mu_{l, k} - \frac{g_i^2}{g_i^2+\epsilon}\nu_{l, k}}
      \Eletpi \noise[j]{0} \noise[i]{0}
\\
  &\qquad + \frac{g_j}{(g_j^2+\epsilon)^{3/2}}
      \sum_i \frac{g_i}{\sqrt{g_i^2+\epsilon}}
      \sum_{k = 0}^{n - 1} (n - k) \nu_{n, k}
      \Eletpi \noise[i j]{0} \noise[j]{0}
\\
  &\qquad + O(d^3) + o_n(b^{-1}),
\end{align*}
concluding the proof.
\end{proof}

\begin{lemma}\label{lem:noise-exp-in-cov-matr}
  We have for all $k \in \range{0}{n}$, $i, j \in \range{1}{\dim \btheta}$
\begin{align*}
&\Eletpi \noise[i]{k} \noise[j]{k} = \frac{n}{(n + 1) b - 1} \Sigma_{i j},\\
&\Eletpi \noise[i j]{k} \noise[j]{k} = \frac{n}{2 ((n + 1) b - 1)} \partial_i \Sigma_{j j}.
\end{align*}
\end{lemma}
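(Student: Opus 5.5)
We write $x_r := \partial_i(\ell_r - \mathcal{L})$ and $y_r := \partial_j(\ell_r - \mathcal{L})$ for $r \in \range{1}{N}$, with $N = (n+1)b$, so that $\sum_r x_r = \sum_r y_r = 0$ by the definition of $\mathcal{L}$. Then
\[
\noise[i]{k} = \frac{1}{b}\sum_{r=kb+1}^{kb+b} x_{\pi(r)}, \qquad \noise[j]{k} = \frac{1}{b}\sum_{r=kb+1}^{kb+b} y_{\pi(r)} .
\]
The plan is the same without-replacement sampling computation as in the proof of \cref{prop:e-pi-of-res}. There the off-diagonal (distinct-sample) pair expectation was already found to be $-\frac{1}{N(N-1)}\sum_r x_r y_r$. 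Now we keep the diagonal terms as well.

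First, expand the product of the two block sums into $b^2$ terms. The $b$ terms with the same position $r=s$ each have expectation $\Eletpi x_{\pi(1)}y_{\pi(1)} = \frac1N\sum_r x_r y_r = \Sigma_{ij}$. The $b(b-1)$ terms with $r\neq s$ each have expectation $\Eletpi x_{\pi(1)}y_{\pi(2)} = -\frac{1}{N-1}\Sigma_{ij}$, by the centering identity above. Both claims use only the exchangeability of $\pi$, so the result does not depend on $k$. Combining,
\[
\Eletpi \noise[i]{k}\noise[j]{k} = \frac{1}{b^2}\Bigl(b - \frac{b(b-1)}{N-1}\Bigr)\Sigma_{ij} = \frac{1}{b}\cdot\frac{N-b}{N-1}\,\Sigma_{ij} = \frac{n}{(n+1)b-1}\,\Sigma_{ij},
\]
since $N - b = nb$. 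This gives the first identity.

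For the second identity, apply the first identity with the pair of noise variables $\noise[ij]{k}$ and $\noise[j]{k}$. The only change is that $x_r$ is replaced by $\partial_i y_r = \partial_{ij}(\ell_r-\mathcal{L})$. This sequence is still centered, because differentiation commutes with averaging over samples. The same counting argument then gives
\[
\Eletpi \noise[ij]{k}\noise[j]{k} = \frac{n}{(n+1)b-1}\cdot\frac1N\sum_r (\partial_i y_r)\, y_r .
\]
It remains to recognize the empirical average. Since $(\partial_i y_r) y_r = \tfrac12\partial_i(y_r^2)$, the average equals $\tfrac12\partial_i\bigl(\frac1N\sum_r y_r^2\bigr) = \tfrac12\partial_i\Sigma_{jj}$. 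Here we differentiate the definition of $\Sigma_{jj}$ as a function of $\btheta$, using the $\mathcal{C}^3$ assumption of \cref{thm:applying-memory-removal}; $\mathcal{L}$ inside $y_r$ depends on $\btheta$, but this is already accounted for because $y_r$ is the full centered quantity. This gives the factor $\frac{n}{2((n+1)b-1)}$.

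The only point needing care is the sign and size of the cross-sample term. It is negative because of sampling without replacement and the exact centering $\sum_r x_r = 0$, and getting it right is what produces the exact finite-population factor $\frac{n}{(n+1)b-1}$ rather than $1/b$.
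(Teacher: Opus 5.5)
Your proof is correct and follows essentially the same route as the paper. The product of block sums splits into $b$ same-sample terms with expectation $\Sigma_{ij}$ and $b(b-1)$ distinct-sample terms with expectation $-\Sigma_{ij}/(N-1)$ by centering, which combine to give $n/((n+1)b-1)$; the paper gets the second identity ``similarly'' from the same two pairwise expectations for $\partial_{ij}(\ell_r-\mathcal{L})$ and $\partial_j(\ell_r-\mathcal{L})$, which is exactly your substitution together with $\frac1N\sum_r (\partial_i y_r)\, y_r = \frac12\partial_i\Sigma_{jj}$.
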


\begin{proof}
For any $r \in \range{1}{(n + 1) b}$ we have
\begin{equation*}
  \Epi{\partial_{i j} (\ell_{\pi(r)} - \mathcal{L}) \partial_j (\ell_{\pi(r)} - \mathcal{L})}
  = \frac{1}{(n + 1) b} \sum_{p = 1}^{(n + 1) b} \partial_{i j} (\ell_p - \mathcal{L}) \partial_j (\ell_p - \mathcal{L}) = \frac{1}{2} \partial_i \Sigma_{j j},
\end{equation*}
and for $r \neq \tilde{r}$,
\begin{align*}
  \Epi{\partial_{i j} (\ell_{\pi(r)} - \mathcal{L}) \partial_j (\ell_{\pi(\tilde{r})} - \mathcal{L})}
  &= \frac{1}{(n + 1) b ((n + 1) b - 1)} \sum_{\substack{p, q = 1 \\ p \neq q}}^{(n + 1) b} \partial_{i j} (\ell_{p} - \mathcal{L}) \partial_j (\ell_{q} - \mathcal{L})\\
  &= - \frac{1}{(n + 1) b ((n + 1) b - 1)} \sum_{\substack{p = 1}}^{(n + 1) b} \partial_{i j} (\ell_p - \mathcal{L}) \partial_j (\ell_p - \mathcal{L})\\
  &= - \frac{1}{2 ((n + 1) b - 1)} \partial_i \Sigma_{j j}.
\end{align*}

Next,
\begin{align*}
  \Eletpi \noise[i]{k} \noise[j]{k} &= \Eletpi \prn[\bigg]{\frac{1}{b} \sum_{r = k b + 1}^{k b + b} (\partial_i \ell_{\pi(r)} - \partial_i \mathcal{L})}  \prn[\bigg]{\frac{1}{b} \sum_{r = k b + 1}^{k b + b} (\partial_j \ell_{\pi(r)} - \partial_j \mathcal{L})}\\
                     &= \frac{1}{b^2} \sum_{r = k b + 1}^{k b + b} \Eletpi (\partial_i \ell_{\pi(r)} - \partial_i \mathcal{L}) (\partial_j \ell_{\pi(r)} - \partial_j \mathcal{L})\\
                     &\quad + \frac{1}{b^2} \sum_{k b + 1 \leq r \neq \tilde{r} \leq k b + b} \Eletpi (\partial_i \ell_{\pi(r)} - \partial_i \mathcal{L})(\partial_j \ell_{\pi(\tilde{r})} - \partial_j \mathcal{L})\\
                            &= \frac{1}{b} \Eletpi (\partial_i \ell_{\pi(1)} - \partial_i \mathcal{L}) (\partial_j \ell_{\pi(1)} - \partial_j \mathcal{L})\\
  &\quad + \frac{b - 1}{b} \Eletpi (\partial_i \ell_{\pi(1)} - \partial_i \mathcal{L})(\partial_j \ell_{\pi(2)} - \partial_j \mathcal{L})\\
                            &= \frac{1}{(n + 1) b^2} \sum_{p = 1}^{(n + 1) b} (\partial_i \ell_{p} - \partial_i \mathcal{L}) (\partial_j \ell_{p} - \partial_j \mathcal{L})\\
  &\quad + \frac{b - 1}{(n + 1) b^2 ((n + 1) b - 1)} \sum_{\substack{p, q = 1 \\ p \neq q}}^{(n + 1) b} (\partial_i \ell_{p} - \partial_i \mathcal{L})(\partial_j \ell_{q} - \partial_j \mathcal{L})\\
                            &= \frac{1}{(n + 1) b^2} \sum_{p = 1}^{(n + 1) b} (\partial_i \ell_{p} - \partial_i \mathcal{L}) (\partial_j \ell_{p} - \partial_j \mathcal{L})\\
  &\quad - \frac{b - 1}{(n + 1) b^2 ((n + 1) b - 1)} \sum_{p = 1}^{(n + 1) b} (\partial_i \ell_{p} - \partial_i \mathcal{L}) (\partial_j \ell_{p} - \partial_j \mathcal{L})\\
                            &= \frac{n}{(n + 1) b - 1} \Sigma_{i j}.
\end{align*}
Similarly,
\begin{align*}
  \Eletpi \noise[i j]{k} \noise[j]{k} &= \frac{n}{(n + 1) b - 1} \frac{1}{(n + 1) b} \sum_{p = 1}^{(n + 1) b} (\partial_{i j} \ell_{p} - \partial_{i j} \mathcal{L}) (\partial_j \ell_{p} - \partial_j \mathcal{L})\\
                              &= \frac{n}{2 ((n + 1) b - 1)} \partial_i \Sigma_{j j}.
                                \qedhere
\end{align*}
\end{proof}

\begin{proof}[Proof of \cref{thm:finite-n-nonzero-eps}]
Combine \cref{prop:e-pi-of-res} and \cref{lem:noise-exp-in-cov-matr}.
\end{proof}

\section{Proof of \cref{thm:limits}}\label{sec:proof-of-limits-thm}

\begin{proof}[Proof of \cref{thm:large-n-limits-eps}]
The limit of \(\fbneps{j}\) follows immediately from
\begin{gather*}
\sum_{k=0}^{n-1}\mu_{n,k}(n-k)
\longrightarrow
(1-\beta_1)\sum_{a=1}^{\infty}a\beta_1^a
=
\frac{\beta_1}{1-\beta_1},
\\
\sum_{k=0}^{n-1}\nu_{n,k}(n-k)
\longrightarrow
\frac{\beta_2}{1-\beta_2}.
\end{gather*}

We also need the elementary limits
\begin{align*}
\sum_{k=0}^{n}\nu_{n,k}^2
&\longrightarrow
\frac{1-\beta_2}{1+\beta_2},
\\
\sum_{k=0}^{n}\mu_{n,k}\nu_{n,k}
&\longrightarrow
\frac{(1-\beta_1)(1-\beta_2)}{1-\beta_1\beta_2},
\\
\sum_{k=0}^{n-1}(n-k)\mu_{n,k}\nu_{n,k}
&\longrightarrow
\frac{\beta_1\beta_2(1-\beta_1)(1-\beta_2)}
{(1-\beta_1\beta_2)^2},
\\
\sum_{k=0}^{n-1}(n-k)\nu_{n,k}^2
&\longrightarrow
\frac{\beta_2^2}{(1+\beta_2)^2}.
\end{align*}
Applying these four limits directly to the definition of
\(A^{(n,\epsilon)}_j\) gives \(A^{(n,\epsilon)}_j\to A^{(\infty,\epsilon)}_j\).
Since
\[
\frac{n}{(n + 1)b-1}=\frac{n}{(n+1)b-1} = b^{-1} + o_n(b^{-1}),
\]
we obtain
\[
\mbnoneneps{j}
=
\mbnoneinfeps{j} + o_n(b^{-1}).
\]

Next, using the limits
\begin{align*}
  \sum_{l = 0}^{n - 1} \sum_{k, p = 0}^l \mu_{n, k} \nu_{l, p}^2
  &\longrightarrow
  \frac{\beta_1}{1-\beta_1}\frac{1-\beta_2}{1+\beta_2},
  \\
  \sum_{l = 0}^{n - 1} \sum_{k, p = 0}^l \mu_{n, k} \nu_{l, p}
  &\longrightarrow
  \frac{\beta_1}{1-\beta_1},
  \\
  \sum_{l = 0}^{n - 1} \sum_{k, p = 0}^l \mu_{n, k}\mu_{l,p}\nu_{l,p}
  &\longrightarrow
  \frac{\beta_1}{1-\beta_1}
    \frac{(1-\beta_1)(1-\beta_2)}{1-\beta_1\beta_2},
  \\
    \sum_{l = 0}^{n - 1} \sum_{k, p = 0}^l \nu_{n, k} \nu_{l, p}^2
  &\longrightarrow
  \frac{\beta_2}{1-\beta_2}\frac{1-\beta_2}{1+\beta_2},
  \\
  \sum_{l = 0}^{n - 1} \sum_{k, p = 0}^l \nu_{n, k} \nu_{l, p}
  &\longrightarrow
  \frac{\beta_2}{1-\beta_2},
  \\
  \sum_{l = 0}^{n - 1} \sum_{k, p = 0}^l \nu_{n, k}\mu_{l,p}\nu_{l,p}
  &\longrightarrow
  \frac{\beta_2}{1-\beta_2}
  \frac{(1-\beta_1)(1-\beta_2)}{1-\beta_1\beta_2},
\end{align*}
we see that
\begin{align*}
&\sum_{l=0}^{n-1}\sum_{k,p=0}^{l}
\mu_{n,k}
\prn[\big]{
3g_i^2\nu_{l,p}^2
-(g_i^2+\epsilon)\nu_{l,p}
-2(g_i^2+\epsilon)\mu_{l,p}\nu_{l,p}
}
  \\
&\quad \longrightarrow
\frac{\beta_1}{1-\beta_1} \prn[\bigg]{3g_i^2\frac{1-\beta_2}{1+\beta_2}
-(g_i^2+\epsilon)
-2(g_i^2+\epsilon)
\frac{(1-\beta_1)(1-\beta_2)}{1-\beta_1\beta_2}},
\end{align*}
and
\begin{align*}
&\sum_{l=0}^{n-1}\sum_{k,p=0}^{l}
\nu_{n,k}
\prn[\big]{
3g_i^2\nu_{l,p}^2
-(g_i^2+\epsilon)\nu_{l,p}
-2(g_i^2+\epsilon)\mu_{l,p}\nu_{l,p}
}
  \\
&\quad \longrightarrow
\frac{\beta_2}{1-\beta_2} \prn[\bigg]{3g_i^2\frac{1-\beta_2}{1+\beta_2}
-(g_i^2+\epsilon)
-2(g_i^2+\epsilon)
\frac{(1-\beta_1)(1-\beta_2)}{1-\beta_1\beta_2}}.
\end{align*}
Substituting these limits into \(B^{(n,\epsilon)}_{i,j}\) gives
\[
B^{(n,\epsilon)}_{i,j}
\longrightarrow B^{(\infty,\epsilon)}_{i,j}.
\]
Thus
\[
\mbntwoneps{j}
=
\mbntwoinfeps{j} + o_n(b^{-1}).
\]

For \(\mbnthreeneps{j}\), the required scalar limits are
\[
\sum_{k=0}^{n-1}(n-k)\mu_{n,k}\nu_{n,k}
\longrightarrow
\frac{\beta_1\beta_2(1-\beta_1)(1-\beta_2)}
{(1-\beta_1\beta_2)^2},
\]
\[
\sum_{k=0}^{n-1}(n-k)\nu_{n,k}
\longrightarrow
\frac{\beta_2}{1-\beta_2},
\qquad
\sum_{k=0}^{n-1}(n-k)\nu_{n,k}^2
\longrightarrow
\frac{\beta_2^2}{(1+\beta_2)^2}.
\]
Together with
\[
\frac{n}{2((n + 1) b-1)} = \frac1{2b} + o_n(b^{-1}),
\]
these give
\[
\mbnthreeneps{j}
=
\mbnthreeinfeps{j} + o_n(b^{-1}).
\]

For \(D^{(n,\epsilon)}_{i,j}\), we use the four limits
\begin{align*}
\sum_{l=0}^{n-1}\sum_{k=0}^{l}\mu_{n,k}\mu_{l,k}
&\longrightarrow
\frac{\beta_1}{1+\beta_1},
\\
\sum_{l=0}^{n-1}\sum_{k=0}^{l}\mu_{n,k}\nu_{l,k}
&\longrightarrow
\frac{\beta_1(1-\beta_2)}{1-\beta_1\beta_2},
\\
\sum_{l=0}^{n-1}\sum_{k=0}^{l}\nu_{n,k}\mu_{l,k}
&\longrightarrow
\frac{\beta_2(1-\beta_1)}{1-\beta_1\beta_2},
\\
\sum_{l=0}^{n-1}\sum_{k=0}^{l}\nu_{n,k}\nu_{l,k}
&\longrightarrow
\frac{\beta_2}{1+\beta_2}.
\end{align*}
Substituting them into \(D^{(n,\epsilon)}_{i,j}\) gives
\(D^{(n,\epsilon)}_{i,j}\to D^{(\infty,\epsilon)}_{i,j}\), and the prefactor
\((n)/(2((n + 1)b-1)) = 1/(2b) + o_n(b^{-1})\) gives
\[
\mbnfourneps{j}
=
\mbnfourinfeps{j} + o_n(b^{-1}).
\]

It remains to treat \(E^{(n,\epsilon)}_{i,j}\). We use the following scalar limits:
\begin{align*}
\sum_{l=0}^{n-1}\sum_{k,p=0}^{l}
\mu_{n,k}\nu_{n,p}\mu_{l,p}
&\longrightarrow
\frac{\beta_1\beta_2(1-\beta_1)(1-\beta_2)}
{(1-\beta_1\beta_2)^2},
\\
\sum_{l=0}^{n-1}\sum_{k,p=0}^{l}
\mu_{n,k}\nu_{n,p}\nu_{l,p}
&\longrightarrow
\frac{\beta_1\beta_2(1-\beta_2)}
{(1+\beta_2)(1-\beta_1\beta_2)},
\\
\sum_{l=0}^{n-1}\sum_{k,p=0}^{l}
\nu_{n,k}\mu_{l,p}\mu_{n,p}
&\longrightarrow
\frac{\beta_1\beta_2(1-\beta_1)}
{(1+\beta_1)(1-\beta_1\beta_2)},
\\
\sum_{l=0}^{n-1}\sum_{k,p=0}^{l}
\nu_{n,k}\mu_{l,p}\nu_{n,p}
&\longrightarrow
\frac{\beta_2^2(1-\beta_1)}
{(1+\beta_2)(1-\beta_1\beta_2)},
\\
\sum_{l=0}^{n-1}\sum_{k,p=0}^{l}
\nu_{n,k}\nu_{l,p}\mu_{n,p}
&\longrightarrow
\frac{\beta_1\beta_2(1-\beta_1)(1-\beta_2)}
{(1-\beta_1\beta_2)^2},
\\
\sum_{l=0}^{n-1}\sum_{k,p=0}^{l}
\nu_{n,k}\nu_{l,p}\nu_{n,p}
&\longrightarrow
\frac{\beta_2^2}{(1+\beta_2)^2},
\\
\sum_{l=0}^{n-1}\sum_{k=0}^{l}
\nu_{n,k}\mu_{l,k}
&\longrightarrow
\frac{\beta_2(1-\beta_1)}
{1-\beta_1\beta_2},
\\
\sum_{l=0}^{n-1}\sum_{k=0}^{l}
\nu_{n,k}\nu_{l,k}
&\longrightarrow
\frac{\beta_2}{1+\beta_2}.
\end{align*}

Substituting these limits into $E^{(n,\epsilon)}_{i,j}$,
we get
\begin{align*}
E^{(n,\epsilon)}_{i,j}
\longrightarrow
E^{(\infty,\epsilon)}_{i,j}
:=&\
-\frac{\beta_1\beta_2(1-\beta_1)(1-\beta_2)}
{(1-\beta_1\beta_2)^2}
+\frac{g_i^2}{g_i^2+\epsilon}
\frac{\beta_1\beta_2(1-\beta_2)}
{(1+\beta_2)(1-\beta_1\beta_2)}
\\
&
-\frac{\beta_1\beta_2(1-\beta_1)}
{(1+\beta_1)(1-\beta_1\beta_2)}
+\frac{2g_j^2}{g_j^2+\epsilon}
\frac{\beta_2^2(1-\beta_1)}
{(1+\beta_2)(1-\beta_1\beta_2)}
\\
&
+\frac{g_i^2}{g_i^2+\epsilon}
\frac{\beta_1\beta_2(1-\beta_1)(1-\beta_2)}
{(1-\beta_1\beta_2)^2}
-2\frac{g_i^2}{g_i^2+\epsilon}\frac{g_j^2}{g_j^2+\epsilon}
\frac{\beta_2^2}{(1+\beta_2)^2}
\\
&
+\frac{g_j^2}{g_j^2+\epsilon}
\frac{\beta_2^2(1-\beta_1)}
{(1+\beta_2)(1-\beta_1\beta_2)}
-\frac{g_i^2}{g_i^2+\epsilon}\frac{g_j^2}{g_j^2+\epsilon}
\frac{\beta_2^2}{(1+\beta_2)^2}
\\
&
-\frac{\beta_2(1-\beta_1)}
{1-\beta_1\beta_2}
+\frac{g_i^2}{g_i^2+\epsilon}
  \frac{\beta_2}{1+\beta_2}
\\
  =&\
-\frac{\beta_1\beta_2(1-\beta_1)(1-\beta_2)}
{(1-\beta_1\beta_2)^2}
+\frac{g_i^2}{g_i^2+\epsilon}
\frac{\beta_1\beta_2(1-\beta_2)}
{(1+\beta_2)(1-\beta_1\beta_2)}
\\
&\quad
-\frac{\beta_1\beta_2(1-\beta_1)}
{(1+\beta_1)(1-\beta_1\beta_2)}
+\frac{g_i^2}{g_i^2+\epsilon}
\frac{\beta_1\beta_2(1-\beta_1)(1-\beta_2)}
{(1-\beta_1\beta_2)^2}
\\
&\quad
+3\frac{g_j^2}{g_j^2+\epsilon}
\frac{\beta_2^2(1-\beta_1)}
{(1+\beta_2)(1-\beta_1\beta_2)}
-3\frac{g_i^2}{g_i^2+\epsilon}
\frac{g_j^2}{g_j^2+\epsilon}
\frac{\beta_2^2}{(1+\beta_2)^2}
\\
&\quad
-\frac{\beta_2(1-\beta_1)}
{1-\beta_1\beta_2}
+\frac{g_i^2}{g_i^2+\epsilon}
\frac{\beta_2}{1+\beta_2}.
\end{align*}
Thus
\[
E^{(n,\epsilon)}_{i,j}
=
E^{(\infty,\epsilon)}_{i,j}+o_n(1).
\]
Since
\[
\frac{n}{(n + 1)b-1}
=
\frac1b+o_n(b^{-1}),
\]
we obtain
\begin{align*}
\mbnfiveneps{j}
&=
\frac{n}{(n + 1)b-1}
\frac{g_j}{g_j^2+\epsilon}
\sum_i
\frac{\hess{i}{j}}{\sqrt{g_i^2+\epsilon}}
E^{(n,\epsilon)}_{i,j}
\Sigma_{ij}
\\
&=
\frac1b
\frac{g_j}{g_j^2+\epsilon}
\sum_i
\frac{\hess{i}{j}}{\sqrt{g_i^2+\epsilon}}
E^{(\infty,\epsilon)}_{i,j}
\Sigma_{ij}
+o_n(b^{-1})
\\
&=
\mbnfiveinfeps{j}+o_n(b^{-1}).
\end{align*}

This proves all claimed limits.
\end{proof}

\begin{proof}[Proof of \cref{thm:large-n-zero-eps-simple-version}]
This is obtained directly by setting $\epsilon = 0$ and simplifying the resulting expressions.
The resulting limit is finite if $g_i \neq 0$ in \eqref{eq:mbn-simple}.
\end{proof}

\section{Simplification of Terms}\label{sec:simplification-of-terms}

\subsection{The Terms $\mbnfour{j}(\beta_1, \beta_2)$ and $\mbnfive{j}(\beta_1, \beta_2)$ are Small}
First, the following \lcnamecref{lem:c4-c5-are-small} implies that the terms containing $C_4(\beta_1, \beta_2)$ and $C_5(\beta_1, \beta_2)$ are small compared to other terms and can therefore be neglected.

\begin{lemma}[$C_4(\beta_1, \beta_2)$ and $C_5(\beta_1, \beta_2)$ are small]\label{lem:c4-c5-are-small}
The following bounds hold, with quotients involving \(C_2\) interpreted by continuous extension at removable singularities:
\begin{align}
  \sup_{\beta_2 \in [0.9, 1)}
  \abs{C_4(0.9, \beta_2) / C_1(0.9, \beta_2)}
  &< 1.5 \times 10^{-4},
  \label{eq:c4-c5-small-1}
  \\
  \sup_{\beta_2 \in [0.9, 1)}
  \abs{C_5(0.9, \beta_2) / C_1(0.9, \beta_2)}
  &< 4 \times 10^{-3},
  \label{eq:c4-c5-small-2}
  \\
  \sup_{\beta_1 \in [0.9, 1)}
  \abs{C_4(\beta_1, 0.999) / C_2(\beta_1, 0.999)}
  &< 3 \times 10^{-5},
        \label{eq:c4-c5-small-5}
  \\
  \sup_{\beta_1 \in [0.9, 1)}
  \abs{C_5(\beta_1, 0.999) / C_2(\beta_1, 0.999)}
  &< 5 \times 10^{-4}.
  \label{eq:c4-c5-small-6}
\end{align}
\end{lemma}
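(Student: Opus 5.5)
The four bounds in \cref{lem:c4-c5-are-small} are statements about explicit rational functions of one variable on a half-open interval. I will prove each one by reducing it to a polynomial sign condition on a compact interval and certifying that condition exactly. Throughout I use the closed forms \eqref{eq:constants} from \cref{thm:limits}.

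\emph{The $\beta_1$-sweep, \eqref{eq:c4-c5-small-5} and \eqref{eq:c4-c5-small-6}.} I treat these first because they are the easiest. Put $\beta_2=0.999$ and $y=\beta_1\in[0.9,1)$, and write $Q(y):=y\beta_2^2-y\beta_2+y+\beta_2^2-2\beta_2$.
\begin{itemize}
\item $C_2$ carries the factor $(\beta_1-\beta_2)$, $C_4$ the factor $(\beta_1-\beta_2)^2$ and $C_5$ the factor $(\beta_2-\beta_1)$. The common factor therefore cancels, which is the continuous extension at $y=0.999$ that the statement refers to.
\item After cancellation,
\[
\frac{C_4}{C_2}=-\frac{(y-\beta_2)(1-y)(1-\beta_2)}{2(1+y)Q(y)},
\qquad
\frac{C_5}{C_2}=-\frac{\beta_2(2\beta_2-3y-1)(1-y)(1-\beta_2)(1-y\beta_2)}{(1+y)(1+\beta_2)Q(y)}.
\]
\item $Q$ is affine in $y$: $Q(y)=0.999001\,y-0.999999$. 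Its only root is at $y\approx1.000999>1$. Hence $Q<0$ on $[0.9,1]$ and $\abs{Q}\ge Q$-value at $y=1$ in absolute terms, i.e.\ $\abs{Q(y)}\ge 0.000998$.
\item Both ratios are then continuous on the closed interval $[0.9,1]$ and vanish at $y=1$. Each numerator carries the small factor $1-\beta_2=10^{-3}$.
\item To finish, I bound each ratio crudely by splitting $[0.9,1]$ into a few subintervals. On each, I use monotone upper bounds for the numerator factors and a lower bound for $\abs{Q}$. If that is not sharp enough, I will instead multiply out and check that $5\times10^{-4}\,(1+y)(1+\beta_2)\abs{Q}$ minus the numerator is positive, via a Bernstein expansion in $s=(y-0.9)/0.1\in[0,1]$.
\end{itemize}

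\emph{The $\beta_2$-sweep, \eqref{eq:c4-c5-small-1} and \eqref{eq:c4-c5-small-2}.} Here $\beta_1=0.9$ and $x=\beta_2\in[0.9,1)$.
\begin{itemize}
\item Both $C_4(0.9,x)$ and $C_5(0.9,x)$ are bounded on $[0.9,1]$ and vanish at $x=0.9$. By contrast, $C_1(0.9,x)$ contains the term $3/(2-2x)$, which diverges as $x\to1$.
\item I bring $C_1(0.9,x)$ over the common denominator $(1-x)(1+x)^2(1-0.9x)^2$, giving $C_1=N_1(x)/\bigl((1-x)(1+x)^2(1-0.9x)^2\bigr)$ with $N_1$ a polynomial with explicit rational coefficients.
\item The first key step is to show $N_1>0$ on $[0.9,1]$. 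This also justifies the positivity of $C_1$ claimed in the main text. I will certify it by substituting $x=0.9+0.1s$ and checking that all Bernstein coefficients of $N_1$ on $s\in[0,1]$ are positive. Failing that, I will use a Sturm sequence, or a lower bound $N_1(1)-\max\abs{N_1'}\cdot 0.1$ after subdividing.
\item Once $N_1>0$, the ratios become
\[
\frac{C_4}{C_1}=-\frac{(x-0.9)^2(1-x)(1+x)(1-0.9x)}{2\cdot1.9\,N_1(x)}
\]
and the analogous expression for $C_5$. Both are continuous on $[0.9,1]$ and vanish at both endpoints.
\item Each claimed bound is then equivalent to a polynomial inequality. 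For \eqref{eq:c4-c5-small-1} it reads $1.5\times10^{-4}\cdot 3.8\,N_1(x)-(x-0.9)^2(1-x)(1+x)(1-0.9x)\ge0$ on $[0.9,1]$, and \eqref{eq:c4-c5-small-2} gives a similar inequality. I will certify these the same way, by Bernstein coefficients in $s$, subdividing $[0,1]$ if some coefficients are negative.
\end{itemize}

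\emph{Main obstacle.} The hard part is the $C_1$ step: establishing a usable lower bound for $N_1$ and checking \eqref{eq:c4-c5-small-2}. $C_1$ is a signed sum of seven terms with large cancellations; for instance $-2/(\beta_1(1-\beta_1))\approx-22.2$ against $3(1+\beta_1)/(2(1-\beta_1)(1+\beta_2))$. Moreover, the constant $4\times10^{-3}$ may leave only a modest margin near the interior maximizer of $\abs{C_5/C_1}$. My first attempt will be exact rational arithmetic on $N_1$ and on the certificate polynomials. The fallback is finer subdivision of $[0.9,1]$, together with a derivative bound on each piece to rule out excursions between grid points.
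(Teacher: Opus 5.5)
Your strategy matches the paper's: reduce each bound to a polynomial sign condition and certify it exactly. For \eqref{eq:c4-c5-small-1}--\eqref{eq:c4-c5-small-2}, your certificate $1.5\times10^{-4}\cdot 3.8\,N_1(x)-(x-0.9)^2(1-x)(1+x)(1-0.9x)$ is, up to a positive factor, the numerator $P_1(t)$ of $\tfrac{3}{20000}C_1+C_4$, which the paper handles with Sturm's theorem. The paper needs no separate proof that $N_1>0$, because $C_4\le 0$ together with $\tfrac{3}{20000}C_1+C_4>0$ already forces $C_1>0$. For the $\beta_1$-sweep, the paper factors $(\text{bound})^2-(C_r/C_2)^2$ into quadratics, which is equivalent to your multiply-out fallback.

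There is, however, a concrete error that invalidates your argument for \eqref{eq:c4-c5-small-6} as written. Your displayed $C_5/C_2$ carries a spurious factor $(1-y\beta_2)$. Both $C_5$ and $C_2$ have $(1-\beta_1\beta_2)$ in the denominator, so it cancels, and the correct ratio is
\[
\frac{C_5}{C_2}=-\frac{\beta_2(2\beta_2-3y-1)(1-y)(1-\beta_2)}{(1+y)(1+\beta_2)\,Q(y)},
\]
in agreement with \eqref{eq:c5-c2-simplified}. Since $1-y\beta_2\le 0.1009$ on $[0.9,1]$, your expression understates the true ratio by at least a factor of ten, so certifying it below $5\times10^{-4}$ proves nothing.

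With the correct formula, the maximum of $\abs{C_5/C_2}$ is about $4.7\times10^{-4}$, attained near $y\approx 0.97$. That is only about $6\%$ below the bound. Bounding $1-y$ from above and $\abs{Q(y)}$ from below separately loses a relative error of order $h/(1-y)$ on a subinterval of width $h$, so ``a few'' subintervals will not close the gap. Go straight to the polynomial form instead. On $[0.9,1]$ you have $3y-0.998>0$ and $\abs{Q(y)}=0.999999-0.999001\,y$, so the bound is equivalent to
\[
0.9995\,(1+y)(0.999999-0.999001\,y)-0.999\,(3y-0.998)(1-y)>0 .
\]
This is a quadratic in $y$ with positive leading coefficient (about $1.9985$) and negative discriminant (about $-0.016$), hence positive everywhere.

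The rest of your plan is sound, including the $C_4/C_2$ formula. For \eqref{eq:c4-c5-small-5}, the ratio is decreasing on $[0.9,0.999]$ and negligible on $[0.999,1]$, so its maximum, about $2.58\times10^{-5}$, sits at $y=0.9$.
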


\begin{proof}
We first prove \eqref{eq:c4-c5-small-1} and \eqref{eq:c4-c5-small-2}.
Put
\[
\beta_2 = \frac{9}{10}+\frac{t}{10}, \qquad t\in[0,1).
\]
On this interval,
\[
C_4(0.9, \beta_2) \le 0,
\qquad
C_5(0.9, \beta_2) \le 0.
\]
Therefore
\[
\left|\frac{C_4(0.9,\beta_2)}{C_1(0.9,\beta_2)}\right|<\frac{3}{20000}
\]
follows from
\[
\frac{3}{20000}C_1(0.9,\beta_2)+C_4(0.9,\beta_2)>0,
\]
and
\[
\left|\frac{C_5(0.9,\beta_2)}{C_1(0.9,\beta_2)}\right|<\frac{1}{250}
\]
follows from
\[
\frac{1}{250}C_1(0.9,\beta_2)+C_5(0.9,\beta_2)>0.
\]
After substituting \(\beta_2 = 9/10+t/10\), direct simplification gives
\begin{align*}
\frac{3}{20000}C_1(0.9,\beta_2)+C_4(0.9,\beta_2)
&=
\frac{P_1(t)}
{-1140000(t-1)(t+19)^2(9t-19)^2},
\\
\frac{1}{250}C_1(0.9,\beta_2)+C_5(0.9,\beta_2)
&=
\frac{P_2(t)}
{-42750(t-1)(t+19)^2(9t-19)^2},
\end{align*}
where
\begin{align*}
P_1(t)
={}&
-26664498t^5-421710270t^4+1546038360t^3
  \\
&\qquad \qquad -1177371480t^2+3411450t+178896438,
\\
P_2(t)
={}&
4385502t^5-7335270t^4-324386640t^3
+978653520t^2-727613550t+178896438.
\end{align*}
The denominators above are positive for \(t\in[0,1)\). It remains to show that
\(P_1\) and \(P_2\) are positive on \([0,1)\).
Using Sturm's theorem, we can observe that neither \(P_1\) nor \(P_2\) has a zero in \((0,1)\). Since both are positive
at \(0\), they are positive throughout \([0,1]\), concluding the proof of \eqref{eq:c4-c5-small-1} and \eqref{eq:c4-c5-small-2}.

We now treat the quotients involving \(C_2\). Direct algebra gives
\begin{align}
\frac{C_4(\beta_1,\beta_2)}{C_2(\beta_1,\beta_2)}
&=
-\frac{(\beta_1-1)(\beta_1-\beta_2)(\beta_2-1)}
{2(\beta_1+1)\left(\beta_1\beta_2^2-\beta_1\beta_2+\beta_1+\beta_2^2-2\beta_2\right)},
\label{eq:c4-c2-simplified}
\\
\frac{C_5(\beta_1,\beta_2)}{C_2(\beta_1,\beta_2)}
&=
\frac{\beta_2(\beta_1-1)(\beta_2-1)(3\beta_1-2\beta_2+1)}
{(\beta_1+1)(\beta_2+1)\left(\beta_1\beta_2^2-\beta_1\beta_2+\beta_1+\beta_2^2-2\beta_2\right)}.
\label{eq:c5-c2-simplified}
\end{align}
Put
\[
\beta_1=\frac{9}{10}+\frac{t}{10},\qquad t\in[0,1).
\]
From \eqref{eq:c4-c2-simplified},
\begin{align*}
&\left(\frac{3}{100000}\right)^2
-
\left(\frac{C_4(\beta_1,0.999)}{C_2(\beta_1,0.999)}\right)^2
  \\
&\quad =
\frac{
-\left(47002997t^2-153416114t+107011917\right)
\left(52997003t^2-45583886t-8011917\right)
}
{10000000000(t+19)^2(999001t-1008981)^2}.
\end{align*}
The denominator is positive. The first quadratic factor is positive on \([0,1]\):
it is decreasing on \([0,1]\), and its value at \(t=1\) is positive. The second
quadratic factor is negative on \([0,1]\): it is convex and negative at both endpoints.
Thus the numerator is positive, giving
\[
\left|
\frac{C_4(\beta_1,0.999)}{C_2(\beta_1,0.999)}
\right|
<
\frac{3}{100000}
=
3\times 10^{-5}.
\]
Finally, using \eqref{eq:c5-c2-simplified},
\begin{align*}
&\left(\frac{1}{2000}\right)^2
-
\left(\frac{C_5(\beta_1,0.999)}{C_2(\beta_1,0.999)}\right)^2
  \\
&\quad =
\frac{
  \begin{aligned}
    &-\left(3996997001t^2-7914143962t+4316147361\right)\\
    &\qquad \qquad \qquad \times \left(7991002999t^2+63938063962t-72328067361\right)
\end{aligned}
}
{15984004000000(t+19)^2(999001t-1008981)^2}.
\end{align*}
The denominator is positive. The first quadratic factor is positive on \([0,1]\),
because its discriminant is negative and its leading coefficient is positive. The
second quadratic factor is negative on \([0,1]\), since it is increasing there and
its value at \(t=1\) is negative. Hence the numerator is positive, and
\[
\left|
\frac{C_5(\beta_1,0.999)}{C_2(\beta_1,0.999)}
\right|
<
\frac{1}{2000}
=
5\times 10^{-4}.
\]
\Cref{eq:c4-c5-small-5,eq:c4-c5-small-6} are proven.
\end{proof}

\subsection{The Term $\mbnthree{j}(\beta_1,\beta_2)$ is Neutral}\label{sec:term-c_3-neutral-generalization}

Recalling
\begin{equation*}
\Eletpi \noise[i j]{k} \noise[j]{k} = \frac{n}{2 ((n + 1) b - 1)} \partial_i \Sigma_{j j},
\end{equation*}
by \cref{lem:noise-exp-in-cov-matr},
we claim that
\begin{align*}
  \mbnthree{j}(\beta_1,\beta_2)
  &= \frac{1}{b} C_3(\beta_1, \beta_2)
  \frac{\sign g_j}{\abs{g_j}} \sum_i \sign g_i \, \partial_i \Sigma_{j j}\\
  &= 2 C_3(\beta_1, \beta_2) \frac{\sign g_j}{\abs{g_j}} \sum_i \sign g_i \, \Eletpi \noise[i j]{0} \noise[j]{0} + o_n(b^{-1})
\end{align*}
provides neither regularization nor anti-regularization, i.\,e. is neutral.

We start by rewriting
\begin{align*}
  \frac{\sign g_j}{\abs{g_j}} \sum_i \sign g_i \, \Eletpi \noise[i j]{0} \noise[j]{0}
  &= \frac{\sign g_j}{\abs{g_j}} \sum_i \sign g_i \, \Eletpi (\partial_{i j} \mathcal{L}_0 - \partial_{i j} \mathcal{L}) \noise[j]{0}\\
  &= \frac{\sign g_j}{\abs{g_j}} \sum_i \sign g_i \, \Eletpi \partial_{i j} \mathcal{L}_0 \noise[j]{0}.
\end{align*}
In the gradient-dominated (as opposed to noise-dominated) regime, the sign of a mini-batch gradient
component is typically the same as the sign of the full-batch gradient component: $\sign \partial_i \mathcal{L}_0 \approx \sign \partial_i \mathcal{L}$. Then
\begin{align*}
  \frac{\sign g_j}{\abs{g_j}} \sum_i \sign g_i \, \Eletpi \partial_{i j} \mathcal{L}_0 \noise[j]{0}
  &\approx \frac{1}{g_j} \sum_i \Eletpi \prn{\partial_j \abs{\partial_i \mathcal{L}_0}} \noise[j]{0}\\
  &= \Eletpi \frac{\noise[j]{0}}{g_j} \partial_j \sum_i \abs{\partial_i \mathcal{L}_0} = \Eletpi \frac{\noise[j]{0}}{g_j} \partial_j \norm{\nabla \mathcal{L}_0}_1.
\end{align*}
The factor $\noise[j]{0} / g_j$ (noise component relative to gradient component) can be equally likely positive or negative, so there is no preferred choice whether the 1-norm of the gradient is penalized or anti-penalized. Since our interest is the sign of (anti-)penalization, we can interpret this term as neutral for our purposes.

\section{Monotonicity Regions}

\begin{proposition}[Monotonicity of \(C_{\mathrm{total}}(0.9,\beta_2,\lambda)\)]\label{prop:ctotal-monotonicity-b2-full}
For \(\beta_2\in[0.9,1)\), define the auxiliary polynomials
\begin{align*}
P_1(\beta_2)
:=&\
39753\beta_2^6-175145\beta_2^5+233419\beta_2^4
+17703\beta_2^3
-294872\beta_2^2+239758\beta_2-60600,
\\
P_2(\beta_2)
:=&\
357777\beta_2^8-2344581\beta_2^7+5725125\beta_2^6
-5463783\beta_2^5
-1622132\beta_2^4
\\
&
+8722516\beta_2^3-8468652\beta_2^2+3777388\beta_2-683690,
\\
P_3(\beta_2)
:=&\
728523\beta_2^5-2389693\beta_2^4+1633232\beta_2^3
+2272548\beta_2^2
-3534988\beta_2+1289690,
\\
P_4(\beta_2)
:=&\
2185569\beta_2^6-9477825\beta_2^5+12213510\beta_2^4
+2832960\beta_2^3
\\
&
-19805820\beta_2^2+16545956\beta_2-4500960
\end{align*}
and rational functions
\begin{align*}
f(\beta_2)
:=
-\frac{(\beta_2+1)^3(9\beta_2-10)^3}{P_1(\beta_2)},\\
\kappa(\beta_2)
:=
-\frac{(\beta_2+1)^4(9\beta_2-10)^4}{P_2(\beta_2)}.
\end{align*}
Let \(\rho\in(0.9,1)\) be the unique root of
$P_3(\beta_2)=0$.
(Numerically,
$\rho\approx 0.9506620267$.)
Set
\[
\lambda_{\min}:= f(\rho) \approx 0.4945366333,
\qquad
\lambda_{\max}:=f(0.9)=\frac{6859}{13497}\approx 0.5081870045,
\]
and
\[
\kappa_{\min}:=\kappa(0.9)=\frac{130321}{271013}\approx 0.4808662315.
\]

Then the monotonicity of \(C_{\mathrm{total}}(0.9,\beta_2,\lambda)\), as a function of
\(\beta_2\in[0.9,1)\), is as follows.
\begin{assertions}
\item If \(\lambda\le \lambda_{\min}\), then
\(C_{\mathrm{total}}(0.9,\beta_2,\lambda)\) is strictly decreasing on \([0.9,1)\).

\item If \(\lambda_{\min}<\lambda<1/2\), then there are unique points
\[
u_\lambda\in(0.9,\rho),
\qquad
v_\lambda\in(\rho,1)
\]
such that
\[
f(u_\lambda)=f(v_\lambda)=\lambda.
\]
Moreover,
\begin{gather*}
C_{\mathrm{total}}(0.9,\beta_2,\lambda)
\text{ is decreasing on } [0.9,u_\lambda),\\
C_{\mathrm{total}}(0.9,\beta_2,\lambda)
\text{ is increasing on } (u_\lambda,v_\lambda),\\
C_{\mathrm{total}}(0.9,\beta_2,\lambda)
\text{ is decreasing on } (v_\lambda,1).
\end{gather*}

\item If \(1/2\le \lambda<\lambda_{\max}\), then there is a unique point
\[
u_\lambda\in(0.9,\rho)
\]
such that
\[
f(u_\lambda)=\lambda.
\]
Moreover,
\begin{gather*}
C_{\mathrm{total}}(0.9,\beta_2,\lambda)
\text{ is decreasing on } [0.9,u_\lambda),
\\
C_{\mathrm{total}}(0.9,\beta_2,\lambda)
\text{ is increasing on } (u_\lambda,1).
\end{gather*}

\item If \(\lambda\ge\lambda_{\max}\), then
\(C_{\mathrm{total}}(0.9,\beta_2,\lambda)\) is increasing on \([0.9,1)\).
\end{assertions}
\end{proposition}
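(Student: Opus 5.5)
Since $C_{\text{total}}(0.9,\beta_2,\lambda)$ is affine in $\lambda$, I would work with its $\beta_2$-derivative
\[
\partial_{\beta_2} C_{\text{total}}(0.9,\beta_2,\lambda) = -\frac{1}{(1-\beta_2)^2} + \lambda\, \partial_{\beta_2}\bigl(C_1+C_2\bigr)(0.9,\beta_2).
\]
The first step is to compute $h(\beta_2):=\partial_{\beta_2}(C_1+C_2)(0.9,\beta_2)$ explicitly from \eqref{eq:constants}. After putting everything over a common denominator, I expect (as the factors in $f$ suggest) that $h$ is positive on $[0.9,1)$ and that
\[
\frac{1}{(1-\beta_2)^2\,h(\beta_2)} = f(\beta_2).
\]
Here the factors $(9\beta_2-10)^3=-(10-9\beta_2)^3$ come from $1-\beta_1\beta_2$ at $\beta_1=0.9$, and $(1+\beta_2)^3$ comes from the $(1+\beta_2)^{-2}$ terms. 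The positivity of $h$, equivalently $f>0$, i.e.\ $P_1<0$ on $[0.9,1)$, would be certified by Sturm's theorem together with one evaluation, exactly as in \cref{lem:c4-c5-are-small}.

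With this, the sign of the derivative equals $\operatorname{sign}(\lambda - f(\beta_2))$. So $C_{\text{total}}$ increases where $f<\lambda$ and decreases where $f>\lambda$, and the whole statement reduces to describing the shape of $f$ on $[0.9,1)$. The four cases then follow from four facts:
\begin{itemize}
\item $f(0.9)=6859/13497=\lambda_{\max}$, by direct evaluation;
\item $f(\beta_2)\to 1/2$ as $\beta_2\to1^-$, from the ratio of leading behaviours at $\beta_2=1$ (evaluate $P_1(1)$ and $(2)^3(-1)^3$);
\item $f$ is strictly decreasing on $(0.9,\rho)$ and strictly increasing on $(\rho,1)$;
\item $\lambda_{\min}=f(\rho)$ is therefore the global minimum.
\end{itemize}
For $\lambda\le\lambda_{\min}$ we have $f\ge\lambda$ everywhere, with equality at most at one point, which gives strict decrease. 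For $\lambda\ge\lambda_{\max}$ we have $f\le\lambda$, giving increase. For $\lambda_{\min}<\lambda<1/2$, each monotone branch gives exactly one crossing, $u_\lambda\in(0.9,\rho)$ and $v_\lambda\in(\rho,1)$. For $1/2\le\lambda<\lambda_{\max}$, the right branch stays below $1/2\le\lambda$, so only $u_\lambda$ exists.

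The key step is the unimodality of $f$. Differentiating gives $f'=N/P_1^2$ for a polynomial $N$. I expect $N$ to factor as $(\beta_2+1)^2(9\beta_2-10)^2$ times a constant multiple of $P_3$, which is why $P_3$ determines $\rho$. I would verify this factorization symbolically. Then I would show, via Sturm's sequence on $[0.9,1]$, that $P_3$ has exactly one root $\rho$ there. Checking the signs of $P_3$ at $0.9$ and $1$ fixes the direction of the sign change, making $f$ decrease and then increase.

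The polynomial $P_2$ and $\kappa$ presumably enter through the second derivative or an auxiliary convexity check. If the direct factorization of $f'$ fails to isolate $P_3$ cleanly, I would fall back on proving convexity of $1/f$ using $P_2$'s sign, with $\kappa_{\min}=\kappa(0.9)$ as the relevant bound. I would use this only if needed.

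The main obstacle is the exact symbolic algebra: producing $h$, confirming the identity with $f$, and establishing the root count of $P_3$ and the sign of $P_1$ rigorously. I would do all of this with exact rational arithmetic and Sturm sequences rather than numerics, and use numerics only to report $\rho\approx0.95066$ and $\lambda_{\min}\approx0.49454$.
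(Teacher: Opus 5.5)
Your proposal is correct and follows the paper's proof essentially step for step. You write $\partial_{\beta_2} C_{\text{total}}(0.9,\beta_2,\lambda) = S'(\beta_2)\,(\lambda - f(\beta_2))$ with $S = C_1(0.9,\cdot)+C_2(0.9,\cdot)$, $S'>0$ and $f = 1/((1-\beta_2)^2 S')$. You then read off all four cases from the unimodality of $f$ (governed by the unique root $\rho$ of $P_3$), together with $f(0.9)=\lambda_{\max}$ and $f(1^-)=1/2$; as you suspected, $P_2$ and $\kappa$ are not needed.

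There are two small bookkeeping slips to fix when you carry it out. First, $-(\beta_2+1)^3(9\beta_2-10)^3>0$ on $[0.9,1)$, so positivity of $f$ and $S'$ corresponds to $P_1>0$, not $P_1<0$ (e.g.\ $P_1(1)=16$). Second, the numerator of $f'$ carries an extra non-constant factor: $f' = 2(\beta_2-1)(\beta_2+1)^2(9\beta_2-10)^2 P_3/P_1^2$. This is harmless because $\beta_2-1<0$ there, so $P_3(0.9)>0>P_3(1)$ indeed gives decrease-then-increase.
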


\begin{proof}
Define
\[
S(\beta_2)
:=
C_1(0.9,\beta_2)+C_2(0.9,\beta_2).
\]
Then
\[
C_{\mathrm{total}}(0.9,\beta_2,\lambda)
=
9-\frac{\beta_2}{1-\beta_2}
+\lambda S(\beta_2).
\]
A direct simplification and differentiation gives
\begin{align*}
S(\beta_2)
&=
-\frac{
3672\beta_2^5-8905\beta_2^4+2677\beta_2^3
+8282\beta_2^2-7428\beta_2+1710
}
{
(\beta_2-1)(\beta_2+1)^2(9\beta_2-10)^2
  },\\
S'(\beta_2)
&=
-\frac{P_1(\beta_2)}
{(\beta_2-1)^2(\beta_2+1)^3(9\beta_2-10)^3}.
\end{align*}

We first record the sign information needed below. Using Sturm's theorem,
we can see that
\(P_1\) has no root in \((0.9,1)\) and is
positive there, while \(P_2\) has no root in \((0.9,1)\) and is negative there.
Moreover, \(P_3\) has exactly one root in \((0.9,1)\), denoted by \(\rho\), and
\(P_4\) has no root in \((0.9,1)\) and is negative there.

Since
\[
(\beta_2-1)^2(\beta_2+1)^3(9\beta_2-10)^3<0
\]
on \([0.9,1)\), and since \(P_1(\beta_2)>0\), we have
\[
S'(\beta_2)>0
\qquad
\text{for } \beta_2\in[0.9,1).
\]

Now
\begin{align*}
\partial_{\beta_2} C_{\mathrm{total}}(0.9,\beta_2,\lambda)
&=
-\frac{1}{(1-\beta_2)^2}
+\lambda S'(\beta_2)
\\
&=
S'(\beta_2)
\left(
\lambda-\frac{1}{(1-\beta_2)^2S'(\beta_2)}
\right)
\\
&=
S'(\beta_2)\left(\lambda-f(\beta_2)\right).
\end{align*}
Because \(S'(\beta_2)>0\), the sign of
\[
\partial_{\beta_2} C_{\mathrm{total}}(0.9,\beta_2,\lambda)
\]
is the sign of
\[
\lambda-f(\beta_2).
\]

We next analyze \(f\). Direct differentiation gives
\[
f'(\beta_2)
=
\frac{
2(\beta_2-1)(\beta_2+1)^2(9\beta_2-10)^2P_3(\beta_2)
}
{
P_1(\beta_2)^2
}.
\]
Since \(P_3\) has a unique root \(\rho\in(0.9,1)\), with
\[
P_3(\beta_2)>0 \text{ on } [0.9,\rho),
\qquad
P_3(\beta_2)<0 \text{ on } (\rho,1),
\]
and since \(\beta_2-1<0\), we get
\[
f'(\beta_2)<0 \text{ on } [0.9,\rho),
\qquad
f'(\beta_2)>0 \text{ on } (\rho,1).
\]
Therefore \(f\) decreases on \([0.9,\rho]\) and increases on \([\rho,1)\). Furthermore,
\[
f(0.9)=\frac{6859}{13497},
\qquad
\lim_{\beta_2\to1^-}f(\beta_2)=\frac12.
\]
Thus \(f(\rho)=\lambda_{\min}\) is the minimum of \(f\), and the largest value of \(f\)
on \([0.9,1)\) is \(f(0.9)=\lambda_{\max}\).

The monotonicity classification follows from the sign rule
\[
\operatorname{sign}
\partial_{\beta_2} C_{\mathrm{total}}(0.9,\beta_2,\lambda)
=
\operatorname{sign}(\lambda-f(\beta_2)),
\]
together with the fact that \(f\) decreases from \(\lambda_{\max}\) to
\(\lambda_{\min}\), then increases from \(\lambda_{\min}\) to \(1/2\).
\end{proof}

\begin{proposition}[Monotonicity of \(C_{\mathrm{total}}(\beta_1,0.999,\lambda)\)]
\label{prop:ctotal-monotonicity-b1-full}
For \(\beta_1\in[0.9,1)\), define a polynomial
\begin{align*}
P_1(\beta_1)
:=&\
2001994001001\beta_1^3
-6011966022994\beta_1^2
+6017956024997\beta_1
-2007984005000
\end{align*}
and rational function
\[
f(\beta_1)
:=
\frac{1999(999\beta_1-1000)^3}{P_1(\beta_1)}.
\]
Set
\begin{align*}
\lambda_{\min}^{(1)}
&:=
f(0.9)
=
\frac{2053460214271}{2062434398111}
  \approx 0.9956487422,\\
\lambda_{\max}^{(1)}
&:=
\lim_{\beta_1\to1^-}f(\beta_1)
=
\frac{1999}{1996}
\approx 1.001503006.
\end{align*}
Then the monotonicity of
$
C_{\mathrm{total}}(\beta_1,0.999,\lambda)
$
as a function of \(\beta_1\in[0.9,1)\) is as follows.

\begin{assertions}
\item If \(\lambda\le \lambda_{\min}^{(1)}\), then
  $C_{\mathrm{total}}(\beta_1,0.999,\lambda)$ is increasing on \([0.9,1)\).

\item If \(\lambda_{\min}^{(1)}<\lambda<\lambda_{\max}^{(1)}\), then there is a unique point
\[
u_\lambda\in(0.9,1)
\]
such that
\[
f(u_\lambda)=\lambda.
\]
Moreover,
\[
C_{\mathrm{total}}(\beta_1,0.999,\lambda)
\text{ is decreasing on }[0.9,u_\lambda),
\]
and
\[
C_{\mathrm{total}}(\beta_1,0.999,\lambda)
\text{ is increasing on }(u_\lambda,1).
\]

\item If \(\lambda\ge \lambda_{\max}^{(1)}\), then
\(C_{\mathrm{total}}(\beta_1,0.999,\lambda)\) is strictly decreasing on \([0.9,1)\).
\end{assertions}
\end{proposition}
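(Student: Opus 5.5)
Mirroring the proof of \cref{prop:ctotal-monotonicity-b2-full}, we reduce monotonicity to a sign comparison between $\lambda$ and a single rational function of $\beta_1$. With $\beta_2 = 0.999$ fixed, write
\[
C_{\mathrm{total}}(\beta_1,0.999,\lambda) = \frac{\beta_1}{1-\beta_1} - 999 + \lambda S(\beta_1),
\qquad
S(\beta_1) := C_1(\beta_1,0.999) + C_2(\beta_1,0.999).
\]
Using the explicit constants \eqref{eq:constants}, we put $S$ over the common denominator. Its factors come from $1-\beta_1$, $\beta_1$ and $1-0.999\beta_1 = -(999\beta_1-1000)/1000$. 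We then differentiate:
\[
\partial_{\beta_1} C_{\mathrm{total}} = \frac{1}{(1-\beta_1)^2} + \lambda S'(\beta_1).
\]
The target formula for $f$ suggests that, after cancellation,
\[
S'(\beta_1) = -\frac{P_1(\beta_1)}{1999\,(1-\beta_1)^2(999\beta_1-1000)^3}
\]
up to a positive constant. We verify this identity symbolically. In particular, we check that the $\beta_1$ and $\beta_1^2$ factors appearing in $C_1$ cancel, which the displayed $f$ indicates they do.

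\textbf{Sign of $S'$.} Next we show that $P_1$ has no root in $[0.9,1]$ and determine its sign there. The value at $1$ is
\[
P_1(1) = 2001994001001 - 6011966022994 + 6017956024997 - 2007984005000 = 3\,999\,004 > 0,
\]
and Sturm's theorem (or a cubic discriminant and derivative analysis) excludes roots on the interval. Since $999\beta_1-1000<0$ there, we get $S'(\beta_1) > 0$ on $[0.9,1)$. Hence
\[
\operatorname{sign} \partial_{\beta_1} C_{\mathrm{total}} = \operatorname{sign}\!\left(\tfrac{1}{(1-\beta_1)^2 S'(\beta_1)} + \lambda\right).
\]
This is not yet of the required form, so we must double-check the sign bookkeeping at this point. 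The statement says large $\lambda$ gives \emph{decreasing}, which forces $S' < 0$. So the correct conclusion should be $S'(\beta_1) < 0$, and we carry that sign through. The sign rule then becomes
\[
\operatorname{sign} \partial_{\beta_1} C_{\mathrm{total}} = \operatorname{sign}\bigl(f(\beta_1) - \lambda\bigr),
\qquad
f = -\frac{1}{(1-\beta_1)^2 S'(\beta_1)} = \frac{1999(999\beta_1-1000)^3}{P_1(\beta_1)}.
\]
Consistency requires $P_1 < 0$ on the interval, which we will confirm numerically at a few points before running Sturm. The sign at $\beta_1 = 1$ then must be rechecked too, since the stated limit $f(1^-) = 1999/1996 > 0$ and $(-1)^3 < 0$ force $P_1(1) = -1996$.

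\textbf{Monotonicity of $f$.} We compute $f'$. Its numerator is
\[
1999\,(999\beta_1-1000)^2\bigl(2997\,P_1 - (999\beta_1-1000)P_1'\bigr),
\]
a cubic times a square. We show via Sturm's theorem that this cubic has constant sign on $(0.9,1)$, with the sign making $f$ strictly increasing. This yields $f(0.9) = \lambda^{(1)}_{\min}$ and $\sup f = \lim_{\beta_1\to1^-} f = \lambda^{(1)}_{\max}$, both of which are evaluated by exact rational arithmetic.

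\textbf{Case split.} The three cases follow directly. If $\lambda \le \lambda^{(1)}_{\min}$, then $f - \lambda > 0$ except possibly at the endpoint, so $C_{\mathrm{total}}$ is increasing. If $\lambda \ge \lambda^{(1)}_{\max}$, then $f - \lambda < 0$ throughout, so it is strictly decreasing. In the intermediate range, the intermediate value theorem together with strict monotonicity of $f$ gives a unique $u_\lambda$ with $f(u_\lambda) = \lambda$. There $f - \lambda$ changes sign from negative to positive, so $C_{\mathrm{total}}$ is decreasing, then increasing.

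\textbf{Main obstacle.} The main obstacle is the exact symbolic simplification of $S'$ and of $f'$ to the claimed $P_1$ and cubic. Once those polynomials are in hand, the root-exclusion is routine via Sturm sequences, as in the earlier propositions.
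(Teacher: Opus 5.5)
\textbf{Gap: the sign of $S'$ is never actually established.} Your outline matches the paper's proof step for step: $C_{\mathrm{total}}=\beta_1/(1-\beta_1)-999+\lambda S$, the factorization $\partial_{\beta_1}C_{\mathrm{total}}=S'(\beta_1)\,(\lambda-f(\beta_1))$, strict monotonicity of $f$, and the three-case read-off. The problem is the one computation that fixes the direction of all three cases.

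Your value $P_1(1)=3\,999\,004$ is wrong. The positive coefficients sum to $8019950025998$ and the negative ones to $8019950027994$, so $P_1(1)=-1996$. From the wrong value you derive $S'>0$. You then flip to $S'<0$ because the statement ``forces'' it, and postpone the actual check. That is circular.

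As written, nothing in the proposal proves $P_1<0$ on $[0.9,1]$. Without that, you cannot tell $\operatorname{sign}(f-\lambda)$ from $\operatorname{sign}(\lambda-f)$, so you cannot say whether small $\lambda$ gives an increasing or a decreasing $C_{\mathrm{total}}$.

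The fix is short. Expanding at $1$ gives $P_1(1+s)=-1996+5982012\,s-5984019991\,s^2+2001994001001\,s^3$. Every term is $\le 0$ for $s\le 0$, and the constant term is strictly negative, so $P_1<0$ on $[0.9,1]$. The paper instead substitutes $\beta_1=9/10+t/10$, shows that $1000P_1$ is increasing in $t\in[0,1]$, and checks that it is negative at $t=1$. Together with the negative denominator $1999(\beta_1-1)^2(999\beta_1-1000)^3$, this gives $S'<0$. Also, the identity for $S'$ must hold exactly, not ``up to a positive constant'', because $\lambda_{\min}^{(1)}$ and $\lambda_{\max}^{(1)}$ depend on the normalization of $f$.

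\textbf{Minor: the numerator of $f'$.} The bracket $2997P_1-(999\beta_1-1000)P_1'$ is not a cubic. Its $\beta_1^3$ terms cancel, and it factors exactly as $3998(\beta_1-1)(6990003\beta_1-6994000)$. It vanishes at $\beta_1=1$, since $2997\cdot 1996=5982012=P_1'(1)$. A Sturm count on an interval ending at $1$ would therefore register a root at that endpoint, so it is cleaner to factor, as the paper does. Both linear factors are negative on $[0.9,1)$, so $f'>0$ there. The rest of your case analysis then goes through exactly as in the paper.
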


\begin{proof}
Write
\[
S(\beta_1)
:=
C_1(\beta_1,0.999)+C_2(\beta_1,0.999).
\]
Then
\[
C_{\mathrm{total}}(\beta_1,0.999,\lambda)
=
\frac{\beta_1}{1-\beta_1}
-\frac{0.999}{1-0.999}
+\lambda S(\beta_1) = \frac{\beta_1}{1-\beta_1}
-999
+\lambda S(\beta_1).
\]

A direct simplification and differentiation gives
\begin{align*}
S(\beta_1)
&=
\frac{
\begin{aligned}
&7973042963014998\beta_1^3
-23931084945018997\beta_1^2\\
&\qquad \qquad \qquad
+23943040997990003\beta_1
-7984999011996000
\end{aligned}
}
{
3996001(\beta_1-1)(999\beta_1-1000)^2
},
\\
S'(\beta_1)
&=
-
\frac{P_1(\beta_1)}
{1999(\beta_1-1)^2(999\beta_1-1000)^3}.
\end{align*}

We first determine the sign of \(S'(\beta_1)\) on \([0.9,1)\). Since
the denominator
\[
1999(\beta_1-1)^2(999\beta_1-1000)^3
\]
is negative for \(\beta_1\in[0.9,1)\),
\[
\operatorname{sign} S'(\beta_1)
=
\operatorname{sign} P_1(\beta_1).
\]
We claim that
\begin{equation}\label{eq:RxtLYY}
P_1(\beta_1)<0
\qquad
\text{for all }\beta_1\in[0.9,1].
\end{equation}
To see this, put
\[
\beta_1=\frac{9}{10}+\frac{t}{10},
\qquad
t\in[0,1].
\]
Then
\begin{align*}
&1000\,P_1\left(\frac{9}{10}+\frac{t}{10}\right)
  \\
&\quad =
2001994001001t^3
-6065822202913t^2
+6126260604023t
-2062434398111.
\end{align*}
Differentiating, one obtains that the right-hand side increases in $t \in [0, 1]$. Since the value at $t = 1$ is negative, \eqref{eq:RxtLYY} is true.
We have proven that
\[
S'(\beta_1)<0
\qquad
\text{for all }\beta_1\in[0.9,1).
\]

Now differentiate \(C_{\mathrm{total}}\):
\[
\partial_{\beta_1}C_{\mathrm{total}}(\beta_1,0.999,\lambda)
=
\frac{1}{(1-\beta_1)^2}
+\lambda S'(\beta_1)
=
S'(\beta_1)
\prn[\big]{
\lambda
- f(\beta_1)
},
\]
where
\[
f(\beta_1)
:=
-\frac{1}{(1-\beta_1)^2S'(\beta_1)} = \frac{1999(999\beta_1-1000)^3}{P_1(\beta_1)}.
\]
Since \(S'(\beta_1)<0\), this implies
\[
\operatorname{sign}
\partial_{\beta_1}C_{\mathrm{total}}(\beta_1,0.999,\lambda)
=
\operatorname{sign}\bigl(f(\beta_1)-\lambda\bigr).
\]

It remains to analyze \(f\). Direct differentiation gives
\[
f'(\beta_1)
=
\frac{
7992002(\beta_1-1)(999\beta_1-1000)^2(6990003\beta_1-6994000)
}
{
P_1(\beta_1)^2
}.
\]
On \([0.9,1)\), we have
\[
\beta_1-1<0,
\qquad
(999\beta_1-1000)^2>0,
\qquad
6990003\beta_1-6994000<0.
\]
Since \(P_1(\beta_1)^2>0\), it follows that
\[
f'(\beta_1)>0
\qquad
\text{for all }\beta_1\in[0.9,1).
\]
Thus \(f\) is strictly increasing on \([0.9,1)\). Moreover,
\[
f(0.9)
=
\frac{2053460214271}{2062434398111}
=
\lambda_{\min}^{(1)},
\]
and
\[
\lim_{\beta_1\to1^-}f(\beta_1)
=
\frac{1999}{1996}
=
\lambda_{\max}^{(1)}.
\]

We now translate this into monotonicity of
\(C_{\mathrm{total}}(\beta_1,0.999,\lambda)\). The sign rule is
\[
\operatorname{sign}
\partial_{\beta_1}C_{\mathrm{total}}(\beta_1,0.999,\lambda)
=
\operatorname{sign}\bigl(f(\beta_1)-\lambda\bigr).
\]

If \(\lambda\le \lambda_{\min}^{(1)}\), then
\[
f(\beta_1)-\lambda\ge0
\qquad
\text{for all }\beta_1\in[0.9,1),
\]
and the inequality is strict for \(\beta_1>0.9\). Hence
$
C_{\mathrm{total}}(\beta_1,0.999,\lambda)
$
is increasing on \([0.9,1)\), and strictly increasing on \((0.9,1)\).

If
\[
\lambda_{\min}^{(1)}<\lambda<\lambda_{\max}^{(1)},
\]
then, because \(f\) is continuous and strictly increasing, there is a unique
\[
u_\lambda\in(0.9,1)
\]
such that
\[
f(u_\lambda)=\lambda.
\]
For \(\beta_1<u_\lambda\), we have \(f(\beta_1)<\lambda\), so
\[
\partial_{\beta_1}C_{\mathrm{total}}(\beta_1,0.999,\lambda)<0.
\]
For \(\beta_1>u_\lambda\), we have \(f(\beta_1)>\lambda\), so
\[
\partial_{\beta_1}C_{\mathrm{total}}(\beta_1,0.999,\lambda)>0.
\]
Therefore
$
C_{\mathrm{total}}(\beta_1,0.999,\lambda)
$
is decreasing on \([0.9,u_\lambda)\) and increasing on \((u_\lambda,1)\).

Finally, if \(\lambda\ge \lambda_{\max}^{(1)}\), then
\[
f(\beta_1)-\lambda<0
\qquad
\text{for all }\beta_1\in[0.9,1),
\]
because \(f(\beta_1)<\lambda_{\max}^{(1)}\) for every \(\beta_1<1\). Hence
\[
\partial_{\beta_1}C_{\mathrm{total}}(\beta_1,0.999,\lambda)<0
\]
throughout \([0.9,1)\), and
$
C_{\mathrm{total}}(\beta_1,0.999,\lambda)
$
is strictly decreasing on \([0.9,1)\). This proves the classification.
\end{proof}


\end{document}